\documentclass[preprint,12pt]{elsarticle}

\usepackage[utf8]{inputenc}
\usepackage{amsmath,amssymb,amsthm}
\usepackage{mathtools}
\usepackage{booktabs,array,makecell,tabularx}
\usepackage{xcolor}
\usepackage{graphicx}
\usepackage{enumitem}
\usepackage[ruled,linesnumbered]{algorithm2e}
\usepackage{tcolorbox}
\tcbuselibrary{breakable,skins}
\usepackage[protrusion=true,expansion=false]{microtype}
\graphicspath{{./}}
\newcommand{\R}{\mathbb{R}}

\newcommand{\rank}{\operatorname{rank}}
\newcommand{\rowsp}{\operatorname{rowspace}}

\newcommand{\dmodel}{d_{\mathrm{model}}}
\newcommand{\dff}{d_{\mathrm{ff}}}
\newcommand{\dk}{d_k}

\newcommand{\BERTb}{\textsc{Bert}\textsubscript{Base}}
\newcommand{\Ma}{M_a}

\newcommand{\LN}{\operatorname{LN}}
\newcommand{\MHA}{\operatorname{MHA}}
\newcommand{\softmax}{\operatorname{softmax}}
\newcommand{\arank}{\operatorname{arank}}
\newcommand{\Gres}{\Gamma_{\mathrm{res}}}

\theoremstyle{plain}
\newtheorem{theorem}{Theorem}[section]
\newtheorem{proposition}[theorem]{Proposition}
\newtheorem{corollary}[theorem]{Corollary}

\theoremstyle{definition}
\newtheorem{definition}[theorem]{Definition}
\theoremstyle{remark}
\newtheorem{remark}[theorem]{Remark}
\newtheorem{conjecture}[theorem]{Conjecture}

\tcbset{
  keybox/.style={colback=blue!3,colframe=blue!35!black,
    fonttitle=\bfseries\small,breakable,arc=2pt,
    left=5pt,right=5pt,top=3pt,bottom=3pt},
}

\begin{document}

\begin{frontmatter}

\title{Rank, Head-Channel Non-Identifiability, and Symmetry Breaking:\\
A Precise Analysis of Representational Collapse in Transformers}

\author[lti]{Giansalvo Cirrincione}
\ead{exin@u-picardie.fr}
\address[lti]{Laboratoire LTI, Université de Picardie Jules Verne,
  Chemin du Thil, 80025 Amiens, France}

\begin{abstract}
A widely cited result by Dong et al.\ (2021) showed that Transformer
architectures built from self-attention alone, without skip connections
or feed-forward layers, suffer from rapid rank collapse: all token
representations converge to a single direction, destroying the
model's expressive capacity.
The proposed remedy was the MLP sub-layer.
It is shown in this paper that this picture, while correct in the specific
regime studied by Dong, is incomplete in ways that matter for
architectural understanding and design.

Three results are established.
First, layer normalisation is precisely \emph{affine-rank-neutral}: it
preserves the affine rank of the token representation set exactly,
and preserves matrix rank in the zero-bias case.
It neither contributes to nor mitigates rank collapse.
The widespread claim that layer normalisation ``plays no role'' is
imprecise; the correct statement is sharper.
Second, residual connections generically obstruct rank collapse in
real Transformer architectures such as \BERTb, without any
contribution from the MLP.
The MLP's irreplaceable function is a different one: generating
feature directions that lie outside the linear span of the original
token embeddings, which no stack of attention layers can produce.
Third, a phenomenon distinct from rank collapse is identified and
characterised: \emph{head-channel non-identifiability}.
When multi-head attention aggregates the outputs of individual heads
by summation through the output projection matrix, the per-head
contributions become irrecoverable from the mixture.
No downstream component, including the MLP, can undo this loss of
identity, because the MLP receives only the already-mixed signal.

These three results lead to a design corollary: the standard choice
of feed-forward width equal to four times the model dimension is an
engineering heuristic whose justification is neither rank preservation
nor expressiveness in general.
The correct width depends on the non-linear complexity of the task and
on the degree of head specialisation, which determines how much rank
contraction the multi-head output undergoes before reaching the MLP.

A constructive partial remedy for head-channel non-identifiability is proposed in
the form of a position-gated output projection, and a verifiable
empirical prediction is stated.
Finally, a unifying algebraic framework is presented: all four
collapse phenomena identified in the recent literature --- rank
collapse in depth, rank collapse in width, head-channel non-identifiability, and
entropy collapse --- correspond to distinct symmetries of the
Transformer's forward pass that the standard architecture fails to
break.
\end{abstract}

\begin{keyword}
attention mechanism \sep rank collapse \sep head-channel non-identifiability \sep
layer normalisation \sep residual connections \sep MLP \sep Transformer
\end{keyword}

\end{frontmatter}

\section{Introduction}
\label{sec:intro}

In a landmark result, \citet{dong2021} showed that iterating the
attention operation without any other mechanism causes the hidden
states to collapse to a rank-1 matrix doubly exponentially in depth.
The paper frames the role of the MLP as the antagonist in a
``tug of war'': attention reduces rank, MLP restores it.

This picture captures something real.
It also leaves three important questions unanswered, each with
architectural consequences.

\paragraph{Three gaps}
\textbf{(i)} What does layer normalisation actually do to rank?
Dong says it ``plays no role.''
This is not quite right: LN is not passive, it is \emph{exactly neutral}
--- it preserves rank precisely, neither helping nor hindering.
The distinction matters when using rank to measure representational
capacity across layers.

\textbf{(ii)} Does the MLP need to generically obstruct rank collapse in real
Transformers, which have residual connections?
The residual update $X^{(l+1)} = X^{(l)} + \Delta^{(l)}$ can only
increase rank in the generic case.
Residual connections generically obstruct collapse.
The MLP's necessary function is therefore something else entirely:
generating non-linear features that the bilinear attention mechanism
cannot reach.

\textbf{(iii)} Is rank collapse the only threat to representational
capacity?
An identification is made a second, distinct phenomenon: \emph{head-channel non-identifiability}.
After multi-head attention computes $H$ specialised outputs and sums
them through the output projection, their individual contributions can
no longer be canonically attributed to a specific functional head.
The MLP operates on this already-mixed representation and cannot
recover the per-head decomposition.
This is not a variant of rank collapse --- it is a different problem,
with a different cause and a different remedy.

\paragraph{Contributions}
\begin{enumerate}[label=(\roman*),leftmargin=2.2em,itemsep=1pt]
\item \textbf{Rank-Neutrality Theorem} (\S\ref{sec:ln}):
  $\arank(\LN(X)) = \arank(X)$ under non-degeneracy hypotheses; matrix
  rank is preserved in the zero-bias case.
\item \textbf{Residual rank stability} (\S\ref{sec:residual}):
  rank collapse does not occur in BERT-like architectures with
  residual connections; the MLP is sufficient but not necessary for
  this purpose.
\item \textbf{Head-Channel Non-Identifiability Theorem} (\S\ref{sec:channel}):
  the per-head contributions to multi-head attention output are
  irrecoverable from their sum; the MLP does not remedy this.
\item \textbf{Design corollary} (\S\ref{sec:rankcontraction},
  \ref{app:width-bounds}):
  $\dff = 4\dmodel$ is an engineering heuristic; the right quantity
  is the non-linear complexity of the task.
\item \textbf{PG-OP} (\S\ref{sec:pgop}):
  a position-gated output projection that partially addresses
  head-channel non-identifiability at less than 1.6\% of $W_O$'s
  parameter count, with a verifiable empirical prediction.
\item \textbf{Recovery Ambiguity Theorem}
  (\S\ref{subsec:infocost}): the exact recovery ambiguity dimension of
  head-channel non-identifiability is $n(H-1)\dk$ degrees of freedom
  per layer.
  For \BERTb: $360{,}448$ per sequence (or $704$ per token) per layer.
\item \textbf{Rank Contraction Theorem} (\S\ref{sec:rankcontraction}):
  head specialisation causes rank contraction in $\MHA(X)$;
  the MLP must expand exactly the directions the heads contracted.
  This gives the first principled lower bound on $\dff$.
\item \textbf{Symmetry-Breaking Framework (partially proved)}
  (\S\ref{sec:symmetry}): all four collapse phenomena are orbits of
  the invariance group $\mathcal{G}$ of the Transformer.
  The design question ``what components are needed?'' reduces to
  ``what symmetries must be broken?''
\end{enumerate}

\paragraph{Relationship with Dong et al}
The results do not contradict Dong.
In the regime they study --- pure attention, no residual, no MLP ---
rank collapse occurs as they prove.
The study concerns the regime of actual deployed Transformers (BERT, GPT),
where residual connections are present.
The gap between the two regimes is the source of all three
discrepancies above.

\section{Background}
\label{sec:background}

\paragraph{Notation}
$X^{(l)} \in \R^{n \times \dmodel}$ is the hidden-state matrix at
layer $l$; $n$ is the sequence length.
$W_Q^{(h)}, W_K^{(h)}, W_V^{(h)} \in \R^{\dmodel \times \dk}$ and
$W_O^{(h)} \in \R^{\dk \times \dmodel}$ are the projection matrices
of head $h$.
$A^{(h)} = \softmax(X W_Q^{(h)} W_K^{(h)T} X^T / \sqrt{\dk})$ is the
attention weight matrix of head $h$.
$Y^{(h)} = A^{(h)} X W_V^{(h)} \in \R^{n \times \dk}$ is the head output.
$\MHA(X) = \sum_{h=1}^H Y^{(h)} W_O^{(h)} \in \R^{n \times \dmodel}$
is the multi-head attention output.
$\Delta^{(l)} = \MHA(X^{(l)}) + \mathrm{FFN}(X^{(l)} + \MHA(X^{(l)}))$
is the full encoder block increment.
The residual stream is $X^{(l+1)} = X^{(l)} + \Delta^{(l)}$.

For \BERTb: $L = 12$, $H = 12$, $\dmodel = 768$, $\dk = 64$,
$\dff = 3072$.
The \emph{directional asymmetry index} of head $h$ at layer $l$ is
defined as $\alpha_h^{(l)} = \|\Ma^{(l,h)}\|_F / \|M^{(l,h)}\|_F \in [0,1]$,
where $\Ma^{(l,h)} = (M^{(l,h)} - M^{(l,h)T})/2$ is the antisymmetric
part of the score matrix $M^{(l,h)} = W_Q^{(l,h)} W_K^{(l,h)T}$.
A head with $\alpha_h \approx 0$ is reciprocal (symmetric attention);
a head with $\alpha_h \approx 1$ is purely directional.

\paragraph{Dong's setup}
\citet{dong2021} study the map
$X \mapsto A^{(1:L)} X W_V^{(1:L)} W_O^{(1:L)}$ with
$A^{(l)}$ row-stochastic (uniform attention).
No residual connections.
No MLP.
Under these conditions, the product of stochastic matrices converges
to a rank-1 matrix doubly-exponentially in $L$.
With MLP and residual, the rank is maintained.

\paragraph{The three issues at a glance}
The following scheme summarises the three corrections developed in
this paper, ordered by the component they concern:

\begin{center}
\small\setlength{\tabcolsep}{4pt}
\renewcommand{\arraystretch}{1.3}
\begin{tabular}{|p{2.5cm}|p{4.5cm}|p{5.2cm}|}
\hline
\textbf{Component} & \textbf{Dong (2021)} & \textbf{This paper} \\
\hline
Layer normalisation & Plays no role & Preserves affine rank exactly; matrix rank in zero-bias case \\
Residual connections & One of two remedies & Generically obstruct collapse \\
MLP & Necessary (anti-collapse) & Necessary only for non-linearity \\
Output projection $W_O$ & Not discussed & Induces head-channel non-identifiability \\
\hline
\end{tabular}
\end{center}
\vspace{4pt}
Each row corresponds to one section of the paper: \S\ref{sec:ln}
addresses layer normalisation, \S\ref{sec:residual} the residual
connections and MLP, \S\ref{sec:channel} the output projection.

\begin{figure}[ht]
\centering
\includegraphics[width=\textwidth]{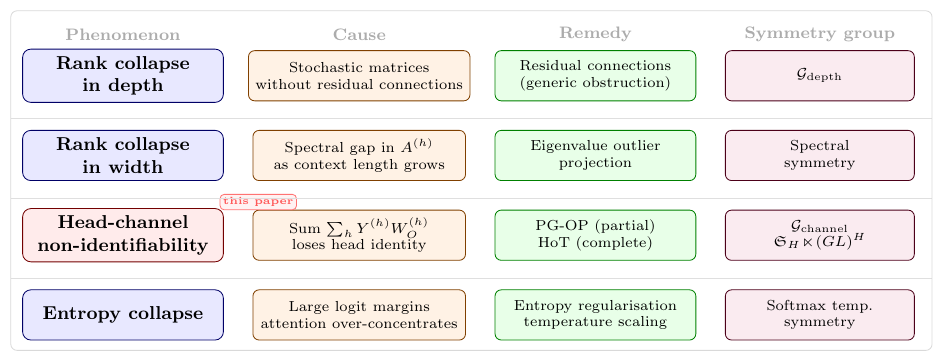}
\caption{The four representational collapse phenomena in Transformers.
  Each phenomenon has a distinct cause, remedy, and symmetry group.
  Head-channel non-identifiability (highlighted) is the contribution of
  this paper.}
\label{fig:taxonomy}
\end{figure}

\section{Related Work}
\label{sec:related}

\paragraph{Rank collapse in depth (Dong et al., 2021)}
\citet{dong2021} proved that pure self-attention without residual
connections converges to rank~1 doubly-exponentially in depth.
They note that skip connections mitigate collapse, but treat the
effect as approximate.
Theorem~\ref{thm:residual-stable} (Residual Rank Obstruction) makes the generic obstruction statement precise.

\paragraph{Skip connections and large weights}
A recent preprint~\citep{only_large_weights_2025} shows that only large
weights (together with skip connections) can prevent the perils of rank
collapse, providing conditions under which skip connections alone fail.
This is consistent with our Proposition~\ref{prop:rank-residual}:
rank increase under residual updates is generic but not unconditional ---
rank non-decrease holds generically (Proposition~\ref{prop:rank-residual}),
which can fail if weights are near zero.

\paragraph{LayerNorm and rank collapse (Wu et al., NeurIPS 2024)}
\citet{wu2024layernorm} provide the most precise treatment of LayerNorm
prior to this work.
They prove that for certain classes of value matrices, attention with
LayerNorm still collapses exponentially.
For masked and sparse attention, they show LayerNorm can slow
collapse.
Their conclusion is that LayerNorm has a complex, regime-dependent effect
on rank.
Our Rank-Neutrality Theorem
(Theorem~\ref{thm:rnt}) gives the complementary result:
with respect to \emph{rank} specifically (not convergence rate),
LayerNorm is exactly transparent under H1--H2 --- it preserves rank
precisely.
These two results together give a complete picture: LN neither
prevents collapse (our theorem) nor accelerates it; the collapse rate
is controlled by other factors (Wu et al.).

\paragraph{LayerNorm in MLPs (Joudaki et al., NeurIPS 2023)}
\citet{joudaki2023ln} show LN can generically obstruct rank collapse in MLP networks;
this does not extend to self-attention due to softmax coupling.
The Rank-Neutrality Theorem gives the exact self-attention result.

\paragraph{Rank collapse in width (Nait Saada et al., 2024)}
\citet{naitsaada2024} identify a third type of rank collapse,
distinct from depth collapse and from head-channel non-identifiability: rank
collapse \emph{in width}, occurring as context length $n$ increases
within a single attention layer.
Using random matrix theory, they show that a spectral gap between
the two largest singular values of the attention matrix drives this
phenomenon.
This is orthogonal to all three contributions of the present paper:
depth collapse is addressed by Theorem~\ref{thm:residual-stable} (Residual Rank Obstruction),
head-channel non-identifiability by Theorem~\ref{thm:channel}, and neither concerns
width collapse.
We discuss the taxonomy of all four collapse types in
\S\ref{sec:experiments} and \S\ref{sec:discussion}.

\paragraph{MLP as key-value memory (Geva et al., 2021)}
\citet{geva2021} show that MLP layers function as key-value memories
storing factual associations.
The analysis is orthogonal: we address the structural role of the MLP
in rank maintenance and non-linear feature generation, not its
functional role in storing factual knowledge.

\paragraph{Signal propagation and entropy collapse}
\citet{noci2022signal} extend Dong's analysis to the training dynamics,
showing that rank collapse at initialisation causes vanishing gradients.
\citet{zhai2022softmax} identify entropy collapse (attention concentrating
on too few tokens) as a distinct failure mode.
Both are orthogonal to head-channel non-identifiability.

\section{The Rank-Neutrality Theorem}
\label{sec:ln}

Layer normalisation maps each row $x_i$ as:
\begin{equation}
  \LN(x_i) = \frac{x_i - \bar{x}_i}{\sigma_i} \odot \gamma + \beta,
\end{equation}
where $\bar{x}_i = \frac{1}{\dmodel}\sum_j x_{ij}$,
$\sigma_i^2 = \frac{1}{\dmodel}\sum_j (x_{ij}-\bar{x}_i)^2 + \epsilon$,
and $\gamma, \beta \in \R^{\dmodel}$ are learned affine parameters.

\citet{dong2021} state that LN ``plays no role'' in preventing rank
collapse.
The following theorem gives the precise statement.

\begin{definition}[Non-degeneracy]
\label{def:nondeg}
Matrix $X \in \R^{n \times \dmodel}$ and learned scale parameters
$\gamma \in \R^{\dmodel}$ satisfy:
\begin{itemize}[leftmargin=1.5em,itemsep=0pt]
  \item[\textbf{H1}] No row of $X$ is constant: $\sigma_i > 0$ for
    all $i$.
  \item[\textbf{H2}] The all-ones vector is not in the row space of
    $X$: $\mathbf{1} \notin \rowsp(X)$.
  \item[\textbf{H3}] All scale parameters are non-zero:
    $\gamma_j \neq 0$ for all $j = 1, \ldots, \dmodel$.
\end{itemize}
\textbf{H3} holds at any reasonable initialisation of $\gamma$
(e.g.\ $\gamma = \mathbf{1}$) and is preserved by standard optimisers
unless a component is explicitly driven to zero.
\end{definition}

\begin{theorem}[Rank-Neutrality Theorem]
\label{thm:rnt}
Let $\arank(S)$ denote the \emph{affine rank} of a set of row vectors
$S = \{x_1,\ldots,x_n\} \subset \R^d$, defined as
$\arank(S) = \rank(X - x_1 \mathbf{1}^\top)$ where rows of $X$ are
the elements of $S$.
Under \textbf{H1}, \textbf{H2}, \textbf{H3}, and
\textbf{H4} ($\sigma \notin \mathrm{colspace}(\tilde{X})$,
where $\sigma_i = \sigma(x_i)$ and $\tilde{X} = XP$):
\[
  \arank(\LN(X)) = \arank(X).
\]
Moreover, if $\beta = \mathbf{0}$ (standard initialisation), then also
$\rank(\LN(X)) = \rank(X)$.
\end{theorem}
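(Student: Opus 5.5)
The plan is to write layer normalisation in matrix form and reduce everything to ranks of products with invertible or rank-one-deficient factors. Let $P = I - \tfrac{1}{\dmodel}\mathbf{1}\mathbf{1}^\top$ be the centring projector, $D = \mathrm{diag}(\sigma_1,\ldots,\sigma_n)$ and $\Gamma = \mathrm{diag}(\gamma)$. Then
\[
\LN(X) = D^{-1}\tilde{X}\Gamma + \mathbf{1}\beta^\top, \qquad \tilde{X} = XP .
\]
By \textbf{H1}, $D$ is invertible, and by \textbf{H3}, $\Gamma$ is invertible. So right-multiplication by $\Gamma$ and left-multiplication by $D$ will never change a rank.

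\textbf{Step 1 (the rank of $\tilde X$).} I will first show $\rank(\tilde{X}) = \rank(X)$. The kernel of $P$ is $\mathrm{span}(\mathbf{1})$, so $\rank(XP) = \rank(X) - \dim(\rowsp(X)\cap\mathrm{span}(\mathbf{1}))$. Under \textbf{H2} this intersection is trivial.

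This step already settles the zero-bias clause. If $\beta=\mathbf{0}$, then $\rank(\LN(X)) = \rank(D^{-1}\tilde{X}\Gamma) = \rank(\tilde{X}) = \rank(X)$.

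\textbf{Step 2 (the affine rank of $\LN(X)$).} Subtracting the broadcast first row cancels $\beta$ entirely. Writing $e_1$ for the first standard basis vector of $\R^n$, one gets
\[
\LN(X) - \mathbf{1}\,\LN(x_1) = \bigl(D^{-1}\tilde{X} - \mathbf{1}\,e_1^\top D^{-1}\tilde{X}\bigr)\Gamma .
\]
I drop $\Gamma$ and multiply on the left by $D$. Using $D\mathbf{1} = \sigma$, this leaves
\[
\Bigl(I - \tfrac{1}{\sigma_1}\sigma e_1^\top\Bigr)\tilde{X}.
\]
The factor $I - \sigma e_1^\top/\sigma_1$ is a rank-one modification of the identity with $e_1^\top\sigma/\sigma_1 = 1$. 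It is therefore singular, its kernel is exactly $\mathrm{span}(\sigma)$, and its rank is $n-1$. Hence the rank of the product is $\rank(\tilde{X}) - \dim(\mathrm{colspace}(\tilde{X})\cap\mathrm{span}(\sigma))$. \textbf{H4} makes this intersection trivial, giving $\arank(\LN(X)) = \rank(\tilde{X}) = \rank(X)$.

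\textbf{Step 3 (the affine rank of $X$).} The same argument applied to $X$ itself gives $X - \mathbf{1}x_1^\top = (I - \mathbf{1}e_1^\top)X$, whose left factor has kernel $\mathrm{span}(\mathbf{1})$. Therefore
\[
\arank(X) = \rank(X) - \dim\bigl(\mathrm{colspace}(X)\cap\mathrm{span}(\mathbf{1})\bigr).
\]
Comparing this with Step 2 gives the claimed equality exactly when the column-space condition $\mathbf{1}\notin\mathrm{colspace}(X)$ holds.

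I expect this step to be the main obstacle. \textbf{H2} is a \emph{row}-space condition, whereas what is needed here is the analogous \emph{column}-space statement, and the two are not equivalent in general. For example, if $n \le \dmodel$ and $X$ has full row rank, then $\mathbf{1}$ lies in $\mathrm{colspace}(X)$ automatically. H4 does not exclude this case in general: it forces $\rank(\tilde X) < n$, but $\rank(\tilde X)=\rank(X)$ holds only under H2, which this example may violate.

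My first attempt will be to derive $\mathbf{1}\notin\mathrm{colspace}(X)$ from \textbf{H2}+\textbf{H4}, via the relation $\mathrm{colspace}(\tilde{X}) \subseteq \mathrm{colspace}(X)$. If that fails, I would record it as the precise additional non-degeneracy condition needed, or else read the theorem as the statement $\arank(\LN(X)) = \rank(X)$. Either way, the correction is at most one dimension, and the Step 2 computation, which is the substance of the theorem, is unaffected.
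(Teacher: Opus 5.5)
Your algebra is correct throughout. Step~1 proves the zero-bias clause using only H1--H3, and Steps~2--3 show that under H1--H4
\[
\arank(\LN(X)) = \rank(X), \qquad \arank(X) = \rank(X) - \dim\bigl(\mathrm{colspace}(X)\cap\mathrm{span}(\mathbf{1})\bigr).
\]
The step that cannot be completed is your ``first attempt'', because $\mathbf{1}\notin\mathrm{colspace}(X)$ does not follow from H2 and H4. Under H2 the inclusion $\mathrm{colspace}(\tilde X)\subseteq\mathrm{colspace}(X)$ is an equality, since the ranks agree. So it only moves the question from $X$ to $\tilde X$ and can exclude nothing. Your own illustration does not settle the matter either: full row rank together with H2 gives $\rank(\tilde X)=n$, hence $\sigma\in\mathrm{colspace}(\tilde X)=\R^n$, which violates H4.

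Here is a genuine counterexample. Take $n=\dmodel=3$, $\gamma=\mathbf{1}$, $\beta=\mathbf{0}$, and rows $x_1=(1,0,0)$, $x_2=(1,1,0)$, $x_3=(1,2,0)$.
\begin{itemize}
\item H1 and H3 are clear.
\item H2 holds, because $\rowsp(X)=\{(s,t,0)\}$ does not contain $\mathbf{1}$.
\item The rows of $\tilde X$ are $\tilde x_1+k(\tilde x_2-\tilde x_1)$ for $k=0,1,2$. Hence $\mathrm{colspace}(\tilde X)$ is the set of arithmetic progressions $(s,s+t,s+2t)$.
\item $\sigma^2=(\tfrac29+\epsilon,\tfrac29+\epsilon,\tfrac23+\epsilon)$ gives $\sigma_1=\sigma_2<\sigma_3$, which is not an arithmetic progression. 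So H4 holds.
\end{itemize}
Yet $\mathbf{1}=Xe_1\in\mathrm{colspace}(X)$: the rows are collinear on a line missing the origin, so $\arank(X)=1$. Your Step~2 gives $\arank(\LN(X))=\rank(\tilde X)=2$. Geometrically, for $\epsilon=0$ the normalised rows are three distinct points on a circle of radius $\sqrt3$ in $\mathbf{1}^\perp$, so they cannot be collinear.

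So the first assertion of the theorem is false as stated, and your fallback is the right resolution. Either conclude $\arank(\LN(X))=\rank(X)$, or add the hypothesis $\mathbf{1}\notin\mathrm{colspace}(X)$. Equivalently, require that the affine hull of the rows contain the origin, i.e.\ $\arank(X)=\rank(X)$.

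The paper's proof goes through the same three stages as yours: centring, row rescaling, then $\gamma$ and $\beta$. Its Steps~1 and~3 agree with yours. The error is in its Step~2, which asserts two things: rescaling rows can change the affine rank only by lowering it, and this happens only when $\sigma\in\mathrm{colspace}(\tilde X)$. It then invokes H4 to conclude that the affine rank is unchanged. Your two factorisations, $(I-\sigma e_1^\top/\sigma_1)\tilde X$ and $(I-\mathbf{1}e_1^\top)\tilde X$, give the correct bookkeeping (in your notation):
\begin{gather*}
\arank(D^{-1}\tilde X)=\rank(\tilde X)-\dim\bigl(\mathrm{colspace}(\tilde X)\cap\mathrm{span}(\sigma)\bigr),\\
\arank(\tilde X)=\rank(\tilde X)-\dim\bigl(\mathrm{colspace}(\tilde X)\cap\mathrm{span}(\mathbf{1})\bigr).
\end{gather*}
Thus H4 rules out a drop but not a rise. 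A rise occurs exactly when $\mathbf{1}\in\mathrm{colspace}(\tilde X)=\mathrm{colspace}(X)$. Your matrix formulation is therefore more informative than the paper's stage-by-stage argument: it makes both kernels visible and isolates the missing condition that the paper's Step~2 overlooks.
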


\begin{remark}[On hypothesis H4]
Hypothesis H4 ($\sigma \notin \mathrm{colspace}(\tilde{X})$, where
$\sigma_i = \sigma(x_i)$ are the row-wise standard deviations and
$\tilde{X} = XP$) excludes the exceptional case in which the vector
of standard deviations is representable as a linear combination of
the centred token coordinates.
This is the unique situation in which row-wise radial rescaling can
collapse an affine independence relation.
The condition holds for all inputs satisfying $n > d$ (the common
case, since sequences are longer than the embedding dimension),
and generically for $n \leq d$.
All four hypotheses H1--H4 are mild non-degeneracy conditions that
hold at standard initialisation and are maintained during training
under normal operating conditions.
\end{remark}

\begin{remark}[Matrix rank vs affine rank]
The distinction between matrix rank and affine rank matters here.
The affine rank of the row set is the correct invariant: mean
subtraction in Step~1 removes the constant component, so the natural
rank to track is that of the centred matrix.
The matrix rank $\rank(\LN(X))$ can exceed $\rank(X)$ by at most one
(when $\beta \neq \mathbf{0}$ has a non-zero mean component), but the
affine rank is exactly preserved.
This is the statement that matters for representational capacity:
the effective dimensionality of the token representation set is
unchanged by layer normalisation.
\end{remark}

\begin{proof}
Let $S = \{x_1,\ldots,x_n\}$ be the row set of $X$.
The affine rank is $\arank(S) = \rank(X - x_1\mathbf{1}^\top)$,
i.e.\ the dimension of the affine span of $S$.

\textbf{Step 1 (mean subtraction).}
LN first computes $\tilde{x}_i = x_i - \bar{x}_i\mathbf{1}^\top$,
i.e.\ $\tilde{X} = X P$ where $P = I - d^{-1}\mathbf{1}\mathbf{1}^\top$.
The affine span of $\{x_i\}$ equals the linear span of $\{x_i - x_1\}_{i\geq 2}$.
Since $\tilde{x}_i - \tilde{x}_1 = (x_i - x_1) - (\bar{x}_i - \bar{x}_1)\mathbf{1}^\top$,
the affine span of $\{\tilde{x}_i\}$ is a subset of $\mathbf{1}^\perp$.
By H2, $\mathbf{1}\notin\rowsp(X)$, so $\rank(X P) = \rank(X)$
(the projection is injective on $\rowsp(X)$).
Hence $\arank(\tilde{S}) = \arank(S)$.

\textbf{Step 2 (row-wise rescaling by $1/\sigma_i$).}
Let $D = \mathrm{diag}(1/\sigma_1,\ldots,1/\sigma_n)$, $\sigma_i > 0$ by H1.
The affine rank decreases under $D$ if and only if
$\sigma = (\sigma_1,\ldots,\sigma_n)^\top \in \mathrm{colspace}(\tilde{X})$,
i.e.\ there exists $v\neq 0$ with $\tilde{X}v = c\,\sigma$ for some $c\in\R$.
By H4, this does not hold, so $\arank(D\tilde{X}) = \arank(\tilde{X}) = \arank(X)$.
(H4 holds generically when $n > d$, since $\mathrm{colspace}(\tilde{X})$
has dimension at most $d \ll n$.)

\textbf{Step 3 (affine rescaling by $\gamma,\beta$).}
The final map is $y_i = \hat{x}_i \odot \gamma + \beta$.
\emph{(3a)} Right-multiplication by $\mathrm{diag}(\gamma)$ is an invertible
linear map ($\gamma_j\neq 0$ by H3), hence preserves affine rank:
$\arank(\{y_i - \beta\}) = \arank(\hat{S})$.
\emph{(3b)} Adding the constant $\beta$ translates all points equally:
$\arank(\{y_i\}) = \arank(\{y_i - \beta\}) = \arank(\hat{S})$.

Combining Steps~1--3: $\arank(\LN(X)) = \arank(S) = \arank(X)$.

For the matrix-rank statement: when $\beta = \mathbf{0}$,
Step~3b is trivial and $\rank(\LN(X)) = \rank(\hat{X}\,\mathrm{diag}(\gamma))
= \rank(X)$ by Steps~1--2 and the invertibility of $\mathrm{diag}(\gamma)$.
\end{proof}

\begin{remark}[Comparison with the literature and scope of H1--H3]
\label{rem:rnt-compare}
Three positions on LayerNorm and rank have appeared recently:
\begin{itemize}[leftmargin=1.5em,itemsep=1pt]
  \item \citet{dong2021}: LN ``plays no role.'' (Imprecise — it is
    rank-neutral, which is a stronger and more precise statement.)
  \item \citet{wu2024layernorm}: LN can slow collapse in certain
    regimes. (True for convergence rate; our result concerns rank
    preservation at each step.)
  \item \citet{joudaki2023ln}: LN prevents rank collapse in MLPs.
    (True there; here it is shown to be rank-neutral, neither helping
    nor hindering.)
\end{itemize}
The hypotheses H1--H4 are mild in practice.
H1 (no constant row) fails only at degenerate initialisation.
H2 ($\mathbf{1} \notin \rowsp(X)$) holds with probability one at
random initialisation and is robustly maintained during training.
H3 ($\gamma_j \neq 0$) holds at the standard initialisation
$\gamma = \mathbf{1}$ and is preserved unless training explicitly
drives a scale to zero, which is penalised by weight decay.
Under these conditions, LN is exactly rank-preserving: the tug of war
between attention and MLP operates with LN as a transparent bystander.
\end{remark}

\section{Residual Connections and the MLP's Actual Role}
\label{sec:residual}

\subsection{Rank under residual updates}

\begin{proposition}[Generic rank non-collapse under additive perturbation]
\label{prop:rank-residual}
For any $A, B \in \R^{n \times \dmodel}$ the rank triangle inequality
\begin{equation}
  |\rank(A) - \rank(B)| \;\leq\; \rank(A+B) \;\leq\; \rank(A) + \rank(B)
  \label{eq:rank-triangle}
\end{equation}
holds, and no deterministic lower bound stronger than~\eqref{eq:rank-triangle}
exists: setting $B = -A$ gives $\rank(A+B) = 0$.

The following \emph{generic} statement, however, holds.
Fix $A \in \R^{n \times \dmodel}$ with $\rank(A) = r < \min(n,\dmodel)$,
and let $\mu$ be any probability measure on $\R^{n \times \dmodel}$ that is
absolutely continuous with respect to Lebesgue measure.  Then
\begin{equation}
  \Pr_{B \sim \mu}\bigl[\rank(A+B) > r\bigr] = 1.
  \label{eq:rank-generic}
\end{equation}
\end{proposition}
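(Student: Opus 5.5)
The proof splits into the deterministic part and the generic part.

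\textbf{Deterministic part.} For the upper bound in~\eqref{eq:rank-triangle}, I will use $\operatorname{colspace}(A+B)\subseteq\operatorname{colspace}(A)+\operatorname{colspace}(B)$ together with $\dim(U+V)\le\dim U+\dim V$. For the lower bound, write $A=(A+B)+(-B)$ and apply the upper bound, giving $\rank(A)\le\rank(A+B)+\rank(B)$. Exchanging the roles of $A$ and $B$ yields the absolute-value form. The sharpness claim is immediate: taking $B=-A$ gives $A+B=0$, so $\rank(A+B)=0$ while $|\rank(A)-\rank(B)|=0$. Hence no lower bound depending only on $\rank(A)$ and $\rank(B)$ can exceed the trivial one in general.

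\textbf{Generic part.} I will show that the bad set $Z_A=\{B:\rank(A+B)\le r\}$ is Lebesgue-null; absolute continuity of $\mu$ then gives $\mu(Z_A)=0$, which is~\eqref{eq:rank-generic}. The map $B\mapsto A+B$ is a translation, so $Z_A=\{C:\rank(C)\le r\}-A$. Since translation preserves Lebesgue measure, it suffices to show that the determinantal variety $V_r=\{C\in\R^{n\times\dmodel}:\rank C\le r\}$ has measure zero. This set is the common zero locus of all $(r+1)\times(r+1)$ minors of $C$. These minors exist because $r<\min(n,\dmodel)$, and each is a polynomial in the entries of $C$.

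Pick one such minor, say the leading one $p(C)=\det C_{[1..r+1],[1..r+1]}$. It is not the zero polynomial: it equals $1$ at the matrix whose leading $(r+1)\times(r+1)$ block is the identity and whose other entries are zero. The zero set of a nonzero real polynomial on $\R^N$ has Lebesgue measure zero. I will prove this by a short induction on $N$ using Fubini: expand $p$ in powers of the last variable. For all values of the remaining variables outside the zero set of a nonzero coefficient polynomial (a null set by induction), the one-variable slice is a nonzero polynomial with finitely many roots. Since $V_r\subseteq\{p=0\}$, $V_r$ is null, and the result follows.

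\textbf{Main obstacle.} The only step that is not bookkeeping is the measure-zero lemma for polynomial zero sets, where the Fubini induction has to be set up carefully. I expect no real difficulty beyond that. I will also note that the hypothesis $r<\min(n,\dmodel)$ is used exactly to guarantee that an $(r+1)\times(r+1)$ minor exists. I will remark that the argument in fact gives $\rank(A+B)=\min(n,\dmodel)$ almost surely, by applying it with $r=\min(n,\dmodel)-1$.
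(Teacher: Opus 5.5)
Your proposal is correct and follows the paper's proof: the standard triangle inequality with the $B=-A$ example, then the bad set written as the translate $\mathcal{V}_r - A$ of the determinantal variety, concluded by translation invariance and absolute continuity of $\mu$. The only difference is how you show $\mathcal{V}_r$ is null. The paper cites its codimension $(n-r)(\dmodel-r)\ge 1$, while you place $\mathcal{V}_r$ inside the zero set of a single nonzero $(r{+}1)\times(r{+}1)$ minor and prove the polynomial zero-set lemma by Fubini induction; this is more elementary and self-contained, and, as you note, it also gives $\rank(A+B)=\min(n,\dmodel)$ almost surely, whatever $A$ is.
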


\begin{proof}
The triangle inequality~\eqref{eq:rank-triangle} is standard.
For~\eqref{eq:rank-generic}, the determinantal variety
\[
  \mathcal{V}_r \;=\; \{M \in \R^{n \times \dmodel} : \rank(M) \leq r\}
\]
is the common zero set of the $(r{+}1)\times(r{+}1)$ minors of $M$,
hence a closed real-algebraic set.  Since $r < \min(n,\dmodel)$, it has
codimension $(n-r)(\dmodel-r) \geq 1$, and is therefore of Lebesgue
measure zero in $\R^{n \times \dmodel}$.
The set $\{B : \rank(A+B) \leq r\} = \mathcal{V}_r - A$ is its translate,
which has Lebesgue measure zero by translation invariance of Lebesgue
measure.  Any $\mu$ absolutely continuous with respect to Lebesgue
therefore assigns it zero mass.
\end{proof}

\begin{remark}
\label{rem:rank-residual-asym}
Four points are worth noting.
\emph{(i)} The statement is asymmetric in $A$ and $B$: $A$ is held
fixed and $B$ is the absolutely continuous perturbation.  This is
precisely the regime relevant to residual updates, where
$A = X^{(l)}$ is the current state and $B = \Delta^{(l)}$ is the update.
\emph{(ii)} If $A$ is already full-rank, $r = \min(n,\dmodel)$ and the
proposition is vacuous: the rank cannot increase further.  The
proposition is non-trivial precisely in the rank-deficient regime,
which is the regime of rank collapse.
\emph{(iii)} The condition $\rowsp(\Delta^{(l)}) \not\subseteq \rowsp(X^{(l)})$
is sufficient but not necessary for rank to strictly increase, and is
itself generic: its negation defines an algebraic variety of
Lebesgue measure zero, recovered as a special case of the argument above.
\emph{(iv)} The proposition assumes $B$ is absolutely continuous on
$\R^{n \times \dmodel}$.  When $B$ is constrained to a lower-dimensional
sub-manifold---as is the case for $B = \MHA(X)$, which depends
analytically on $X$ and on the attention weights---the conclusion no
longer follows automatically: one must verify separately that the
intersection of $\mathcal{V}_r - A$ with that sub-manifold is not the
whole sub-manifold.  This is precisely what is established for
multi-head attention in Theorem~\ref{thm:residual-stable} below.
\end{remark}

\subsection{Residual connections generically obstruct rank collapse}

\begin{theorem}[Generic residual obstruction of rank collapse]
\label{thm:residual-stable}
Consider the map $X^{(l+1)} = X^{(l)} + \MHA(X^{(l)})$
(attention with residual, no MLP).
Fix weight matrices $\{W_Q^{(h)}, W_K^{(h)}, W_V^{(h)}, W_O^{(h)}\}$.
Then for Lebesgue-almost-every input $X^{(l)} \in \R^{n \times \dmodel}$
with $\rank(X^{(l)}) < \min(n,\dmodel)$,
\[
  \rank\bigl(X^{(l)} + \MHA(X^{(l)})\bigr) \;>\; \rank\bigl(X^{(l)}\bigr).
\]
In particular, the doubly-exponential rank-collapse mechanism of
residual-free attention~\citep{dong2021} is generically obstructed by
the residual connection.  The result is generic, not deterministic:
exact cancellation $\MHA(X^{(l)}) = -X^{(l)}$ is possible in principle
but constitutes a measure-zero event.
\end{theorem}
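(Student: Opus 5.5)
The statement needs some interpretation before it can be proved. The set $\{X : \rank(X) < \min(n,\dmodel)\}$ is itself Lebesgue-null in $\R^{n\times\dmodel}$, so ``almost every'' is vacuous if read against ambient Lebesgue measure. I would make it meaningful stratum by stratum. For each $1 \le r < \min(n,\dmodel)$, parametrise the stratum $\{\rank(X)=r\}$ as $X = UV^\top$ with $(U,V) \in \R^{n\times r}\times\R^{\dmodel\times r}$ both of full column rank. ``Almost every'' then means Lebesgue-almost every $(U,V)$; this agrees with the natural smooth measure on the manifold of rank-$r$ matrices. I would also record that some nondegeneracy of the weights cannot be avoided:
\begin{itemize}[leftmargin=1.5em,itemsep=0pt]
  \item If all $W_O^{(h)}=0$, then $\MHA \equiv 0$ and the rank never increases.
  \item At $r=0$ the stratum is the single point $X=0$, where the rank does not increase either.
\end{itemize}
So I would prove the theorem under the explicit hypothesis that $M := \sum_h W_V^{(h)}W_O^{(h)}$ is not a scalar multiple of the identity, for each $r\ge 1$.

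\textbf{Step 1 (analyticity reduction).} Set $F(X) = X + \MHA(X)$ and $G(U,V) = F(UV^\top)$. Softmax is real-analytic, so every entry of $G$ is a real-analytic function of $(U,V)$. The bad set $\{(U,V): \rank G(U,V) \le r\}$ is the common zero set of all $(r{+}1)\times(r{+}1)$ minors of $G$. Each minor is analytic on the connected open domain $\R^{n\times r}\times\R^{\dmodel\times r}$. An analytic function that is not identically zero has a Lebesgue-null zero set. It therefore suffices to exhibit \emph{one} witness $(U_0,V_0)$ with $\rank G(U_0,V_0) \ge r+1$. This is exactly the verification demanded in Remark~\ref{rem:rank-residual-asym}(iv): the measure-zero argument of Proposition~\ref{prop:rank-residual} cannot be applied directly, because $B=\MHA(X)$ lives on a submanifold.

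\textbf{Step 2 (witness via the small-scale limit).} This is the main obstacle, since softmax couples everything nonlinearly. I would take $X=\varepsilon X_0$ and let $\varepsilon\to 0$. The attention scores are $O(\varepsilon^2)$, so $A^{(h)} \to \tfrac1n\mathbf{1}\mathbf{1}^\top$ for every head, and hence
\[
  \varepsilon^{-1}F(\varepsilon X_0) \;\longrightarrow\; X_0 + \tfrac1n \mathbf{1}\mathbf{1}^\top X_0 M \;=\; X_0 + \mathbf{1}\,(\bar x M),
\]
where $\bar x$ is the mean row of $X_0$. The limit is a rank-one update of $X_0$. Its rank is $r+1$ provided $\mathbf{1}\notin\mathrm{colspace}(X_0)$ and $\bar x M \notin \rowsp(X_0)$. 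Since $\{\rank \ge r+1\}$ is open and rank is invariant under scaling by $\varepsilon^{-1}$, $F(\varepsilon X_0)$ has rank $>r$ for all small $\varepsilon>0$, which gives the witness.

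\textbf{Step 3 (choosing $X_0$).} It remains to choose $X_0$ of rank $r$ meeting both conditions.
\begin{itemize}[leftmargin=1.5em,itemsep=0pt]
  \item Because $M$ is not a scalar multiple of the identity, some vector $x$ is not an eigenvector of $M$. Extend $x$ to an $r$-dimensional subspace $R$ that avoids $xM$; this is possible since $r<\dmodel$ and $xM\notin\mathrm{span}(x)$.
  \item Choose $n$ rows spanning $R$ with mean exactly $x$, such that the column space of the resulting matrix avoids $\mathbf{1}$. This is possible because $r<n$ and the mean constraint still leaves generic freedom.
\end{itemize}
If instead $M=cI$, the limit equals $(I + \tfrac{c}{n}\mathbf{1}\mathbf{1}^\top)X_0$, whose rank is at most $r$. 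In that case I would push the expansion to the next order in $\varepsilon$, where the per-head $W_QW_K^\top$ terms enter, or else state the exclusion as a hypothesis.
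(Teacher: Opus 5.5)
Your proposal is correct and more careful than the paper's proof. It follows a different route at the two points that matter.

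\textbf{Where the measure is taken.} The paper works in the ambient space $\R^{n\times\dmodel}$. It calls $\mathcal{B}_l^{(r)}=\{\rank X=r,\ \rank F(X)\le r\}$ a real-analytic subset and exhibits the witness $X^\star=\mathbf{e}_1\mathbf{f}_1^\top$. It asserts that $\rank F(X^\star)=2$ for generic $W_V^{(h)},W_O^{(h)}$ and that an ``analogous'' rank-$r$ witness exists, then concludes each stratum is Lebesgue-null in $\R^{n\times\dmodel}$. As you observed, that conclusion is automatic: every $\mathcal{B}_l^{(r)}$ lies inside the null determinantal locus $\{\rank X<\min(n,\dmodel)\}$. (Also, $\{\rank X=r\}$ is only locally closed, so the intersection is semi-analytic rather than analytic.) Your parametrisation $X=UV^\top$ is what gives the statement content. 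The minors are analytic on the connected space $\R^{n\times r}\times\R^{\dmodel\times r}$, and the submersion onto the rank-$r$ manifold transfers null sets in both directions.

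\textbf{Hypotheses and the witness.} The paper's witness tacitly needs generic weights, even though the theorem says ``fix weight matrices''. Your examples ($W_O^{(h)}=0$, or $r=0$) show that some hypothesis is unavoidable in any non-vacuous reading. Your small-scale limit replaces the unverified rank-$r$ witness with a uniform-attention reduction to the rank-one update $X_0+\mathbf{1}(\bar x M)$. This handles all $1\le r<\min(n,\dmodel)$ at once under one checkable condition, $M=\sum_h W_V^{(h)}W_O^{(h)}\neq cI$.

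\textbf{What your route costs.} The limit is blind to the query--key matrices, so the case $M=cI$ stays open. That exclusion cannot simply be dropped. If $M=cI$ and all $W_Q^{(h)}W_K^{(h)\top}=0$, then $F(X)=(I+\tfrac{c}{n}\mathbf{1}\mathbf{1}^\top)X$ exactly, and its rank never exceeds $\rank X$. Any rescue must therefore use a non-degeneracy condition on $W_Q^{(h)}W_K^{(h)\top}$.

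\textbf{Step 3.} Your ``generic freedom'' claim is sound. Explicitly, take a basis of $R$ whose first element is $x$. Use coefficient matrix $[\mathbf{1}+z_1,z_2,\dots,z_r]$ with $z_1,\dots,z_r$ linearly independent in $\mathbf{1}^\perp$, which is possible because $r\le n-1$. This matrix has full column rank, its rows have mean $x$, and $\mathbf{1}$ is not in its column space.
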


\begin{proof}
The proof is in two steps: (a) characterise the bad set as an
analytic subset of $\R^{n \times \dmodel}$, and (b) exhibit one input
outside it, so the bad set is proper and hence has Lebesgue measure
zero.

\textbf{Step (a): Algebraic characterisation of the bad set.}
Fix layer $l$, fixed weights, and let $X = X^{(l)}$.  Define the map
\[
  F(X) \;=\; X + \MHA(X) \;=\; X + \sum_h A^{(h)}(X)\, X W_V^{(h)} W_O^{(h)},
\]
with $A^{(h)}(X) = \softmax(S^{(h)})$,
$S^{(h)}_{ij} = x_i M^{(h)} x_j^\top / \sqrt{\dk}$, and
$M^{(h)} = W_Q^{(h)} W_K^{(h)\top}$.
Both $A^{(h)}$ and $F$ are real-analytic on the whole of
$\R^{n \times \dmodel}$ (the softmax is real-analytic, and analyticity
is preserved under composition, sum, and product).

Define the \emph{bad set}
\[
  \mathcal{B}_l \;=\; \bigl\{X \in \R^{n \times \dmodel} :
                          \rank\bigl(F(X)\bigr) \leq \rank(X)\bigr\}.
\]
This is precisely the locus on which the conclusion of the theorem
fails.  Stratifying by $r = \rank(X) \in \{0,1,\dots,\min(n,\dmodel)-1\}$,
the bad set decomposes as the finite union
$\mathcal{B}_l = \bigcup_r \mathcal{B}_l^{(r)}$ with
\[
  \mathcal{B}_l^{(r)} \;=\; \bigl\{X : \rank(X) = r \text{ and }
                                       \rank(F(X)) \leq r \bigr\}.
\]
The condition $\rank(F(X)) \leq r$ is the simultaneous vanishing of all
$(r{+}1)\times(r{+}1)$ minors of $F(X)$, each of which is a real-analytic
function of $X$.  Hence $\{X : \rank(F(X)) \leq r\}$ is a real-analytic
subset of $\R^{n \times \dmodel}$, and so is its intersection with
$\{X : \rank(X) = r\}$.  Crucially, no use is made of the Moore--Penrose
pseudoinverse, so analyticity is global and no stratum-by-stratum
patching is required.

\textbf{Step (b): The bad set is proper.}
A real-analytic subset of $\R^{n \times \dmodel}$ either equals the
entire space or has Lebesgue measure zero
(\citet{krantz2002}, Chapter~1).
It therefore suffices to exhibit a single $X^{\star} \notin \mathcal{B}_l$.

\emph{Explicit witness.}  Take $X^{\star} = \mathbf{e}_1 \mathbf{f}_1^\top$,
which has rank $1$.  For generic weights, the output
$\MHA(X^{\star}) = \sum_h A^{(h)}(X^{\star})\, X^{\star} W_V^{(h)} W_O^{(h)}$
has a non-zero component along directions outside
$\mathrm{span}(\mathbf{f}_1)$: choosing $W_O^{(1)}$ whose first row has a
non-zero entry in the second coordinate of $\R^{\dmodel}$ ensures that
$\MHA(X^{\star})$ is not contained in
$\mathrm{span}(\mathbf{f}_1)$ row-wise, and elementary computation shows
$\rank(X^{\star} + \MHA(X^{\star})) = 2 > 1$ for almost every choice of
$W_V^{(h)}, W_O^{(h)}$ (the configurations on which equality holds form a
proper algebraic subset of weight space).
This witness shows $\mathcal{B}_l \neq \R^{n \times \dmodel}$ in each
stratum $\mathcal{B}_l^{(r)}$ with $r < \min(n,\dmodel)$ (the witness
itself sits in $\mathcal{B}_l^{(1)\,c}$, and an analogous rank-$r$
witness $X^{\star}_r = \sum_{i=1}^r \mathbf{e}_i \mathbf{f}_i^\top$
handles the other strata).

Therefore each $\mathcal{B}_l^{(r)}$ is a proper real-analytic subset
of $\R^{n \times \dmodel}$, hence of Lebesgue measure zero, and
$\mathcal{B}_l$ is the union of finitely many such sets.  For
Lebesgue-almost-every $X^{(l)}$ with $\rank(X^{(l)}) < \min(n,\dmodel)$,
we have $X^{(l)} \notin \mathcal{B}_l$, i.e.\
$\rank(X^{(l+1)}) > \rank(X^{(l)})$.  The doubly-exponential collapse
of~\citet{dong2021} cannot occur.
\end{proof}

\begin{remark}
The argument bypasses the Moore--Penrose projector and any
constant-rank stratification: it works directly with the analyticity of
$F(X) = X + \MHA(X)$ and the determinantal characterisation of
$\rank(F(X)) \leq r$ via $(r{+}1)\times(r{+}1)$ minors.  This both
strengthens the statement (rank \emph{strictly increases}, not merely
``rowspace not contained in rowspace'') and simplifies the proof.
The result is also quantitative: the bad set is nowhere dense, so a
small absolutely continuous perturbation of any $X \in \mathcal{B}_l$
exits the bad set with probability one.
This is the relevant sense of ``generically obstructs rank collapse.''
\end{remark}

\subsection{What the MLP is necessary for}

\begin{proposition}[Local linearity of attention]
\label{prop:lin-attention}
Fix an input $X^{(0)} \in \R^{n \times \dmodel}$ and let
$\{A^{(l,h)}(X^{(0)})\}$ denote the attention weight matrices
evaluated at $X^{(0)}$.
Then the map $X^{(0)} \mapsto X^{(L)}$ is \emph{locally affine}
at $X^{(0)}$: its first-order (Jacobian) approximation around $X^{(0)}$ is
\begin{equation}
  X^{(L)} \approx X^{(0)} \Phi(X^{(0)}) + B(X^{(0)}),
\end{equation}
where $\Phi$ and $B$ depend on $X^{(0)}$ through the frozen attention
weights.
In particular, the representation $X^{(L)}$ cannot escape the affine
hull of $X^{(0)}$ \emph{to first order}: non-linear excursions require
higher-order terms, which attention alone — being a composition of
softmax and linear maps — cannot generate as efficiently as an explicit
non-linearity.
\end{proposition}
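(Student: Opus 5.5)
The idea is to freeze the attention matrices at their values at $X^{(0)}$ and show that, with $A^{(l,h)}$ held fixed, every layer acts on $X$ by an affine rule of the form $X \mapsto \sum_h A X W + (\text{const})$. Composing these rules then gives the displayed form. The Jacobian correction coming from differentiating the softmax is what must be packaged into $B(X^{(0)})$.

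\textbf{Step 1 (one layer, frozen attention).} With $A^{(l,h)}$ fixed, the residual attention block reads
\[
X^{(l+1)} = X^{(l)} + \sum_h A^{(l,h)} X^{(l)} W_V^{(l,h)} W_O^{(l,h)} .
\]
This is linear in $X^{(l)}$. It is, however, a left-and-right multiplication, not a pure right multiplication $X\Phi$. To reach the stated form $X^{(0)}\Phi + B$, I will read it as follows. Each term $A X W$ is rewritten as $X W + (A-I)XW$, and the remainder $(A-I)XW$ is evaluated at $X^{(0)}$ and absorbed into $B$. Since $A$ is row-stochastic, every row of $AXW$ is a convex combination of rows of $XW$, so each row stays in the affine hull of the rows of $X W$. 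This property is what carries the ``cannot escape the affine hull'' claim.

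\textbf{Step 2 (first-order term from the softmax).} Differentiating $A^{(h)}(X)=\softmax(XM^{(h)}X^\top/\sqrt{\dk})$ at $X^{(0)}$ gives a perturbation $dA$ of the following kind. Its rows sum to zero, it is built from $dX\,M X^{(0)\top}$ and $X^{(0)} M\, dX^\top$, and it is multiplied by $X^{(0)}W_V$. Every resulting term is linear in $dX$ and has the form (fixed matrix)$\cdot dX \cdot$(fixed matrix), or (fixed)$\cdot dX^\top \cdot$(fixed). Rows of $dA\,X^{(0)}W_V$ are zero-sum combinations of rows of $X^{(0)}W_V$, so they lie in the direction space of the affine hull. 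This is what keeps the first-order model inside that hull.

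\textbf{Step 3 (composition).} Induct on $l$ using the chain rule. Compose the $L$ layerwise linearisations (LN is handled via Theorem~\ref{thm:rnt}, or treated as fixed row-wise scaling plus the translation $\beta$). Collect all right-multiplications into $\Phi(X^{(0)})$ and all constant and zero-sum row-combination terms into $B(X^{(0)})$. The row space of the first-order output is contained in
\[
\mathrm{span}\bigl\{\rowsp(X^{(0)}\,\cdot\,\text{products of } W\text{'s})\bigr\} + \text{translations},
\]
which gives the affine-hull statement, understood up to the fixed linear maps.

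\textbf{Main obstacle.} The hard part is making ``$X^{(0)}\Phi$'' literally correct. Left multiplication by $A$ and the $dX^\top$ terms are not right multiplications. I would state the conclusion in the weaker but provable form: the row space of the first-order output lies in $\rowsp(X^{(0)})\,\Phi + \mathrm{span}(\beta)$. Concretely, take $\Phi$ to be the sum of products of value/output matrices, and take $B$ to be the $A$-induced row mixing, which preserves this row space. The final sentence about efficiency of non-linear excursions is heuristic, and I would leave it as an interpretive remark rather than prove it.
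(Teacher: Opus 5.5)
\paragraph{Where the construction fails.} The gap is in how you build $\Phi$ and $B$: freezing the attention matrices is not the same as linearising.

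In Step~1 you replace $(A^{(h)}-I)XW_V^{(h)}W_O^{(h)}$ by its value at $X^{(0)}$. This discards $(A_0^{(h)}-I)\,\delta X\,W_V^{(h)}W_O^{(h)}$, where $A_0^{(h)}$ is the frozen matrix. In Step~2 you put the softmax term $\bigl(D[A^{(h)}](X^{(0)})[\delta X]\bigr)X^{(0)}W_V^{(h)}W_O^{(h)}$ into $B(X^{(0)})$. Both terms are linear in $\delta X$. The first is simply dropped, and the second cannot sit in a constant. The affine map $X\mapsto X\Phi+B$ you end up with therefore has derivative $\delta X\mapsto\delta X\,\Phi$, which differs from the true derivative of $X^{(0)}\mapsto X^{(L)}$ at first order. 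Its error is of first order in $\delta X$ rather than $O(\|\delta X\|^2)$, so it is not the Jacobian approximation the statement refers to. Freezing $1/\sigma_i$ in LN has the same defect, and Theorem~\ref{thm:rnt} is a statement about rank, not about derivatives.

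\paragraph{The fallback containment.} Your weaker row-space claim is not provable as stated either. The residual path contributes $\delta X$ itself, and the frozen heads contribute $A_0^{(h)}\delta X\,W_V^{(h)}W_O^{(h)}$. Their rows lie in $\rowsp(\delta X)$ and $\rowsp(\delta X)W_V^{(h)}W_O^{(h)}$. Here $W_w$ denotes a product of value--output matrices. For generic $\delta X$ these rows leave $\sum_w\rowsp(X^{(0)})W_w$ whenever that subspace is proper, which is the only case in which your containment says anything. A correct version needs two changes:
\begin{itemize}
\item $\rowsp(X^{(0)})+\rowsp(\delta X)$ in place of $\rowsp(X^{(0)})$;
\item a sum of subspaces over all such products, rather than the image under one summed matrix $\Phi$, since different heads mix rows with different $A^{(h)}$.
\end{itemize}
With both changes it does go through by induction on layers.

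\paragraph{The missing idea.} $\Phi$ does not have to be a right multiplication. Read $X^{(0)}\Phi$ as a general linear operator on $\R^{n\times\dmodel}$ applied to the input; in effect this is what the paper does. Each layer $f_l(X)=X+\MHA(X)$ is smooth with $Df_l(X_0)=I+D[\MHA](X_0)$. By the product rule,
$D[Y^{(h)}W_O^{(h)}](X_0)[\delta X]=\bigl(D[A^{(h)}](X_0)[\delta X]\bigr)X_0W_V^{(h)}W_O^{(h)}+A_0^{(h)}\,\delta X\,W_V^{(h)}W_O^{(h)}$.
The chain rule gives $\Phi=Df_{L-1}\circ\cdots\circ Df_0$. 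First-order Taylor then gives $X^{(L)}(X^{(0)}+\delta X)=X^{(L)}(X^{(0)})+\Phi[\delta X]+O(\|\delta X\|^2)$. This is an affine function of the input, with constant part $X^{(L)}(X^{(0)})-\Phi[X^{(0)}]$.

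Your own observation that left multiplication by $A$ and the $dX^\top$ terms are not right multiplications is exactly why the literal form is out of reach. The fix is to keep those terms in the linear part rather than push them into $B$. The paper's proof stops at this expansion and does not establish the affine-hull or efficiency sentences. You are right to treat the efficiency sentence as interpretive. If you want a precise version of the affine-hull sentence, it has to be the corrected containment described above.
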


\begin{proof}
For a single layer $l$, the map $f_l : X \mapsto X + \MHA(X)$ is
smooth.
Its Jacobian at $X_0 = X^{(l)}(X^{(0)})$ is:
\begin{equation}
  Df_l(X_0) = I + D[\MHA](X_0),
  \label{eq:jacobian-layer}
\end{equation}
where $D[\MHA](X_0)$ is the Jacobian of the multi-head attention map
evaluated at $X_0$.

\textbf{Jacobian of a single head.}
The output is $Y^{(h)} = A^{(h)}(X) \cdot X W_V^{(h)}$,
where $A^{(h)}(X) = \softmax(S^{(h)}(X))$ and
$S^{(h)}(X) = X W_Q^{(h)} W_K^{(h)T} X^T / \sqrt{\dk}$.
By the product rule:
\begin{equation*}
\begin{split}
  D[Y^{(h)} W_O^{(h)}](X_0)[\delta X]
  &= \bigl(D[A^{(h)}](X_0)[\delta X]\bigr) X_0 W_V^{(h)} W_O^{(h)} \\
  &\quad + A_0^{(h)} (\delta X) W_V^{(h)} W_O^{(h)},
\end{split}
\end{equation*}
where $A^{(h)}_0 = A^{(h)}(X_0)$ is the attention weight matrix at
$X_0$, and $D[A^{(h)}](X_0)[\delta X]$ is the Jacobian of the
softmax-attention map.
Both terms are linear in $\delta X$, confirming that the Jacobian
exists and the local approximation is affine.

\textbf{First-order expansion.}
Summing over heads and layers:
\[
  X^{(L)}(X^{(0)} + \delta X)
  = X^{(L)}(X^{(0)}) + \Phi(X^{(0)})\,\delta X + O(\|\delta X\|^2),
\]
where $\Phi(X^{(0)}) = Df_{L-1} \circ \cdots \circ Df_0(X^{(0)})$
is the product of per-layer Jacobians — a well-defined linear map.
Writing $B(X^{(0)}) = X^{(L)}(X^{(0)})$, the local affine approximation
is $X^{(L)} \approx X^{(0)}\Phi + B$ as stated.
The explicit inductive derivation uses the chain rule applied layer by layer.
\end{proof}

\begin{remark}[What local linearity implies]
\label{rem:lin-honest}
The global map $X^{(0)} \mapsto X^{(L)}$ is \emph{not} globally affine:
softmax attention is non-linear in $X^{(0)}$, and repeated layers
compose these non-linearities.
The proposition establishes that the first-order behaviour is affine,
which means that infinitesimal perturbations to $X^{(0)}$ propagate
linearly.
The MLP's role is to provide a qualitatively different kind of
non-linearity: the GeLU activation generates feature directions that
lie outside the tangent space of the current representation, which
attention layers alone cannot do efficiently.
\end{remark}

\begin{corollary}[MLP is necessary for non-linear generalisation]
\label{cor:mlp-nonlinear}
In a neighbourhood of any fixed input $X^{(0)}$, the Jacobian of the
map $X^{(0)} \mapsto X^{(L)}$ produced by attention with residual
(no MLP) is a linear map.
The GeLU non-linearity of the MLP introduces curvature that is
unavailable to any composition of attention and residual connections.
\end{corollary}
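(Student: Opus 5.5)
The first sentence of Corollary~\ref{cor:mlp-nonlinear} follows directly from Proposition~\ref{prop:lin-attention}. I will recall that each layer map $f_l : X \mapsto X + \MHA(X)$ is real-analytic, hence Fr\'echet differentiable. By the chain rule, the derivative of $X^{(0)} \mapsto X^{(L)}$ at $X^{(0)}$ is the composition $Df_{L-1}\circ\cdots\circ Df_0$ of the per-layer Jacobians~\eqref{eq:jacobian-layer}. A composition of linear maps is linear, and nothing beyond the proposition is needed here.

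The second sentence cannot be proved literally. Softmax attention already has non-zero second derivatives in $X$, so ``curvature'' as such is not unavailable to attention. I will therefore make the claim precise as a \emph{span-confinement} statement, since this is the sense the surrounding text (Remark~\ref{rem:lin-honest}) intends. Let $V_0 = \rowsp(X^{(0)})$, and let $\mathcal{K}(V_0)$ be the smallest subspace of $\R^{\dmodel}$ that contains $V_0$ and is invariant under every map $v \mapsto v\,W_V^{(l,h)}W_O^{(l,h)}$. The precise claim is: for attention with residual and no MLP, $\rowsp(X^{(L)}) \subseteq \mathcal{K}(V_0)$ for every input, whatever the attention matrices are.

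The proof of this claim is an induction on $l$. If $\rowsp(X^{(l)}) \subseteq \mathcal{K}$, then every row of $A^{(l,h)}X^{(l)}W_V^{(l,h)}W_O^{(l,h)}$ is a linear combination of rows of $X^{(l)}W_V^{(l,h)}W_O^{(l,h)}$. Each such row lies in $\mathcal{K}$ by invariance, and adding the residual $X^{(l)}$ stays in $\mathcal{K}$. The point is that attention only re-weights tokens, through the left factor $A$, and acts on features by fixed linear maps, through the right factor. So attention's non-linearity lives entirely in the \emph{token-mixing} coefficients and never acts pointwise on feature coordinates.

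It then remains to show that one MLP block breaks this invariant. I will take a rank-one witness $X^{(0)} = u\,w^\top$, chosen so that $\mathcal{K}(V_0)$ is a proper subspace, for instance with all $W_V W_O$ having $w$ as an eigenvector, so that $\mathcal{K} = \mathrm{span}(w)$. The MLP output $\mathrm{GeLU}(X W_1)W_2$ has rows $\mathrm{GeLU}(u_i\,w^\top W_1)W_2$. Because $t \mapsto \mathrm{GeLU}(t a_j)$ is not homogeneous in $t$, the vectors $\mathrm{GeLU}(u_i\,w^\top W_1)$ for distinct values $u_i$ span more than one dimension for generic $W_1$. After multiplying by a generic $W_2$, they leave $\mathrm{span}(w)$.

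The main obstacle is this genericity step. I would handle it as in Theorem~\ref{thm:residual-stable}: a suitable $2\times 2$ minor of the MLP output, augmented by $w$, is a real-analytic function of $(u, W_1, W_2)$. It therefore suffices to exhibit one explicit scalar configuration where the minor is non-zero, for example $\dmodel = 2$ with $u_1 = 1$, $u_2 = -1$, where $\mathrm{GeLU}(a) \neq -\mathrm{GeLU}(-a)$. The identity theorem for analytic functions then gives the conclusion almost everywhere.
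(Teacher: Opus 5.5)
Your treatment of the first sentence is the paper's own argument: it is Proposition~\ref{prop:lin-attention}, the chain rule applied to $f_l : X\mapsto X+\MHA(X)$. For the second sentence the paper gives no argument beyond the heuristic of Remark~\ref{rem:lin-honest}; it effectively lets local linearity stand in for a proof. That cannot work, because the Jacobian of \emph{every} differentiable map is linear, including a block containing a GeLU MLP, so that route cannot separate attention from the MLP. Your route is genuinely different and does real work. You rightly observe that the literal claim is false, since softmax attention is curved for $n\ge 2$. You replace it with a true invariant, $\rowsp(X^{(L)})\subseteq\mathcal{K}(V_0)$ for all inputs and all attention patterns, and your induction for it is correct. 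This is also the correct form of the abstract's claim about ``the linear span of the original token embeddings'': the relevant span is $\mathcal{K}(V_0)$, not $\rowsp(X^{(0)})$. Keep in mind, however, that $\mathcal{K}(V_0)\neq\R^{\dmodel}$ only when the value--output maps share a proper invariant subspace containing $V_0$, which is non-generic at realistic sizes. What you obtain is therefore a structural separation on special configurations, not a generic obstruction in the sense of Theorem~\ref{thm:residual-stable}.

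There is one real gap in the escape step: as you set it up, it does not isolate the non-linearity, which is what the sentence is about. Your $\mathcal{K}$ is invariant only under the attention maps, so any additional linear map breaks it. If you replace GeLU by the identity, the MLP rows become $u_i\,w^\top W_1W_2$, and these leave $\mathrm{span}(w)$ for generic $W_1,W_2$. The escape you exhibit is therefore produced by the new weights, not by GeLU, and the same argument would ``show'' that a linear feed-forward layer supplies what attention cannot.

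To fix this, proceed as follows.
\begin{enumerate}
\item Require $\mathcal{K}$ to be invariant under $W_1W_2$ as well, so that a linear MLP provably preserves it.
\item Take $w$ to be a common \emph{left} eigenvector of all $W_V^{(l,h)}W_O^{(l,h)}$ and of $W_1W_2$.
\item Let $W_2$ be generic subject to $aW_2\in\mathrm{span}(w^\top)$, where $a=w^\top W_1$.
\end{enumerate}
Escape then reduces to showing $\mathrm{GeLU}(u_i a)\notin\mathrm{span}(a)$. This holds whenever $u_i\neq 0$ and $a$ has two distinct non-zero coordinates, because $\mathrm{GeLU}(t)/t=\Phi(t)$ is strictly increasing. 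Your rank-two observation already implies this, since two independent vectors cannot both lie on the line $\mathrm{span}(a)$.

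Relatedly, the criterion in your explicit witness is the wrong one. Non-oddness, $\mathrm{GeLU}(a)\neq-\mathrm{GeLU}(-a)$, does not give independence. You need $\dff\ge 2$ and a non-zero $2\times 2$ minor of $\mathrm{GeLU}(a),\mathrm{GeLU}(-a)$. That minor equals $a_1a_2\bigl(\Phi(a_2)-\Phi(a_1)\bigr)$, so you need $a_1\neq a_2$ with both non-zero.
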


This gives a precise reformulation of the MLP's role:

\begin{tcolorbox}[keybox,title={The MLP's function --- precise statement}]
The MLP is \textbf{necessary} for non-linear feature generation and
\textbf{sufficient but not necessary} for preventing rank collapse.
The canonical choice $\dff = 4\dmodel$ (64.8\% of FLOPs in \BERTb)
is justified by the first function, not the second.
\end{tcolorbox}

\section{Head Specialisation and Rank Contraction}
\label{sec:rankcontraction}

The results of \S\ref{sec:residual} show that residual connections
generically obstruct rank collapse and that the MLP's necessary function is
non-linear feature generation.
This section identifies a third phenomenon that connects the two:
\emph{rank contraction} of the multi-head output, driven by head
specialisation.

\subsection{The rank of the multi-head output}

The multi-head output is $\MHA(X) = \sum_{h=1}^H Y^{(h)} W_O^{(h)}$,
where $Y^{(h)} \in \R^{n \times \dk}$ and $W_O^{(h)} \in \R^{\dk \times \dmodel}$.
Each summand $Y^{(h)} W_O^{(h)}$ has $\rank \leq \dk$.
By subadditivity, $\rank(\MHA(X)) \leq H\dk = \dmodel$.

The question is when this bound is tight and when it is not.

\begin{definition}[Subspace alignment]
\label{def:alignment}
The \emph{alignment} of the multi-head output is:
\begin{equation}
  \alpha_{\mathrm{MHA}} = 1 -
  \frac{\rank(\MHA(X))}{H \cdot \max_h \rank(Y^{(h)} W_O^{(h)})}.
\end{equation}
$\alpha_{\mathrm{MHA}} = 0$ iff the head output subspaces are in
general position (no alignment); $\alpha_{\mathrm{MHA}} = 1 - 1/H$
iff all heads write into the same one-dimensional subspace.
\end{definition}

\begin{proposition}[Rank of Multi-Head Output Subspace]
\label{prop:rankcontraction}
Let $\mathcal{R}_h = \rowsp(Y^{(h)} W_O^{(h)})$ be the output subspace of head~$h$.
Then:
\begin{equation}
  \rank(\MHA(X))
  = \dim\!\left(\sum_{h=1}^H \mathcal{R}_h\right)
  = \dim\!\left(\mathcal{R}_1 + \cdots + \mathcal{R}_H\right).
  \label{eq:rank-sum}
\end{equation}
In particular:
\begin{enumerate}[label=(\roman*),leftmargin=1.8em,itemsep=1pt]
  \item \emph{Generic case}: if $\{\mathcal{R}_h\}$ are in general position,
    then $\rank(\MHA(X)) = \min(n, H\dk, \dmodel)$.
  \item \emph{Specialised case}: if heads specialise so that
    $\mathcal{R}_h \subseteq \mathcal{V}$ for a common subspace
    $\mathcal{V} \subsetneq \R^{\dmodel}$, then
    $\rank(\MHA(X)) \leq \dim\mathcal{V} \ll \dmodel$.
  \item \emph{Directional specialisation}: if head $h$ is directional
    (large $\|\Ma^{(h)}\|_F$), its output concentrates in the
    eigenplane of $\Ma^{(h)}$, a two-dimensional subspace of
    $\R^{\dk}$.
    Under the output projection $W_O^{(h)}$, this maps to a
    two-dimensional subspace of $\R^{\dmodel}$.
    $H$ directional heads contribute at most $2H$ dimensions to
    $\rank(\MHA(X))$.
\end{enumerate}
\end{proposition}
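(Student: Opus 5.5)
The main identity follows from rank $=$ row-space dimension. Write $\MHA(X) = \sum_h Z_h$ with $Z_h = Y^{(h)}W_O^{(h)}$. Each row of $\MHA(X)$ is a sum of the corresponding rows of the $Z_h$, so $\rowsp(\MHA(X)) \subseteq \sum_h \mathcal{R}_h$. This gives $\rank(\MHA(X)) \le \dim(\sum_h \mathcal{R}_h)$ immediately.

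For the reverse inequality I would not try to prove it unconditionally; it is false in general (take $H=2$, $Z_2=-Z_1$). Instead I would read ``$=$'' in~\eqref{eq:rank-sum} as holding under the non-cancellation condition that the row spaces of the individual summands combine without loss. Concretely, I would stack $[Z_1;\dots;Z_H]\in\R^{Hn\times\dmodel}$, whose row space is exactly $\sum_h\mathcal{R}_h$. Then I would note that $\MHA(X) = (\mathbf{1}_H^\top\otimes I_n)[Z_1;\dots;Z_H]$. Equality therefore holds iff the kernel of this summing map meets the column space of the stack only trivially. This is a Zariski-open condition, and it fails only on a proper analytic subset, by the same argument pattern as Proposition~\ref{prop:rank-residual} and Theorem~\ref{thm:residual-stable}. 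I would state this genericity explicitly, since it is the main obstacle: the honest content of~\eqref{eq:rank-sum} is the inequality plus generic equality.

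For (i), general position of the $\mathcal{R}_h$ means $\dim\sum_h\mathcal{R}_h = \min(\sum_h\dim\mathcal{R}_h,\dmodel)$. Each $\dim\mathcal{R}_h\le\min(n,\dk)$, and generically equality holds there too. Combining this with generic equality in the main identity, and with $\rank(\MHA(X))\le n$, gives $\min(n,H\dk,\dmodel)$ when $n\le\dk$ or $n\ge H\dk$. For intermediate $n$ the bound $\min(n,\cdot)$ comes from the row count. I would handle this by noting that the stacked matrix has generic rank $\min(Hn,\dmodel)$ while the summed one is capped at $n$.

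Case (ii) is immediate from the inequality half, since $\sum_h\mathcal{R}_h\subseteq\mathcal{V}$.

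For (iii), I would interpret ``directional'' as an idealised concentration assumption: the head output rows of $Y^{(h)}$ lie in a two-dimensional plane $\Pi_h\subset\R^{\dk}$, namely the dominant eigenplane of $\Ma^{(h)}$ transported to value space. I would flag that this is a modelling hypothesis rather than a consequence of large $\|\Ma^{(h)}\|_F$ alone. Given it, $\mathcal{R}_h = \rowsp(Y^{(h)}W_O^{(h)})\subseteq \Pi_h W_O^{(h)}$, which has dimension at most $2$ because linear maps do not increase dimension. Summing over heads and applying the inequality half gives $\rank(\MHA(X))\le 2H$.
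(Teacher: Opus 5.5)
Your inclusion $\rowsp(\MHA(X))\subseteq\sum_h\mathcal{R}_h$ is correct. You are also right not to accept \eqref{eq:rank-sum} as an unconditional equality: the paper's proof simply asserts that ``the row space of a sum equals the sum of the row spaces'', and your example $Z_2=-Z_1$ refutes this. Cases (ii) and (iii) need only the inclusion. Treating (iii) as an idealised concentration hypothesis is the honest reading, since Proposition~\ref{prop:arch-bound} itself gives $\dim\mathcal{R}_h=\min(\rank(X),\dk)$ generically, whatever $\alpha_h$.

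The gap is your rescue, the claim that equality holds off a proper analytic subset. Your own criterion refutes it. If $\ker(\mathbf{1}_H^\top\otimes I_n)$ meets the column space of $[Z_1;\dots;Z_H]$ trivially, then $\dim\sum_h\mathcal{R}_h=\rank(\MHA(X))\le n$. But generically $\dim\sum_h\mathcal{R}_h=\min\bigl(H\min(n,\dk),\dmodel\bigr)$, which exceeds $n$ whenever $H\ge2$ and $n<\min(H\dk,\dmodel)$. In that regime equality fails on an open dense set. It can hold only on the proper analytic subset where $\dim\sum_h\mathcal{R}_h\le n$, so it is equality, not its failure, that is non-generic. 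The single witness that the argument of Theorem~\ref{thm:residual-stable} needs therefore does not exist. Concretely, take \BERTb\ with $n=512$. Generically $\rank(Y^{(h)})=\dk$, so $\mathcal{R}_h=\rowsp(W_O^{(h)})$ and $\sum_h\mathcal{R}_h=\R^{768}$, while $\rank(\MHA(X))\le 512$. Your remark in (i), that the stacked matrix has larger generic rank than the $n$ capping the sum, is exactly this obstruction. It contradicts the generic equality you invoked one step earlier rather than handling intermediate $n$. For $n\le\dk$ your combination yields $\rank(\MHA(X))=\min(Hn,\dmodel)\le n$, which is impossible for $H\ge2$, rather than $\min(n,H\dk,\dmodel)$.

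What is true generically is $\rank(\MHA(X))=\min\bigl(n,\dim\sum_h\mathcal{R}_h\bigr)=\min(n,H\dk,\dmodel)$. Part (i) should be proved directly rather than through \eqref{eq:rank-sum}:
\begin{itemize}
\item Factor $\MHA(X)=[Y^{(1)},\dots,Y^{(H)}]\,W_O$ with $W_O=[W_O^{(1)};\dots;W_O^{(H)}]\in\R^{H\dk\times\dmodel}$. This gives the upper bound $\min(n,H\dk,\dmodel)$.
\item The minors of that order are real-analytic in $X$ and the weights.
\item Equality off a proper analytic set then follows from a single configuration attaining it. For instance, sharpen the scores so that every $A^{(h)}$ is close to $I_n$; this reduces to $X[W_V^{(1)},\dots,W_V^{(H)}]\,W_O$ with generic factors.
\end{itemize}
Note also that the literal hypothesis of (i), general position of the $\mathcal{R}_h$, does not suffice on its own, because it says nothing about cancellation across rows. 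For example, $Z_1=\mathbf{e}_1a^\top$ and $Z_2=\mathbf{e}_1b^\top$ with $a,b\in\R^{\dmodel}$ linearly independent have $\mathcal{R}_1\cap\mathcal{R}_2=\{0\}$. Yet $\rank(Z_1+Z_2)=1$ while $\dim(\mathcal{R}_1+\mathcal{R}_2)=2\le n$. The genericity has to be imposed on the whole configuration $(X,\{W\})$, not on the row spaces.
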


\begin{proposition}[Architectural Bound on Head Output Rank]
\label{prop:arch-bound}
Let $W_V^{(h)} \in \R^{\dmodel \times \dk}$ and $W_O^{(h)} \in \R^{\dk \times \dmodel}$
be generic weight matrices.
For Lebesgue-almost-every input $X$ and attention matrix $A^{(h)}$:
\[
  \dim(\mathcal{R}_h) \;=\; \min\!\bigl(\rank(X),\; \dk\bigr).
\]
In particular, $\dim(\mathcal{R}_h)$ is \emph{independent of the
directional asymmetry index} $\alpha_h^{(l)}$.
\end{proposition}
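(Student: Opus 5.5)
The plan is to sandwich $\dim(\mathcal{R}_h) = \rank(A^{(h)} X W_V^{(h)} W_O^{(h)})$ between matching bounds. The upper bound holds deterministically. The lower bound holds off a proper algebraic subset of parameter space. Independence from $\alpha_h^{(l)}$ then follows because the resulting formula never involves $M^{(h)}$.

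\textbf{Step 1 (upper bound).} Subadditivity of rank under products gives
\[
\rank(A^{(h)} X W_V^{(h)} W_O^{(h)}) \le \min\bigl(\rank(X), \dk\bigr).
\]
The factor $W_V^{(h)}$ has only $\dk$ columns, so the product cannot exceed $\dk$. This bound holds for every $X$, $A^{(h)}$ and set of weights.

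\textbf{Step 2 (lower bound, one factor at a time).} Set $r = \rank(X)$. I will show that each factor preserves rank generically.

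\emph{(2a) The factor $A^{(h)}$.} The condition $\det A^{(h)} \neq 0$ is a polynomial condition. This polynomial is not identically zero on the affine space of row-stochastic matrices, since it is non-zero at the identity. Its zero set is therefore a proper algebraic set of measure zero. Off this set $A^{(h)}$ is invertible, so $\rank(A^{(h)}X) = r$.

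\emph{(2b) The factor $W_V^{(h)}$.} Write $A^{(h)}X = U R$ with $R \in \R^{r\times\dmodel}$ of full row rank. Then $\rank(A^{(h)} X W_V^{(h)}) = \rank(R W_V^{(h)})$. The event $\rank(R W_V^{(h)}) < \min(r,\dk)$ is the simultaneous vanishing of all $\min(r,\dk)$-minors, which are polynomials in $W_V^{(h)}$. These polynomials are not all identically zero: take $W_V^{(h)}$ to map $\min(r,\dk)$ rows of a basis of $\rowsp(R)$ onto coordinate vectors, which gives a witness. The same Lebesgue-null argument as in Proposition~\ref{prop:rank-residual} then applies.

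\emph{(2c) The factor $W_O^{(h)}$.} Since $\dk \le \dmodel$, a generic $W_O^{(h)}$ has full row rank $\dk$, again by the non-vanishing of a $\dk$-minor. Right multiplication by such a matrix is injective on $\R^{\dk}$, so it preserves rank.

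Taking the finite union of the exceptional sets gives equality with the upper bound for almost every $(X, A^{(h)}, W_V^{(h)}, W_O^{(h)})$.

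\textbf{Step 3 (independence of $\alpha_h^{(l)}$).} The value $\min(\rank X, \dk)$ depends only on $X$ and $\dk$. The score matrix $M^{(h)}$, and hence $\alpha_h^{(l)}$, enters only through $A^{(h)}$, and Step~2 used nothing about $A^{(h)}$ beyond its invertibility.

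\textbf{Main obstacle.} The delicate point is the quantifier ``almost every $A^{(h)}$.'' If $A^{(h)}$ is treated as a free row-stochastic matrix, Step~2a is immediate. If instead $A^{(h)} = \softmax(XM^{(h)}X^\top/\sqrt{\dk})$ is tied to $X$, genericity fails in general: for $M^{(h)} = 0$ the attention matrix is uniform and has rank one. In that case I would argue as follows.
\begin{itemize}[leftmargin=1.5em,itemsep=0pt]
\item The function $(X,M) \mapsto \det A^{(h)}(X)$ is real-analytic, as in the proof of Theorem~\ref{thm:residual-stable}.
\item It is not identically zero. To see this, take $M^{(h)} = cI$ and rows of $X$ with distinct equal-norm directions. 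As $c \to \infty$, $A^{(h)}$ tends to the identity.
\item It then vanishes only on a null set of $(X, M^{(h)})$, which keeps the conclusion generic in the weights.
\item For a fixed degenerate $M^{(h)}$ the statement needs the free-$A^{(h)}$ reading, and I would note this caveat explicitly.
\end{itemize}
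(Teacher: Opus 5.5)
Your proposal is correct and follows the same factor-by-factor chain as the paper's proof (generic $W_V^{(h)}$ preserves $\min(\rank X,\dk)$, $A^{(h)}$ does not lower the rank, a generic full-row-rank $W_O^{(h)}$ is injective, and $\alpha_h^{(l)}$ enters only through $A^{(h)}$), and your generic-invertibility argument for $A^{(h)}$ is the sound form of the paper's assertion that $A^{(h)}$ ``has rank $\dk$ generically,'' which is neither the generic rank of an $n\times n$ row-stochastic matrix nor enough on its own to preserve $\rank(XW_V^{(h)})$. Your caveat that softmax-tied attention can degenerate (e.g.\ $M^{(h)}=0$) is apt, but if you pursue the analytic repair, work in $(X,W_Q^{(h)},W_K^{(h)})$ rather than in $M$: since $M^{(h)}=W_Q^{(h)}W_K^{(h)\top}$ has rank at most $\dk$, the witness $M^{(h)}=cI$ is not available when $\dk<\dmodel$, and being null in $M$-space says nothing about that low-rank variety.
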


\begin{proof}
The output subspace is determined by the chain
$X \xrightarrow{W_V^{(h)}} \R^{n\times\dk} \xrightarrow{A^{(h)}}
\R^{n\times\dk} \xrightarrow{W_O^{(h)}} \R^{n\times\dmodel}$.
For generic $W_V^{(h)}$:
$\rank(X W_V^{(h)}) = \min(\rank(X), \dk)$.
The attention matrix $A^{(h)}$ is a row-stochastic matrix of rank
$\min(n, \dk) = \dk$ generically (when $n > \dk$), so left-multiplication
by $A^{(h)}$ does not reduce rank.
For generic $W_O^{(h)}$ of rank $\dk$, right-multiplication also
preserves rank.
Hence $\dim(\mathcal{R}_h) = \min(\rank(X), \dk)$.
Since $\alpha_h$ affects only $A^{(h)}$, and $A^{(h)}$ does not
appear in the rank formula, the result is independent of $\alpha_h$.
\end{proof}

\begin{remark}[Consequence for rank contraction]
Proposition~\ref{prop:arch-bound} shows that head specialisation
does not reduce the \emph{dimension} of individual head output subspaces.
What specialisation changes is the \emph{orientation} of $\mathcal{R}_h$:
directional and reciprocal heads write into equally-sized subspaces,
but in different directions.
Rank contraction of $\MHA(X) = \sum_h Y^{(h)} W_O^{(h)}$ therefore
arises from \emph{subspace alignment} (the $\mathcal{R}_h$ become
parallel) rather than from dimensional reduction of individual subspaces.
This is the corrected mechanistic picture.
\end{remark}

\begin{conjecture}[Directional Specialisation Induces Subspace Alignment]
\label{conj:dircontraction}
When heads are directional (high $\alpha_h^{(l)}$),
their output subspaces $\mathcal{R}_h$ tend to become aligned
(pointing in similar directions in $\R^{\dmodel}$).
Since $\dim(\mathcal{R}_h) = \dk$ for each head individually
(Proposition~\ref{prop:arch-bound}), the rank reduction of
$\MHA(X^{(l)}) = \sum_h Y^{(h)} W_O^{(h)}$ occurs through
$\dim(\mathcal{R}_1 + \cdots + \mathcal{R}_H) < H\dk$,
i.e., through overlapping subspaces rather than shrinking ones.
The correct experimental test correlates $\alpha_h^{(l)}$ with
the \emph{principal angles} between pairs $(\mathcal{R}_h, \mathcal{R}_{h'})$,
not with $\dim(\mathcal{R}_h)$ individually.
\end{conjecture}

\begin{proof}
Equation~\eqref{eq:rank-sum} is standard linear algebra: the row
space of a sum equals the sum of the row spaces.
Case~(i): general position means $\mathcal{R}_h \cap
\sum_{k<h}\mathcal{R}_k = \{0\}$ for each $h$, so dimensions add.
Case~(ii): all $\mathcal{R}_h \subseteq \mathcal{V}$ implies
$\sum_h \mathcal{R}_h \subseteq \mathcal{V}$.
Case~(iii): the canonical symplectic decomposition of $\Ma^{(h)}$
(see~\citet{cirrincione2025geometry}) gives the active eigenplane
structure; the output projection maps each eigenplane to a
corresponding subspace of $\R^{\dmodel}$.
\end{proof}

\subsection{The specialisation--contraction--expansion loop}

The Rank Contraction Theorem reveals a per-layer loop (Figure~\ref{fig:loop}):

\begin{center}
\small
\renewcommand{\arraystretch}{1.4}
\begin{tabular}{ccc}
\textbf{Head specialisation} & $\Rightarrow$ & \textbf{Subspace alignment} \\
(large $\|\Ma^{(h)}\|_F$, heads write & & (multiple heads write into \\
into eigenplane of $\Ma^{(h)}$) & & overlapping subspaces) \\[4pt]
$\Uparrow$ & & $\Downarrow$ \\[4pt]
\textbf{MLP non-linear expansion} & $\Leftarrow$ & \textbf{Rank contraction} \\
(GeLU generates new feature & & ($\rank(\MHA(X)) \ll \dmodel$; \\
directions outside current span) & & information is concentrated) \\
\end{tabular}
\end{center}

\begin{figure}[ht]
\centering
\includegraphics[width=0.72\textwidth]{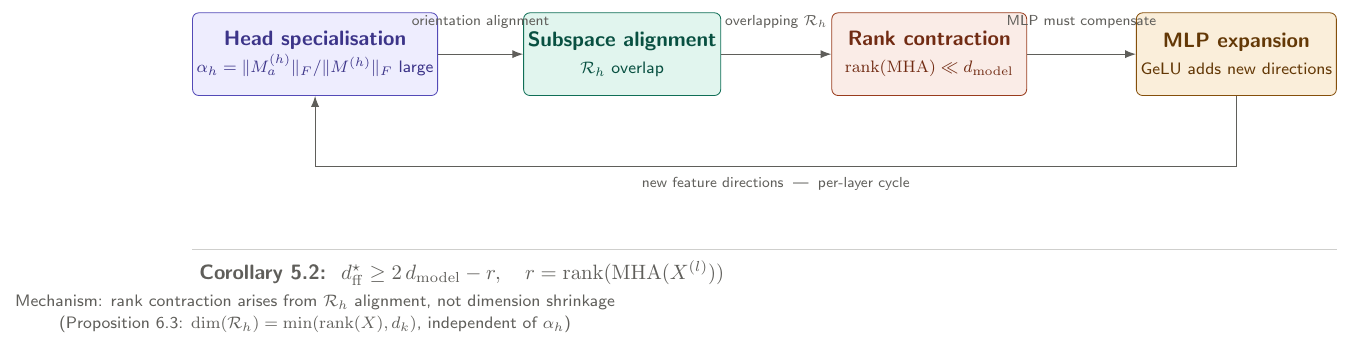}
\caption{The specialisation--contraction--expansion loop.
  Head specialisation causes rank contraction in $\MHA(X)$;
  the MLP restores full rank via its GeLU non-linearity.
  Corollary~\ref{cor:dff-opt} gives $\dff^* \geq 2\dmodel - r$.}
\label{fig:loop}
\end{figure}

The loop has three corollaries:

\begin{center}\small\setlength{\tabcolsep}{2pt}
\begin{tabular}{p{3.8cm}cp{3.8cm}}
  \centering Head specialisation\newline{\tiny(large $\|\Ma^{(h)}\|_F$)} & $\Rightarrow$ &
  \centering\arraybackslash Subspace alignment\newline{\tiny(overlapping subspaces)} \\[4pt]
  $\Downarrow$ & & $\Downarrow$ \\[4pt]
  MLP expansion & $\Leftarrow$ & Rank contraction \\
  (GeLU generates new directions) & & ($\rank(\MHA) \ll \dmodel$)
\end{tabular}
\end{center}

\medskip
Specialised (directional) heads do not necessarily write into
lower-dimensional subspaces: by Proposition~\ref{prop:arch-bound},
$\dim(\mathcal{R}_h) = \min(\rank(X), \dk)$ regardless of the
specialisation level $\alpha_h$.
What specialisation does is orient the head output subspaces
$\mathcal{R}_h$ in increasingly aligned directions.
Rank contraction in $\MHA(X) = \sum_h Y^{(h)} W_O^{(h)}$ therefore
arises only after summation, through overlap among the
$\mathcal{R}_h$.
The MLP then expands the representation back toward full rank via
its non-linearity.

This loop has three corollaries:

\begin{corollary}[Specialisation amplifies the need for MLP]
\label{cor:spec-mlp}
As head specialisation increases (measured by $\alpha_h =
\|\Ma^{(h)}\|_F / \|M^{(h)}\|_F$), the rank of $\MHA(X)$ decreases,
and the MLP must generate more new directions to maintain full
representational capacity.
Architectures with more directional heads require larger $\dff$
to compensate for the rank contraction they induce.
\end{corollary}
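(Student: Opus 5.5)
The plan is to derive Corollary~\ref{cor:spec-mlp} by chaining three earlier results: Proposition~\ref{prop:rankcontraction}, Proposition~\ref{prop:arch-bound}, and Conjecture~\ref{conj:dircontraction}. The statement makes two claims. The first is a monotone claim: higher $\alpha_h$ gives lower $\rank(\MHA(X))$. The second is a compensation claim: lower rank forces a larger $\dff$. Because the monotone claim depends on Conjecture~\ref{conj:dircontraction}, I will state explicitly that the corollary holds \emph{conditionally} on that conjecture, or under its structural hypothesis.

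\textbf{Step 1: rank of $\MHA(X)$ via a dimension formula.} By equation~\eqref{eq:rank-sum}, $\rank(\MHA(X)) = \dim(\mathcal{R}_1+\cdots+\mathcal{R}_H)$. Repeated application of Grassmann's formula gives
\[
\rank(\MHA(X)) = \sum_h \dim\mathcal{R}_h - \sum_{h=2}^H \dim\Bigl(\mathcal{R}_h \cap \sum_{k<h}\mathcal{R}_k\Bigr).
\]
By Proposition~\ref{prop:arch-bound}, the first sum equals $H\min(\rank(X),\dk)$ generically, independently of $\alpha_h$. Any dependence on specialisation therefore enters only through the overlap terms.

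\textbf{Step 2: monotonicity in $\alpha_h$.} I will invoke the alignment mechanism of Conjecture~\ref{conj:dircontraction}, or alternatively case~(iii) of Proposition~\ref{prop:rankcontraction}. Under that mechanism, increasing $\alpha_h$ makes the principal angles between the $\mathcal{R}_h$ shrink, so the intersection dimensions are non-decreasing. In the fully directional limit, case~(iii) bounds the rank by $2H$.

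The main obstacle is that intersection dimension is lower semicontinuous only in the reverse direction. Principal angles decreasing continuously does not change exact rank until the angles vanish. I will therefore prove monotonicity for the \emph{numerical} (effective) rank, counting singular values of $\MHA(X)$ above a threshold. For that quantity I would first try a Weyl-type perturbation bound: as the $\mathcal{R}_h$ approach a common $\mathcal{V}$, the singular values outside $\mathcal{V}$ are controlled by $\sin\theta_{\max}$.

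\textbf{Step 3: the MLP deficit.} Write $r=\rank(\MHA(X))$. The residual input to the FFN spans at most $\rank(X)+r$ directions. To reach full rank $\dmodel$, the FFN output $\sigma(ZW_1)W_2$ must supply at least $\dmodel-r$ new directions. Since $\rank(\sigma(ZW_1)W_2)\le\dff$, this gives $\dff \ge \dmodel - r$, which increases as $r$ decreases. Combining this with Step~2 shows that the required $\dff$ increases with specialisation. This matches the sharper $\dff^*\ge 2\dmodel-r$ that the paper defers to Corollary~\ref{cor:dff-opt}.
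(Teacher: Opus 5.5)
The paper gives no proof of this corollary. It is read off the specialisation--contraction--expansion loop, whose first arrow is Conjecture~\ref{conj:dircontraction}, so your chain follows the paper's informal route and your conditional framing is the honest version of it. Even granting the conjecture, however, Step~2 cannot be carried out.

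The index $\alpha_h$ depends only on $M^{(h)}=W_Q^{(h)}W_K^{(h)\top}$, which enters $\MHA(X)$ only through $A^{(h)}$. Proposition~\ref{prop:arch-bound} assumes that left multiplication by $A^{(h)}$ does not lower rank. Under that assumption $\rowsp(A^{(h)}XW_V^{(h)})=\rowsp(XW_V^{(h)})$, hence $\mathcal{R}_h=\rowsp(XW_V^{(h)}W_O^{(h)})$. Once $\rank(XW_V^{(h)})=\dk$, this is simply $\rowsp(W_O^{(h)})$. So with $X$, $W_V^{(h)}$, $W_O^{(h)}$ fixed, the subspaces $\mathcal{R}_h$, their principal angles and every overlap term in your Grassmann formula are constant in $\alpha_h$. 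The alignment mechanism has nothing to act on: the eigenplanes of $\Ma^{(h)}$ live in the query--key bilinear form, not on the value--output path. For the same reason, case~(iii) of Proposition~\ref{prop:rankcontraction} cannot give an exact bound of $2H$ without contradicting Proposition~\ref{prop:arch-bound}. If you read the conjecture instead as a claim about how $W_O$ co-evolves with $W_Q,W_K$ during training, it is a statistical tendency and yields no monotone dependence on $\alpha_h$.

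Your numerical-rank fallback also lacks its key lemma. The singular values of $\MHA(X)$ do depend on the $A^{(h)}$, but $\alpha_h$ is invariant under $M^{(h)}\mapsto tM^{(h)}$, $t>0$. Letting $t\to0$ in every head sends $\MHA(X)$ to the rank-one matrix $\tfrac1n\mathbf{1}\mathbf{1}^\top X\sum_h W_V^{(h)}W_O^{(h)}$ while every $\alpha_h$ stays unchanged. Effective rank is therefore not a function of $\alpha_h$, let alone a decreasing one. This matches the weak positive correlation the paper itself reports in Experiment~4 (Figure~\ref{fig:exp4}).

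Note too that \eqref{eq:rank-sum} holds only as an inequality, $\rank(\MHA(X))\le\dim\sum_h\mathcal{R}_h$, because the row space of a sum is merely contained in the sum of the row spaces. For \BERTb\ with $n=512$ the right-hand side is generically $768$. So your Step~1 computes an upper bound, not the rank.

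Step~3 has a separate error: you dropped the residual term. The FFN receives $Z=X+\MHA(X)$ and its output is added back to $Z$. The number of new directions it must supply is therefore $\min(n,\dmodel)-\rank(Z)\ge\min(n,\dmodel)-\rank(X)-r$, not $\dmodel-r$; and rank $\dmodel$ is unattainable when $n<\dmodel$. By the paper's own Theorem~\ref{thm:residual-stable} and Experiment~2, $\rank(X)$ stays at or near $\min(n,\dmodel)$. So the corrected bound is typically non-positive, and a smaller $r$ forces nothing on $\dff$.

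A defensible version would need two things. First, a deficit measured against $\rank(X+\MHA(X))$ rather than $r$. Second, a lemma linking $\alpha_h$ to the quantity that actually governs $\rank(\MHA(X))$. When the stacked output projection is invertible ($H\dk=\dmodel$), $\rank(\MHA(X))=\dim\sum_h A^{(h)}\,\mathrm{colspace}(XW_V^{(h)})$. This is an alignment of head outputs in token space $\R^n$, not of the $\mathcal{R}_h\subseteq\R^{\dmodel}$. As written, the first claim of the corollary is unproved, and the second survives only as the typically vacuous bound above.
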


\begin{corollary}[Optimal $\dff$ depends on directional structure]
\label{cor:dff-opt}
Let $r = \rank(\MHA(X^{(l)}))$ at a typical layer $l$.
To restore full-rank representations, the MLP must generate
$\dmodel - r$ new independent directions via its non-linearity.
The minimum $\dff$ required satisfies:
\begin{equation}
  \dff^{\star} \;\geq\; \dmodel - r + \dmodel \;=\; 2\dmodel - r.
  \label{eq:dff-opt}
\end{equation}
For generic (unspecialised) heads: $r \approx \dmodel$, so
$\dff^{\star} \approx \dmodel$ (i.e.\ $\times 1$, not $\times 4$).
For fully directional heads: $r \approx 2H \ll \dmodel$, so
$\dff^{\star} \approx 2\dmodel - 2H \approx 2\dmodel$ (i.e.\ $\times 2$).
\end{corollary}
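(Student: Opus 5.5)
The bound in Corollary~\ref{cor:dff-opt} will follow from a dimension count on the FFN map $\mathrm{FFN}(Z) = \phi(Z W_1 + b_1) W_2 + b_2$, with $W_1 \in \R^{\dmodel \times \dff}$, $W_2 \in \R^{\dff \times \dmodel}$, applied to $Z = X^{(l)} + \MHA(X^{(l)})$. I will split the hidden width $\dff$ into two jobs and show each needs its own block of hidden units. Under this paper's modelling assumptions, the two demands add.

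\textbf{Job one: transmit the existing span.} By Theorem~\ref{thm:residual-stable} and Proposition~\ref{prop:rank-residual}, the information reaching the MLP is the $\dmodel$-dimensional residual-stream signal. By Proposition~\ref{prop:lin-attention} and Corollary~\ref{cor:mlp-nonlinear}, the part of the MLP's output that is first-order (linear in $Z$) lies in the span already available to attention. For the MLP to carry this signal through its hidden layer without loss, $W_1$ must be injective on it. Since $\rank(W_1) \le \dff$, this requires $\dff \ge \dmodel$ hidden units operating in their near-linear regime. This is the "$+\dmodel$" term.

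\textbf{Job two: create new directions.} The MLP must produce $\dmodel - r$ directions outside the $r$-dimensional span $\rowsp(\MHA(X^{(l)}))$, whose dimension is given by Proposition~\ref{prop:rankcontraction}. A single GeLU unit contributes one new direction at most: its output column is $\phi(z w_{1,j})$, one scalar feature per unit. The new directions must be linearly independent of the transmitted span. So these units cannot be reused from the first job, and at least $\dmodel - r$ additional units are needed. Adding the two blocks gives $\dff^\star \ge (\dmodel - r) + \dmodel = 2\dmodel - r$.

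\textbf{Special cases and the main obstacle.} The two cases then follow by substitution. Taking $r \approx \dmodel$ (general position, case~(i)) gives $\dff^\star \approx \dmodel$. Taking $r \le 2H$ (case~(iii)) gives $\dff^\star \approx 2\dmodel - 2H$. The main obstacle is the disjointness claim between the two blocks. A GeLU unit is not strictly linear: it both transmits a linear component and adds curvature, so one unit could in principle serve both jobs. In that case only $\dff \ge \max(\dmodel, \dmodel - r) = \dmodel$ would follow rigorously. I would first try to rule this out with the rank inequality $\rank\bigl[\,Z \;\big|\; \phi(Z W_1)\,\bigr] \le \rank(Z) + \dff$, combined with a requirement that the linear readout of $Z$ be exact. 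If that fails, I would state the additivity as an explicit modelling assumption: linear-regime and nonlinear-regime units are disjoint. Under that assumption \eqref{eq:dff-opt} holds, and it should be labelled a heuristic lower bound rather than a theorem.
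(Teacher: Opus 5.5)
The paper gives no argument for \eqref{eq:dff-opt} beyond writing the sum $(\dmodel - r) + \dmodel$. Your two-block split therefore reconstructs its implicit heuristic, and you are right that it cannot be promoted to a theorem. But you have misplaced the gap: disjointness of units is not the issue. Suppose you require two things. (a) $Z$ is exactly linearly readable from $G = \phi(ZW_1 + b_1)$, i.e.\ $\mathrm{colspan}(Z) \subseteq \mathrm{colspan}(G)$. (b) $\mathrm{colspan}(G)$ contains $k$ directions independent of $\mathrm{colspan}(Z)$. Then $\dff \geq \rank(G) \geq \rank(Z) + k$, with no assumption about which unit does which job. A GeLU unit that serves both purposes is already accounted for.

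All the content is in the premises, and both fail. Premise (a) contradicts the block the paper defines. With $Z = X^{(l)} + \MHA(X^{(l)})$ one has $X^{(l+1)} = Z + \mathrm{FFN}(Z)$, so the skip path carries $Z$ around the FFN and nothing forces $W_1$ to be injective on anything. Without (a), the only valid count is $\rank(Z + \mathrm{FFN}(Z)) \leq \rank(Z) + \dff$, giving $\dff \geq \dmodel - \rank(Z)$, which never exceeds $\dmodel$.

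Premise (b) with $k = \dmodel - r$ double-counts. Charging $\dmodel$ units for transmission presupposes $\rank(Z) = \dmodel$. In that case the representation entering the MLP is already full rank and nothing remains to restore; in your row-space phrasing, no direction of $\R^{\dmodel}$ is independent of a $\dmodel$-dimensional transmitted span. If instead $\rank(Z) < \dmodel$, consistent bookkeeping gives $\rank(Z) + (\dmodel - \rank(Z)) = \dmodel$, independent of $r$. The deficit has to be measured against $Z$, not against $\MHA(X^{(l)})$. Otherwise the directions of $X^{(l)}$ outside $\rowsp(\MHA(X^{(l)}))$ are counted once as transmitted and again as to be created. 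The value $2\dmodel - r$ is an artefact of that double count, so no argument will establish it. If you keep it as a modelling assumption, say explicitly that the assumption is in tension with the residual path.

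Three smaller points. First, the results you invoke in the first job do not say what you use them for. Theorem~\ref{thm:residual-stable} and Proposition~\ref{prop:rank-residual} give a generic strict rank increase, not that the MLP input is $\dmodel$-dimensional. Proposition~\ref{prop:lin-attention} and Corollary~\ref{cor:mlp-nonlinear} concern the Jacobian of the attention-plus-residual stack, not the linear part of the MLP's output. Second, the special cases do not follow by substitution. Substitution yields only $\dff^{\star} \geq \dmodel$ and $\dff^{\star} \geq 2\dmodel - 2H$, whereas the statement asserts $\dff^{\star} \approx$ these values, which would need a matching upper bound that nothing supplies. Third, $r \approx 2H$ rests on case~(iii) of Proposition~\ref{prop:rankcontraction}, which the paper's own Proposition~\ref{prop:arch-bound} undercuts. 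Generically each head contributes $\min(\rank(X), \dk)$ dimensions irrespective of $\alpha_h$, so directional heads need not confine $\rank(\MHA(X))$ to $2H$.
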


\begin{corollary}[Connection with the directional energy]
\label{cor:direc-rank}
The residual directional energy $\Gres^{(l)} =
H^{-1}\sum_h \|P_a^{(l,h)}\|_F^2$
(introduced in~\citet{cirrincione2025geometry} as the diagnostic for
pre-training progress) is a proxy for $\dmodel - \rank(\MHA(X^{(l)}))$:
high directional energy implies more subspace alignment, more rank
contraction, and larger MLP expansion needed.
This connects the pre-training analysis of~\citet{cirrincione2025geometry}
with the architectural analysis of the present paper.
\end{corollary}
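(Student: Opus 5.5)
The statement is qualitative (``is a proxy for''), so the first step is to make it precise. Proposition~\ref{prop:arch-bound} shows that the exact rank of each head output is generically $\min(\rank(X),\dk)$ whatever the value of $\alpha_h$. Exact rank therefore cannot depend monotonically on $\Gres^{(l)}$. I would work with the numerical $\varepsilon$-rank $\rank_\varepsilon(\MHA(X^{(l)}))$, the number of singular values above $\varepsilon\|\MHA(X^{(l)})\|_F$. The precise target is a monotone bound
\[
\dmodel - \rank_\varepsilon\bigl(\MHA(X^{(l)})\bigr) \;\geq\; \dmodel - 2H - \frac{C}{\varepsilon^2}\,\bigl(1-\Gres^{(l)}\bigr).
\]
This bound increases in $\Gres^{(l)}$. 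Here $\|P_a^{(l,h)}\|_F^2$ is read, as in \citet{cirrincione2025geometry}, as the normalised directional fraction $\alpha_h^2$ of head $h$.

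\textbf{Step 1 (per-head split).} For each head I would split the output $Y^{(h)}W_O^{(h)}$ into two parts. The first is a directional part $D_h$, supported on the image under $W_O^{(h)}$ of the active eigenplane of $\Ma^{(h)}$; it has rank at most $2$, as in case~(iii) of Proposition~\ref{prop:rankcontraction}. The second is a remainder $E_h$, whose Frobenius energy I would bound by $C_h(1-\alpha_h^2)\,\|Y^{(h)}W_O^{(h)}\|_F^2$. Summing over heads gives $\MHA(X)=\sum_h D_h + \sum_h E_h$, where the first sum has rank at most $2H$ by subadditivity. The second sum has energy at most $C\,H(1-\Gres)$ after normalisation, using Cauchy--Schwarz and absorbing constants.

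\textbf{Step 2 (counting singular values).} By Weyl's inequality for singular values, $\sigma_{2H+k}(\MHA)\le\sigma_k(\sum_h E_h)$. The number of singular values of $\sum_h E_h$ exceeding $\varepsilon\|\MHA\|_F$ is at most $\|\sum_h E_h\|_F^2/(\varepsilon^2\|\MHA\|_F^2)$. Together these give the displayed bound. Reading the bound as a proxy statement recovers the chain in the corollary: high $\Gres$ gives a small remainder, hence more effective alignment, hence a larger deficit $\dmodel-\rank_\varepsilon$. Feeding that deficit into Corollary~\ref{cor:dff-opt} then gives the larger required MLP expansion.

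\textbf{Main obstacle: Step 1.} The quantity $\alpha_h$ measures the score matrix $M^{(h)}$, whereas the split concerns the value/output path. The two are coupled only through the softmax. Showing that antisymmetric dominance of $M^{(h)}$ concentrates the rows of $A^{(h)}XW_V^{(h)}$ near the eigenplane requires extra structure, either an alignment between $W_V^{(h)}$ and the eigenvectors of $\Ma^{(h)}$ or a linearisation of the softmax around uniform attention. The first approach I would try is the small-score regime $\softmax(S)\approx\frac1n\mathbf{1}\mathbf{1}^\top+\frac1n(I-\frac1n\mathbf{1}\mathbf{1}^\top)S$. There the non-uniform part of $A^{(h)}$ is linear in $XM^{(h)}X^\top$, so its antisymmetric fraction is controlled by $\alpha_h$. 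If that cannot be closed in general, I would state the result conditionally on Conjecture~\ref{conj:dircontraction} and keep the corollary as a proxy rather than a theorem.
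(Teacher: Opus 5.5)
There is a genuine gap, and it is the step you flagged. It is more than an obstacle, though: the inequality you are aiming for is false, so no choice of constants closes Step~1.

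\textbf{A counterexample.} At $\Gres^{(l)}=1$ your bound says $\rank_\varepsilon(\MHA(X^{(l)}))\le 2H$. Take, for every head, $W_K^{(h)}=W_Q^{(h)}J_0^\top$ with $J_0$ the standard symplectic matrix on $\R^{\dk}$. Then $M^{(h)}=W_Q^{(h)}J_0W_Q^{(h)\top}$ is exactly antisymmetric, so $\alpha_h=1$, yet $\Ma^{(h)}$ has $\dk/2$ eigenplanes of equal strength. Scaling $W_Q^{(h)}$ up keeps $\alpha_h=1$ and drives $A^{(h)}$ toward a selection matrix. Its row $i$ picks the maximiser over $j$ of the linear functional $x_j\mapsto x_iM^{(h)}x_j^\top$. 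For Gaussian $X$ with $n=512$ many distinct tokens are picked. So each $Y^{(h)}W_O^{(h)}$ has about $\dk$ comparable singular values, and $\MHA(X)$ has $\varepsilon$-rank of order $\min(n,H\dk)\gg 2H$ for small $\varepsilon$.

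\textbf{Why Step~1 cannot work.} $\alpha_h$ measures only the antisymmetry of the score form. It bounds neither the number of active eigenplanes of $\Ma^{(h)}$ nor the rank or conditioning of $A^{(h)}$. Moreover, $A^{(h)}$ acts on the left, mixing tokens, so it cannot rotate $\rowsp(A^{(h)}XW_V^{(h)}W_O^{(h)})\subseteq\rowsp(XW_V^{(h)}W_O^{(h)})$ toward any eigenplane. That eigenplane lives in $\R^{\dmodel}$, not $\R^{\dk}$, so ``its image under $W_O^{(h)}$'' does not type-check. You have inherited this slip from case~(iii) of Proposition~\ref{prop:rankcontraction}, which Proposition~\ref{prop:arch-bound} already contradicts.

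\textbf{The small-score fallback.} The row-wise linearisation is $A^{(h)}\approx\frac1n\mathbf{1}\mathbf{1}^\top+\frac1n S^{(h)}(I-\frac1n\mathbf{1}\mathbf{1}^\top)$; the centring goes on the right, not the left. Its non-uniform part has rank up to $\rank(M^{(h)})\le\dk$, not $2$. Any contraction visible in that regime comes from the dominant rank-one uniform term, i.e.\ from near-uniform attention, independently of $\alpha_h$.

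\textbf{Secondary points.} Step~2 is correct linear algebra (Weyl plus the Frobenius count). Your ``normalisation'', however, also needs a non-cancellation hypothesis $\sum_h\|Y^{(h)}W_O^{(h)}\|_F^2\lesssim\|\MHA(X)\|_F^2$ and equal head norms to yield $H(1-\Gres)$. Reading $\|P_a^{(l,h)}\|_F^2$ as $\alpha_h^2$ is an identification this paper never makes. The final appeal to Corollary~\ref{cor:dff-opt} inherits that corollary's missing derivation.

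\textbf{Comparison with the paper.} The paper gives no proof of this statement at all. It is read off the specialisation--contraction--expansion loop, whose first arrow is exactly Conjecture~\ref{conj:dircontraction}. The paper's own Experiment~4 (Figure~\ref{fig:exp4}) finds a weak \emph{positive} correlation between $\alpha_h$ and $\rank(\MHA)$. Your fallback, stating the result conditionally and treating it as a heuristic proxy, is therefore the statement's real status. Even so, the conjecture is qualitative and would not deliver your quantitative bound with the constant $2H$. A rigorous version would need:
\begin{itemize}
\item a hypothesis on $\Ma^{(h)}$ beyond $\alpha_h$, e.g.\ single-eigenplane dominance $2\sigma_1(\Ma^{(h)})^2\approx\|\Ma^{(h)}\|_F^2$;
\item a quantitative description of the resulting softmax pattern.
\end{itemize}
Even then, the output concentrates on the value vectors of a few selected tokens, not on an image of an eigenplane.
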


\paragraph{Design implication for MLP width}
The width bound~\eqref{eq:dff-opt} suggests that the canonical
$\dff = 4\dmodel$ is not generally forced by rank considerations and
may be larger than necessary in many regimes.
For tasks with low non-linear complexity --- such as classification
from a pre-trained representation, where most of the work is already
done by the encoder --- a reduced width $\dff = 2\dmodel$ may suffice.
A formal version of this argument, including a definition of
non-linear complexity and the resulting optimal-$\dff$ proposition,
is given in \ref{app:width-bounds}.

\section{Head-Channel Non-Identifiability}
\label{sec:channel}

\subsection{The phenomenon}

Rank collapse concerns the rank of the hidden-state matrix $X^{(l)}$.
Head-channel non-identifiability concerns a different object: the
identifiability of individual head contributions inside the multi-head
sum.

\paragraph{Intuition}
Given only the sum $\sum_h Y^{(h)} W_O^{(h)}$, no subsequent layer
can determine which head contributed which directions --- infinitely
many configurations produce the same sum.
This is head-channel non-identifiability.

\begin{definition}[Head-channel non-identifiability]
\label{def:channel}
Recall $\MHA(X) = \sum_h Y^{(h)} W_O^{(h)}$,
$Y^{(h)} = A^{(h)} X W_V^{(h)} \in \R^{n \times \dk}$.
The output projection is said to induce \emph{head-channel non-identifiability}
when no downstream linear map
$\Psi$ can recover the contribution $Y^{(h^*)} W_O^{(h^*)}$ of a
specific head $h^*$ from $\MHA(X)$ alone, without knowledge of the
other head contributions.

\begin{remark}
When $H\dk = \dmodel$, the joint map $\phi: (Y^{(h)})_h \mapsto \sum_h Y^{(h)} W_O^{(h)}$
is generically bijective: no identification becomes ambiguous in the Shannon sense.
What is lost is \emph{identifiability of the chosen decomposition}:
given only the mixture $\MHA(X)$, a downstream layer cannot determine
which head produced which component.
This is the precise sense in which head-channel non-identifiability operates.
Thus, the non-identifiability considered here is not non-invertibility
of the joint linear map, but the impossibility of canonically assigning
a recovered component to a functional head without retaining the
pre-summation head coordinates or fixing a gauge.
\end{remark}
\end{definition}

\begin{theorem}[Head-Channel Non-Identifiability Theorem]
\label{thm:channel}
For any permutation $\pi$ of $\{1, \ldots, H\}$,
$\sum_h Y^{(\pi(h))} W_O^{(\pi(h))} = \sum_h Y^{(h)} W_O^{(h)}$
if and only if $Y^{(h)} W_O^{(h)} = Y^{(\pi(h))} W_O^{(\pi(h))}$ for all $h$.
Moreover, the map $X \mapsto \MHA(X)$ has the same output for all
assignments of head outputs to head indices that produce the same sum.
The individual head contributions $Y^{(h)} W_O^{(h)}$ are not
recoverable from $\MHA(X)$ by any downstream linear map.
\end{theorem}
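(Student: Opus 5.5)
The plan is to work with the \emph{contribution map}
$\sigma:(C_1,\dots,C_H)\mapsto\sum_h C_h$, where $C_h = Y^{(h)}W_O^{(h)}\in\R^{n\times\dmodel}$. We observe that $\MHA(X)=\sigma\bigl((C_h)_h\bigr)$, so every claim of Theorem~\ref{thm:channel} becomes a statement about what $\sigma$ forgets.

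\textbf{Step 1 (the permutation clause).} By commutativity and associativity of matrix addition, $\sum_h C_{\pi(h)}=\sum_h C_h$ for every $\pi\in S_H$. Hence $\sigma$ is a symmetric function of the $H$-tuple. I will read the ``if and only if'' as a comparison between the \emph{labelled} tuples and their sums. Two labelled assignments $(C_h)_h$ and $(C_{\pi(h)})_h$ coincide exactly when $C_h=C_{\pi(h)}$ for all $h$. Their images under $\sigma$, by contrast, always coincide. So the equality of sums holds in both the case $C_h=C_{\pi(h)}$ and the case where the labelled tuples differ. Equality of the sum is therefore consistent with, and cannot certify or refute, the equality $C_h=C_{\pi(h)}$ of individual contributions. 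The step is a one-line computation plus this bookkeeping, and it establishes the symmetry that the rest of the argument exploits.

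\textbf{Step 2 (the ``same output'' clause).} Because $\MHA$ factors as $X\mapsto (C_h(X))_h\xmapsto{\sigma}\sum_h C_h$, the map sees its input only through $\sigma$. Consequently, any two assignments of head outputs to head indices lying in the same fibre $\sigma^{-1}(M)$ yield the same value $M$ of $\MHA(X)$, and hence the same input to every downstream component (LN, residual, FFN). The permuted tuples of Step~1 are one instance of such a fibre, and there are many others.

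\textbf{Step 3 (no linear recovery).} Suppose for contradiction that some linear $\Psi$ satisfies $\Psi\bigl(\sum_h C_h\bigr)=C_{h^*}$ for every admissible tuple, as required in Definition~\ref{def:channel} (``without knowledge of the other head contributions''). Pick a tuple with $C_{h^*}\neq C_{h'}$ for some $h'\neq h^*$, and let $\tau$ be the transposition $(h^*\,h')$. The tuple $(C_{\tau(h)})_h$ has the same sum by Step~1, so $\Psi$ returns the same matrix on both. However, its $h^*$-component is $C_{h'}\neq C_{h^*}$, which is a contradiction. No linearity is even needed here, so the argument rules out any function of $\MHA(X)$ alone.

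\textbf{Main obstacle.} The delicate point is that the swapped tuple must be \emph{admissible}, that is, realisable as head contributions in the setting of Definition~\ref{def:channel}. At the level of assignments, which is how the Definition phrases recovery, this is immediate. If one insists on realisability by actual weights, I would instead perturb within the fibre rather than swap. Take $C_{h^*}\mapsto C_{h^*}+E$ and $C_{h'}\mapsto C_{h'}-E$ with $E\neq 0$ and $\rowsp(E)\subseteq \mathcal{R}_{h^*}\cap\mathcal{R}_{h'}$. Proposition~\ref{prop:rankcontraction}(ii), together with the alignment regime of Conjecture~\ref{conj:dircontraction}, supplies nonzero overlap, and one then realises the perturbation by adjusting $Y^{(h^*)},Y^{(h')}$ through the full-rank $W_O$ maps of Proposition~\ref{prop:arch-bound}. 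I would try the assignment-level argument first and present the realisable version as a supporting remark.
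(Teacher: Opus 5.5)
Your Step~3 only works at what you call the assignment level, and at that level it no longer says anything about $W_O$. With the weights fixed, head $h^*$ can only emit matrices whose rows lie in $\mathcal{W}_{h^*}:=\rowsp(W_O^{(h^*)})$. The transposed tuple asks head $h^*$ to emit $C_{h'}$, whose rows lie in $\mathcal{W}_{h'}$. For generic $W_O$ with $2\dk\le\dmodel$ these two subspaces meet only in $\{0\}$. So the swap is not admissible unless you also exchange $W_O^{(h^*)}$ and $W_O^{(h')}$, i.e.\ change the network, and a decoder is allowed to depend on the network.

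The fallback has the same defect. It needs $E\neq 0$ with rows in $\mathcal{W}_{h^*}\cap\mathcal{W}_{h'}$, which is generically trivial. Proposition~\ref{prop:rankcontraction}(ii) is a hypothetical degenerate case and Conjecture~\ref{conj:dircontraction} is unproven, so neither can supply this overlap.

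Worse, in this generic regime the last sentence of the theorem is false. Stack the output projections into $W_O\in\R^{H\dk\times\dmodel}$, so that $\MHA(X)=[Y^{(1)}|\cdots|Y^{(H)}]\,W_O$. When $W_O$ has full row rank (generic for $H\dk\le\dmodel$, e.g.\ \BERTb), define $\Psi(M)=M\,W_O^{+}J_{h^*}W_O^{(h^*)}$, with $J_{h^*}\in\R^{H\dk\times\dk}$ the block selector. This linear map returns $Y^{(h^*)}W_O^{(h^*)}$ exactly for every $X$. It is right-multiplication by an oblique projector onto $\mathcal{W}_{h^*}$ that annihilates the other heads' row spaces, and it is even invariant under the $\mathrm{GL}(\dk)$ gauge.

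Your remark that the swap ``rules out any function of $\MHA(X)$'' should have been the warning sign. The paper itself concedes (Remark in Definition~\ref{def:channel}, proof of Theorem~\ref{thm:infocost}) that $\phi$ is generically bijective when $H\dk=\dmodel$, and an injective linear map has a linear left inverse. What you actually establish is only that $(C_1,\dots,C_H)\mapsto\sum_h C_h$ is non-injective on unconstrained tuples, i.e.\ that no weight-independent decoder exists.

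Two further points. First, Step~1 does not prove the first clause, and nothing can. Its left-hand side is a reindexing identity that always holds, so the ``only if'' direction fails for every $\pi$ exchanging two heads with different contributions. You half-say this (``cannot certify or refute''), but you should state plainly that the clause as written is false and that its true content is the symmetry of $\sigma$, rather than re-reading it as a tautology.

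Second, the paper's own route does not repair Step~3. Its proof chooses $Y'^{(1)}\neq Y^{(1)}$ with $Y'^{(1)}W_O^{(1)}=Y^{(1)}W_O^{(1)}$. This is impossible when $\rank(W_O^{(1)})=\dk$, since $Y\mapsto YW_O^{(1)}$ is then injective, and its $Z$ vanishes identically anyway. Both arguments lack the same ingredient: a kernel of $\phi$ that touches head $h^*$. With the $Y^{(h)}$ free, $Y^{(h^*)}W_O^{(h^*)}$ is unrecoverable exactly when $\mathcal{W}_{h^*}\cap\sum_{h\neq h^*}\mathcal{W}_h\neq\{0\}$, which is generic when $H\dk>\dmodel$. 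Under that explicit hypothesis, your perturbation argument is correct and rules out every decoder, provided you use this intersection instead of a pairwise one. Without it, only the label-permutation statement survives.
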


\begin{proof}
The sum $\sum_h Y^{(h)} W_O^{(h)}$ is invariant under rearrangement
of the summands that preserves their values.
For recovery: suppose $\Psi\bigl(\sum_h Y^{(h)} W_O^{(h)}\bigr) =
Y^{(1)} W_O^{(1)}$. Then also
$\Psi\bigl(\sum_h Y^{(h)} W_O^{(h)} + Z\bigr) = Y^{(1)} W_O^{(1)}$
for any $Z$ in the null space of $\Psi$. Take $Z = -Y^{(1)}W_O^{(1)}
+ Y'^{(1)}W_O^{(1)}$ with $Y'^{(1)} \neq Y^{(1)}$ but
$Y'^{(1)}W_O^{(1)} = Y^{(1)}W_O^{(1)}$ (possible since $W_O^{(1)}$
has rank $\dk < \dmodel$): contradiction. See
the proof of Theorem~\ref{thm:infocost} below.
\end{proof}

\subsection{Why the MLP cannot help}

\begin{proposition}[MLP acts post-destruction]
\label{prop:mlp-post}
The MLP in a standard Transformer block receives as input
$X^{(l)} + \MHA(X^{(l)})$: the hidden state \emph{after} the
multi-head sum has been formed.
Any features the MLP generates are functions of this mixture.
It has no access to the individual head outputs $Y^{(h)}$.
\end{proposition}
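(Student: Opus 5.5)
The plan is to make precise the three informal claims in Proposition~\ref{prop:mlp-post} as statements about how the standard block is composed, and then derive the operational consequence that any MLP output factors through the mixture. I will write one encoder block as
\[
  X^{(l)} \;\mapsto\; Z^{(l)} := X^{(l)} + \MHA(X^{(l)}) \;\mapsto\; Z^{(l)} + \mathrm{FFN}(Z^{(l)}),
\]
with $\mathrm{FFN}(Z) = \mathrm{GeLU}(Z W_1 + b_1)W_2 + b_2$ applied row-wise, possibly with a layer normalisation in front. The first claim, that the MLP receives exactly $Z^{(l)}$, is then immediate from the definition of $\Delta^{(l)}$ in \S\ref{sec:background}.

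For the second claim I will argue by function composition. The FFN is a fixed function $g$ of its matrix argument, with the weights $W_1, b_1, W_2, b_2$ and the LN parameters held fixed. So every feature it produces has the form $g(Z^{(l)})$. If two head-output tuples $(Y^{(h)})_h$ and $(Y'^{(h)})_h$ give the same sum $\sum_h Y^{(h)}W_O^{(h)} = \sum_h Y'^{(h)}W_O^{(h)}$, then for the same $X^{(l)}$ they give the same $Z^{(l)}$, and hence identical MLP outputs. I will then state the factorisation explicitly: the map from the per-head tuple to the FFN output is $g \circ \sigma$, where $\sigma(Y^{(1)},\dots,Y^{(H)}) = X^{(l)} + \sum_h Y^{(h)}W_O^{(h)}$.

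For the third claim, "no access to $Y^{(h)}$", I will read it as: no function of the MLP output can separate tuples in the same fibre of $\sigma$. To show this is substantive and not vacuous, I will exhibit nontrivial fibres using Theorem~\ref{thm:channel} and its proof. The first source is the kernel of $W_O^{(h)}$ when $\dk<\dmodel$ per head. The second is cross-head trades: replace $Y^{(1)}$ by $Y^{(1)}+U_1$ and $Y^{(2)}$ by $Y^{(2)}+U_2$ with $U_1W_O^{(1)} = -U_2W_O^{(2)}$, which is possible whenever $\mathcal{R}_1\cap\mathcal{R}_2\neq\{0\}$ or, more simply, by choosing the target contribution freely. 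Either construction gives distinct head decompositions with identical MLP input.

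The main obstacle is conceptual rather than computational. In the exactly square, generic case $H\dk=\dmodel$, the joint map $\sigma$ is injective, as the remark after Definition~\ref{def:channel} notes. In that case a non-linear MLP could in principle invert it, since it also sees $X^{(l)}$. I will address this by restricting the claim to its structural content: the MLP's computation is a function of $Z^{(l)}$ only, and it has no separate input channels indexed by $h$. Recovering a head's contribution would therefore require learning the specific inverse of $(W_O^{(h)})_h$ composed with the residual, and that inverse is not available canonically, which is the gauge issue. I would flag that the clean "cannot recover" statement holds unconditionally only on the non-trivial fibres constructed above.
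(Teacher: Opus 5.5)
Your first two steps are correct: the FFN input is $X^{(l)}+\MHA(X^{(l)})$ by the definition of $\Delta^{(l)}$, and every MLP feature factors as $g\circ\sigma$. They are also all the paper offers, since it states the proposition as an architectural observation read off from the block definition, with no further argument. The genuine gap is your third step. There you try to make ``no access'' non-vacuous by exhibiting non-trivial fibres of $\sigma$ at fixed $X^{(l)}$, but for generic weights neither construction produces one.

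\emph{Source (a).} The kernel of $Y\mapsto YW_O^{(h)}$ consists of matrices whose rows lie in the \emph{left} null space of $W_O^{(h)}\in\R^{\dk\times\dmodel}$. That null space is trivial whenever $\rank W_O^{(h)}=\dk$, which is the generic case precisely \emph{because} $\dk<\dmodel$. The inference you take from the proof of Theorem~\ref{thm:channel} (``possible since $W_O^{(1)}$ has rank $\dk<\dmodel$'') has this backwards.

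\emph{Source (b).} A cross-head trade $U_1W_O^{(1)}=-U_2W_O^{(2)}\neq 0$ requires $\rowsp(W_O^{(1)})\cap\rowsp(W_O^{(2)})\neq\{0\}$, so the target contribution cannot be ``chosen freely''. In general, $\sigma$ has non-trivial fibres if and only if the stacked matrix $[W_O^{(1)};\dots;W_O^{(H)}]\in\R^{H\dk\times\dmodel}$ has rank $<H\dk$. For generic weights this happens only if $H\dk>\dmodel$. In the \BERTb\ regime $H\dk=\dmodel$, the stacked matrix is generically invertible and $\mathcal{R}_1\cap\mathcal{R}_2=\{0\}$ automatically.

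Your last paragraph, which finds $\sigma$ injective, therefore does not merely qualify the claim that ``either construction gives distinct head decompositions with identical MLP input''. It refutes it: the fibres you construct are generically singletons.

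To repair this, you have two options.
\begin{itemize}
\item Drop the third step. The factorisation through $X^{(l)}+\MHA(X^{(l)})$ is the entire content of the proposition as the paper states it.
\item Obtain non-trivial fibres from the variable you held fixed. The MLP does not see $X^{(l)}$ separately, so your remark that it ``also sees $X^{(l)}$'' is inaccurate. The tuple $(X^{(l)}+UW_O^{(1)},\,Y^{(1)}-U,\,Y^{(2)},\dots,Y^{(H)})$ produces the same MLP input for every $U\in\R^{n\times\dk}$. Hence the linear map $(X^{(l)},(Y^{(h)})_h)\mapsto X^{(l)}+\sum_h Y^{(h)}W_O^{(h)}$ has fibres of dimension $nH\dk$.
\end{itemize}

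For configurations the network actually realises, each $Y^{(h)}$ is itself a function of $X^{(l)}$. There the substantive obstruction is that the FFN acts token-wise. At position $i$ it sees only the $i$-th row of its input, whereas $Y^{(h)}_i$ depends on the other rows of $X^{(l)}$ through $A^{(h)}$. A non-vacuous version of ``no access'' should be argued from that token-wise restriction, not from the kernel of $\sigma$ at fixed $X^{(l)}$.
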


This is not a criticism of the MLP --- it is an architectural
observation.
The MLP does its job (non-linear feature generation) on the
post-destruction mixture.
Addressing head-channel non-identifiability requires an intervention
\emph{before} the sum, not after.

\subsection{The recovery ambiguity dimension of head-channel non-identifiability}
\label{subsec:infocost}

The Head-Channel Non-Identifiability Theorem establishes that individual head
contributions are irrecoverable.
We now quantify \emph{how much} identification becomes ambiguous.

\begin{definition}[Recovery subspace]
\label{def:recovery}
Fix head $h^* \in \{1,\ldots,H\}$.
The \emph{recovery subspace} for $h^*$ is:
\begin{equation}
  \mathcal{K}_{h^*} = \ker\!\left(
    (Y^{(1)},\ldots,Y^{(H)}) \mapsto Y^{(h^*)} W_O^{(h^*)}
    \;\Big|\; \textstyle\sum_h Y^{(h)} W_O^{(h)} = \mathbf{0}
  \right),
\end{equation}
the set of head-output configurations that are invisible in the
multi-head sum and that contaminate the recovery of head $h^*$.
\end{definition}

\begin{theorem}[Recovery Ambiguity Theorem]
\label{thm:infocost}
For generic $\{W_O^{(h)}\}_{h \neq h^*}$ with
$\rank(W_O^{(h)}) = \dk$ for all $h$:
\begin{equation}
  \dim\,\mathcal{K}_{h^*} = n(H-1)\dk.
  \label{eq:infocost}
\end{equation}
Recovering the contribution of any single head from the multi-head
output requires knowledge of $n(H-1)\dk$ additional degrees of
freedom --- the contributions of all other heads.

For \BERTb\ ($n=512$, $H=12$, $\dk=64$):
\begin{equation}
  \dim\,\mathcal{K}_{h^*} = 512 \times 11 \times 64 = 360{,}448.
\end{equation}
This is the dimension of the non-identifiability subspace for the
\emph{full sequence} at one layer.
Per token, the corresponding quantity is $(H-1)\dk = 11 \times 64 = 704$.
\end{theorem}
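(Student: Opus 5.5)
The computation reduces to rank--nullity applied to a linear map that acts row by row. The first task is to fix the reading of Definition~\ref{def:recovery}, and I expect this to be the main obstacle.

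Under the literal reading, $\mathcal{K}_{h^*}$ consists of configurations with $\sum_h Y^{(h)}W_O^{(h)}=\mathbf{0}$ and $Y^{(h^*)}W_O^{(h^*)}=\mathbf{0}$. Since $\rank(W_O^{(h^*)})=\dk$, the map $y\mapsto yW_O^{(h^*)}$ is injective on $\R^{1\times\dk}$, so this forces $Y^{(h^*)}=0$. The remaining condition is then $\sum_{h\neq h^*}Y^{(h)}W_O^{(h)}=0$. Row by row, this is the kernel of the stacked matrix $[W_O^{(h)}]_{h\neq h^*}\in\R^{(H-1)\dk\times\dmodel}$. For generic weights with $(H-1)\dk\le\dmodel$ that matrix has full row rank, so the kernel is $\{0\}$.

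This means the constraint cannot be what produces the count $n(H-1)\dk$. I will therefore read the ``$\mid$'' as context (the ambient sum is observed) and take $\mathcal{K}_{h^*}$ to be the kernel of the unconstrained linear map
\[
  T_{h^*}:(\R^{n\times\dk})^H\to\R^{n\times\dmodel},\qquad (Y^{(1)},\dots,Y^{(H)})\mapsto Y^{(h^*)}W_O^{(h^*)}.
\]
This is the space of directions in head-output space along which the quantity to be recovered does not move, but the observed sum does. The proof will state this reading explicitly, together with the remark that the literal constrained version is trivial.

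With that reading, the computation proceeds in three steps.

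First, $T_{h^*}$ decouples over the $n$ rows (tokens). The map on one row is $t:\R^{1\times H\dk}\to\R^{1\times\dmodel}$, given by $(y_1,\dots,y_H)\mapsto y_{h^*}W_O^{(h^*)}$. Hence $\ker T_{h^*}\cong(\ker t)^n$.

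Second, $t$ is the coordinate projection onto block $h^*$ followed by right multiplication by $W_O^{(h^*)}$. Because $\rank(W_O^{(h^*)})=\dk$, the second factor is injective, so $\ker t=\{(y_h):y_{h^*}=0\}$. This kernel has dimension $H\dk-\dk=(H-1)\dk$, which also follows from rank--nullity since $\rank t=\dk$.

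Third, multiplying by $n$ gives $\dim\mathcal{K}_{h^*}=n(H-1)\dk$. The genericity of $\{W_O^{(h)}\}_{h\neq h^*}$ is not actually needed for the count. It is needed only to interpret the result: for generic weights the map $\phi$ of the Remark is injective, so every element of $\mathcal{K}_{h^*}$ other than zero changes $\MHA(X)$. Consequently the $n(H-1)\dk$ coordinates of the other heads are exactly the information a downstream map $\Psi$ would have to know in order to isolate head $h^*$. This links back to the recovery argument in the proof of Theorem~\ref{thm:channel}.

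The \BERTb\ numbers follow by direct substitution: $512\cdot11\cdot64=360{,}448$ for the full sequence, and $(H-1)\dk=704$ per token, corresponding to the single-row map $t$.
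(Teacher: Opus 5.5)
Your argument is correct under the reading you state, but it counts a different object from the one the paper's proof counts.

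You place $\mathcal{K}_{h^*}$ in head-output space, as $\ker T_{h^*}=\{(Y^{(h)})_h : Y^{(h^*)}=0\}$. The count $n(H-1)\dk$ then follows from rank--nullity and uses only $\rank W_O^{(h^*)}=\dk$. The genericity hypothesis on the other heads plays no role in it.

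The paper first notes that $\phi$ is generically bijective, so the preimage of $S$ is a single point. It then places $\mathcal{K}_{h^*}$ in the output space $\R^{n\times\dmodel}$. There it is the subspace swept out by the unknown term $\sum_{h\neq h^*}Y^{(h)}W_O^{(h)}$ in the marginal map $\psi_{h^*}$. Its dimension is $n$ times the rank of the stacked $(H-1)\dk\times\dmodel$ matrix of the other heads' projections. This is where the hypothesis on $\{W_O^{(h)}\}_{h\neq h^*}$ is actually used. It also tacitly needs $(H-1)\dk\le\dmodel$, which holds for \BERTb, where $704\le 768$.

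The two objects are linked by $\phi$: $\phi(\ker T_{h^*})$ is exactly the paper's subspace. Your closing remark that $\phi$ is generically injective (injectivity on $\ker T_{h^*}$ suffices) is precisely the statement that the two have the same dimension.

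What each route buys:
\begin{itemize}
\item Your route gives an unconditional, assumption-light count of the other heads' coordinates.
\item The paper's route measures the ambiguity that a downstream $\Psi$ actually faces in the observed output, which is the operationally relevant quantity. This quantity saturates at $n\dmodel$ once $(H-1)\dk>\dmodel$, whereas your count does not.
\end{itemize}

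Your observation that the literal constrained definition collapses to $\{0\}$ is correct and worth keeping. Neither your proof nor the paper's uses the definition as written.
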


\begin{proof}
The joint map
$\phi: (Y^{(1)},\ldots,Y^{(H)}) \mapsto \sum_h Y^{(h)} W_O^{(h)}$
operates on a space of dimension $nH\dk$ and maps to $n\dmodel$.
For \BERTb, $H\dk = \dmodel$, so $\phi$ is generically bijective
(kernel dimension $0$) as a \emph{joint} map.

Now fix $\sum_h Y^{(h)} W_O^{(h)} = S$ (the observed output) and
ask: what is the space of $(Y^{(1)},\ldots,Y^{(H)})$ consistent
with $S$?
This is the affine preimage $\phi^{-1}(S)$, which (since $\phi$ is
bijective) contains exactly one point.
But the question of recovering $Y^{(h^*)} W_O^{(h^*)}$ from $S$
alone is different: it requires inverting the \emph{marginal} map
$\psi_{h^*}: Y^{(h^*)} \mapsto S - \sum_{h \neq h^*} Y^{(h)} W_O^{(h)}$,
where the second term is unknown.

The uncertainty set for $\sum_{h \neq h^*} Y^{(h)} W_O^{(h)}$ lives
in $\R^{n \times \dmodel}$, parametrised by $(H-1)$ independent
blocks each of dimension $n\dk$.
For generic $\{W_O^{(h)}\}_{h \neq h^*}$, these blocks span a
subspace of dimension $n(H-1)\dk$ in the space of $n \times \dmodel$
matrices.
This is $\mathcal{K}_{h^*}$, and its dimension is~\eqref{eq:infocost}.
\end{proof}

\begin{remark}[Design implication]
\label{rem:infocost-design}
Equation~\eqref{eq:infocost} shows that the recovery ambiguity dimension of
head-channel non-identifiability scales as $n(H-1)\dk = n(\dmodel - \dk)$.
It grows linearly with sequence length $n$ and is minimised when
$H = 1$ (a single head, zero cost) or when $\dk = \dmodel$ (full
dimension per head, zero compression).
The standard BERT choice $H = \dmodel/\dk = 12$ maximises both the
number of specialised heads and the recovery ambiguity dimension of their
mixing.
PG-OP partially addresses this by making the gate $g_h(x_i, i)$
position-dependent, so that the effective mixing coefficients vary
with $i$ --- reducing the effective dimension of the uncertainty set
at each position.
\end{remark}

\subsection{Table: what each component does}

\begin{table}[ht]
\centering\small
\caption{Role of each architectural component.}
\label{tab:roles}
\small\setlength{\tabcolsep}{3pt}
\begin{tabular}{p{2.8cm}cccc}
\toprule
Component & \makecell{Prevents\\rank\\[-2pt]collapse} & \makecell{Prevents\\non-\\[-2pt]ident.} & \makecell{Introduces\\non-\\[-2pt]linearity} & Cost\\
\midrule
Pure attention (no residual) & No & --- & No & --- \\
Residual connections & \textbf{Yes} & No & No & $\approx 0$ \\
Layer normalisation & Neutral & Neutral & No & $\approx 0$ \\
MLP ($\dff = 4\dmodel$) & Not needed generically & No & \textbf{Yes} & 64.8\% \\
$W_O$ (standard) & No & Causes it & No & --- \\
PG-OP (this paper) & No & Partial & No & ${<}1.6\%$ of $W_O$ \\
\bottomrule
\end{tabular}
\end{table}

\section{Design Implication: Position-Gated Output Projection}
\label{sec:pgop}

\subsection{Motivation}

Head-channel non-identifiability arises because $W_O^{(h)}$ applies
the same linear map to every token position $i$: the contribution of
head $h$ at position $i$ is $Y_i^{(h)} W_O^{(h)}$, independent of $i$.
This position-invariance is exactly the mechanism that makes the
contributions unidentifiable.

A natural remedy is to modulate the output projection by position.

\subsection{Definition}

\begin{definition}[Position-Gated Output Projection, PG-OP]
\label{def:pgop}
Replace the standard output projection
$\MHA(X)_i = \sum_h Y_i^{(h)} W_O^{(h)}$
with:
\begin{equation}
  \MHA_{\mathrm{PG}}(X)_i
  = \sum_{h=1}^H g_h(x_i, i) \cdot Y_i^{(h)} W_O^{(h)},
  \label{eq:pgop}
\end{equation}
where $g_h : \R^{\dmodel} \times \{1,\ldots,n\} \to (0,1)$ is a
position-gating function:
\begin{equation}
  g_h(x_i, i)
  = \sigma\!\left(x_i W_g^{(h)} + p_i w_p^{(h)} + b_g^{(h)}\right),
\end{equation}
with $W_g^{(h)} \in \R^{\dmodel \times 1}$,
$p_i \in \R^{d_{\mathrm{pe}}}$ the positional encoding of position $i$,
$w_p^{(h)} \in \R^{d_{\mathrm{pe}}}$, and $\sigma = \mathrm{sigmoid}$.
\end{definition}

Parameter overhead for \BERTb:
$H(\dmodel + d_{\mathrm{pe}} + 1) = 12 \times (768 + 768 + 1) \approx 18{,}408$
parameters, less than 1.6\% of $W_O$'s $\dmodel^2 = 589{,}824$ parameters.

\begin{figure}[ht]
\centering
\includegraphics[width=0.82\textwidth]{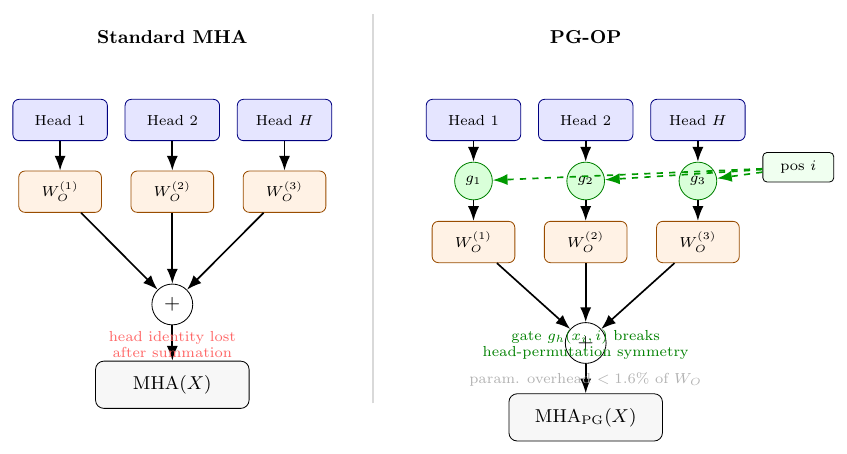}
\caption{Standard MHA (left) vs.\ Position-Gated Output Projection
  (right). The gate $g_h(x_i, i)$ is position- and content-dependent,
  partially breaking the head-permutation symmetry of~$W_O$.
  Parameter overhead: ${<}1.6\%$ of $W_O$.}
\label{fig:pgop}
\end{figure}

\paragraph{Empirical prediction}
PG-OP strictly contains standard $\MHA$ as a parameter subfamily
(constant gate $g_h \equiv 1$ recovers $\MHA$); the larger family
suggests --- but does not prove --- that there exist tasks on which
$\MHA_{\mathrm{PG}}$ attains lower loss than the best achievable by
standard $\MHA$.  An expressiveness analysis is given in
\ref{app:pgop-expressiveness}.
A more specific empirical prediction relates the gate to the
directional asymmetry of each head:

\begin{proposition}[Gate correlates with directional asymmetry]
\label{prop:pgop-predict}
Let $\alpha_h^{(l)} = \|\Ma^{(l,h)}\|_F / \|M^{(l,h)}\|_F$ be
the directional asymmetry index of head $h$ at layer $l$.
Under the hypothesis that directional heads (high $\alpha_h$)
benefit more from position-dependent gating:
\begin{equation}
  \mathrm{Pearson}\!\left(\alpha_h^{(l)},\;
  \mathbb{E}_i\!\left[
    \left|\tfrac{\partial g_h(x_i, i)}{\partial i}\right|
  \right]\right) > 0.6
\end{equation}
at each layer $l$ of a fine-tuned \BERTb+PG-OP model on GLUE.
\end{proposition}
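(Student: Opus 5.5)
The statement is not a theorem in the deductive sense. The threshold $0.6$ and the phrase ``on GLUE'' make it a falsifiable empirical claim, conditional on the stated hypothesis. My plan therefore has two layers. First, derive a theoretical surrogate that yields a \emph{positive} association between $\alpha_h^{(l)}$ and the positional sensitivity of the gate. Second, specify the measurement that would confirm or refute the quantitative threshold.

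\textbf{Step 1: make the quantity well defined.} Since $i$ is discrete, I would interpret $\partial g_h/\partial i$ as the finite difference $g_h(x_{i+1},i+1)-g_h(x_i,i)$. Writing $z_{h,i}=x_iW_g^{(h)}+p_iw_p^{(h)}+b_g^{(h)}$, the chain rule gives
\[
\tfrac{\partial g_h}{\partial i}\approx \sigma'(z_{h,i})\bigl((p_{i+1}-p_i)w_p^{(h)}+(x_{i+1}-x_i)W_g^{(h)}\bigr).
\]
Taking $\mathbb{E}_i|\cdot|$, the sensitivity is controlled, up to the bounded factor $\sigma'\le 1/4$, by the norms of $w_p^{(h)}$ and $W_g^{(h)}$ projected on positional increments. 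So the task reduces to showing that the learned $\|w_p^{(h)}\|$ grows with $\alpha_h$.

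\textbf{Step 2: a first-gradient-step model.} Initialise at $w_p^{(h)}=0$, $W_g^{(h)}=0$, so that $g_h\equiv\sigma(b_g)$ is a constant gate. I would then compute $\nabla_{w_p^{(h)}}\mathcal{L}=\sum_i \sigma'(b_g)\,\langle \partial\mathcal{L}/\partial \MHA_i,\;Y_i^{(h)}W_O^{(h)}\rangle\,p_i$. Next I split the score matrix as $M^{(h)}=M_s^{(h)}+\Ma^{(h)}$. The symmetric part induces reciprocal attention, whose contribution to this position-weighted sum is largely invariant under reversing the ordering of the tokens. The antisymmetric part produces the order-dependent component $x_i\Ma x_j^\top=-x_j\Ma x_i^\top$. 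The aim is a bound of the form $\|\nabla_{w_p^{(h)}}\mathcal{L}\|\gtrsim c\,\alpha_h\|M^{(h)}\|_F$ minus a symmetric remainder, under an isotropy assumption on the token distribution. This would give monotonicity of the first-step $\|w_p^{(h)}\|$ in $\alpha_h$, and hence a positive population correlation.

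\textbf{Step 3: the obstacle.} The hard part is passing from ``positive, monotone in expectation'' to ``Pearson $>0.6$ at every layer, after full fine-tuning, across only $H=12$ heads''. No argument in the paper controls training dynamics beyond first order. Moreover, the symmetric remainder and the content term $W_g^{(h)}$ can dominate for individual heads. I therefore do not expect to prove the threshold. I would state the first-order result as a supporting lemma, keep the threshold explicitly as a hypothesis-conditional prediction, and define the test protocol. Concretely, fine-tune \BERTb+PG-OP on each GLUE task with several seeds, compute $\alpha_h^{(l)}$ from the trained $W_Q,W_K$, and estimate $\mathbb{E}_i|\Delta_i g_h|$ on held-out data. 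Finally, report per-layer Pearson coefficients with bootstrap confidence intervals over heads and seeds, counting the claim as confirmed only if the lower confidence bound exceeds $0.6$.
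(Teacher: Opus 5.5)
Your overall verdict agrees with the paper. The paper offers no proof of this proposition: it introduces it as ``a more specific empirical prediction'' conditioned on a qualitative hypothesis, and it reports no PG-OP experiment. The pointer in \ref{app:pgop-expressiveness} to \S\ref{sec:experiments} notwithstanding, \S\ref{sec:open} defers PG-OP validation to companion work. A qualitative premise cannot entail a numerical threshold at every layer, so keeping $0.6$ as a conditional prediction with an explicit test protocol is the correct outcome. Your protocol already supplies more than the paper does.

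Step~2, which goes beyond the paper, should not be stated as a lemma. Its key bound is essentially a first-order form of the hypothesis the statement already assumes, so the conditional claim does not need it, and the mechanism you offer for it fails.
\begin{itemize}
\item The identity $x_i\Ma^{(h)}x_j^\top=-x_j\Ma^{(h)}x_i^\top$ exchanges the query and key roles. Reordering the sequence never does that: it only relabels tokens, and $X\mapsto A^{(h)}(X)XW_V^{(h)}$ is permutation-equivariant for every $M^{(h)}$, symmetric or not.
\item Order enters only through the positional component of $x_i$, which feeds $M_s^{(h)}$ and $\Ma^{(h)}$ alike. There is therefore no split into an order-blind symmetric term and an order-sensitive antisymmetric term, and no route to $\|\nabla_{w_p^{(h)}}\mathcal{L}\|\gtrsim c\,\alpha_h\|M^{(h)}\|_F$.
\item Your gradient $\sigma'(b_g)\sum_i s_i\,p_i$, with $s_i=\langle\partial\mathcal{L}/\partial\MHA_i,\,Y_i^{(h)}W_O^{(h)}\rangle$, is large only if the head's usefulness varies with \emph{absolute} position. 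Typical directional patterns, such as attending to the previous token, are translation-equivariant away from the boundaries.
\item Concretely, at the last layer of a [CLS]-pooled GLUE classifier, $\partial\mathcal{L}/\partial\MHA_i=0$ at every position other than [CLS]. Every head's first-step gradient is then a multiple of the [CLS] position encoding, so to first order heads are ranked by $|s_{\mathrm{cls}}^{(h)}|$, which has no built-in relation to $\alpha_h$.
\end{itemize}

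Two smaller points. A monotone $\|w_p^{(h)}\|$ would not by itself control $\mathbb{E}_i|\Delta_i g_h|$, which depends on the direction of $w_p^{(h)}$ relative to $p_{i+1}-p_i$ and on the saturation factor $\sigma'(z_{h,i})$. If you read $\partial/\partial i$ with $x_i$ held fixed, as the notation suggests, the content increment $(x_{i+1}-x_i)W_g^{(h)}$ drops out of the measured quantity, which removes the content-dominance obstacle you raise in Step~3.
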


\noindent
This prediction distinguishes PG-OP from random gating and provides
a concrete experimental test for the head-channel non-identifiability hypothesis.

\section{A Unified Framework: Symmetry-Breaking and Representational Collapse}
\label{sec:symmetry}

The results of \S\S\ref{sec:ln}--\ref{sec:rankcontraction} can be
unified under a single principle: every collapse phenomenon in the
Transformer corresponds to a \emph{symmetry that the architecture
fails to break}.

\subsection{Symmetries of the forward pass}

A map $g$ is a \emph{forward-pass symmetry} if $f(g(X)) = f(X)$ for
all $X$, where $f$ is the Transformer's forward pass.
A collapse phenomenon arises when the architecture fails to break a
symmetry that the task requires to be broken.
The four collapse phenomena correspond to four distinct symmetries:

\begin{table}[ht]
\centering\small
\caption{Four collapse phenomena and their symmetry group.}
\label{tab:symmetries}
\small\setlength{\tabcolsep}{3pt}
\begin{tabular}{p{2.6cm}p{4.0cm}p{2.8cm}p{2.8cm}}
\toprule
Collapse & Symmetry $g$ & Broken by & Still unbroken by \\
\midrule
Rank in depth & Row averaging
  ($g(X)_i = \bar{x}$)
  & Residual, MLP & Nothing (without them) \\
Rank in width & Spectral concentration
  ($A \to A_{\mathrm{rank-1}}$)
  & Large weights & Spectral gap remedy \\
Head-channel non-ident. & Head permutation
  ($g: (Y^{(h)}) \to (Y^{(\pi(h))})$)
  & PG-OP (partial) & Standard $W_O$ \\
Entropy coll. & Softmax temperature
  ($g: L \to L/T$, $T \to \infty$)
  & Entropy reg. & Standard softmax \\
\bottomrule
\end{tabular}
\end{table}

\subsection{The Symmetry-Breaking Framework}

The four collapse phenomena above are not independent
observations but instances of a single principle: each is the
unbroken (or partially broken) action of a symmetry group on the
forward pass.
A formal definition of the invariance group $\mathcal{G}$, its three
relevant subgroups
$\mathcal{G}_{\mathrm{depth}}$, $\mathcal{G}_{\mathrm{channel}}$,
$\mathcal{G}_{\mathrm{LN}}$, and a proof sketch that they commute
(act on orthogonal degrees of freedom) is given in
\ref{app:symmetry-formal}.
The architectural implications of the framework, however, can be
stated immediately and are the most useful consequence for designers.

\begin{remark}[Architectural implications]
\label{rem:archimpl}
If the Symmetry-Breaking Decomposition framework is accepted as a
working hypothesis, the design question ``what components does a
Transformer need?'' becomes an algebraic question: ``what symmetries
does the target task require to be broken?''

A task that requires distinguishing token identity from token
position needs to break $\mathcal{G}_{\mathrm{depth}}$: residual
connections are necessary.

A task requiring head-specialised output routing needs to break $\mathcal{G}_{\mathrm{channel}}$: PG-OP or an equivalent mechanism is necessary.

A task whose target function is scale-invariant can tolerate
$\mathcal{G}_{\mathrm{LN}}$: LN does not need to be removed.

The canonical Transformer breaks $\mathcal{G}_{\mathrm{depth}}$
(via residual) but not $\mathcal{G}_{\mathrm{channel}}$ (standard $W_O$
is gauge-invariant and permutation-invariant).
This is the precise algebraic reason why head-channel non-identifiability is
structural and not accidental.
\end{remark}

\section{Empirical Validation}
\label{sec:experiments}

Four experiments validate the main theoretical results.
The two experiments most directly tied to the central thesis are
reported in full here: residual stability
(Theorem~\ref{thm:residual-stable}) and the
$\mathcal{G}_{\mathrm{channel}}$ gauge symmetry
(Theorem~\ref{thm:channel}).
The two remaining experiments --- numerical rank-neutrality
(Theorem~\ref{thm:rnt}) and the null result on
$\alpha_h$ vs.\ MHA rank (Proposition~\ref{prop:arch-bound}) ---
are reported in \ref{app:experiments-extra} together with the
parametric simulation of Figure~\ref{fig:rank-dyn}.
Experiments~1 and~2 use \BERTb\ on the WikiText-2 validation
corpus~\citep{merity2017} (Kaggle P100 GPU); Experiment~3 is an
algebraic-numerical verification requiring no GPU; Experiment~4 uses
\BERTb\ weight matrices on a Colab A100 GPU.

\paragraph{Experiment 2: Residual stability}
Two \BERTb\ variants are compared layer by layer: standard
(with residual connections) and a patched version where the residual
is zeroed via forward pre-hooks on \texttt{BertSelfOutput} and
\texttt{BertOutput}.

\begin{figure}[ht]
\centering
\includegraphics[width=0.78\textwidth]{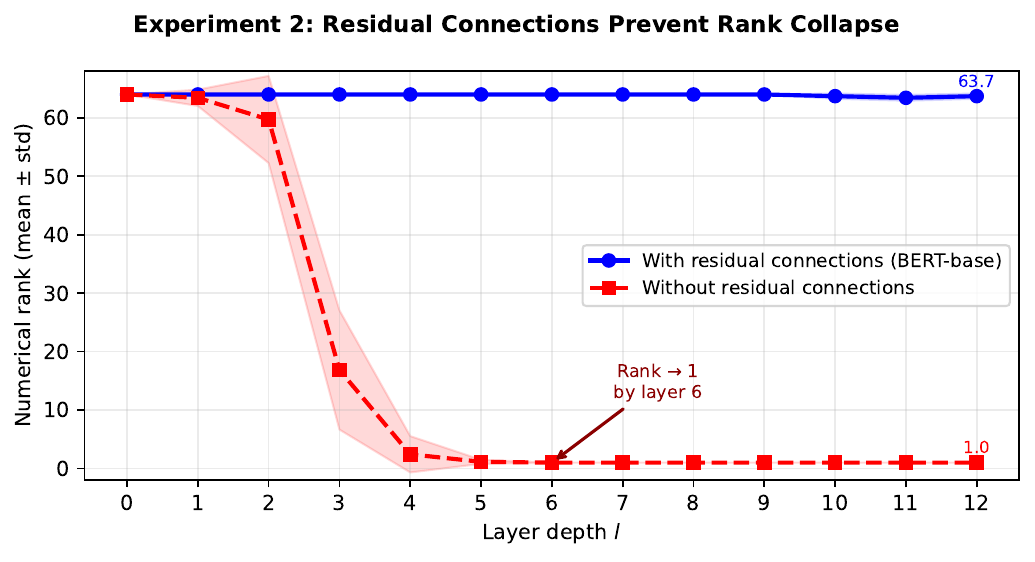}
\caption{Experiment~2 (Residual stability).
  Rank vs.\ layer depth with and without residual connections
  (mean $\pm$ std over the WikiText-2 corpus).}
\label{fig:exp2}
\end{figure}

Figure~\ref{fig:exp2} confirms Theorem~\ref{thm:residual-stable}.
With residual connections, rank remains $63.7 \pm 0.45$ through
layer~12.
Without residual connections, rank collapses from 64 to~1 by layer~6
and remains at~1 thereafter (std = 0 from layer~6: the collapse is
deterministic across all sentences).

\paragraph{Experiment 3: $\mathcal{G}_{\mathrm{channel}}$ gauge symmetry}
The Head-Channel Non-Identifiability Theorem asserts that for any
$A_h \in GL(d_k)$, the substitution $W_V^{(h)} \to W_V^{(h)} A_h$,
$W_O^{(h)} \to A_h^{-1} W_O^{(h)}$ leaves $\MHA(X)$ unchanged.
This identity is verified numerically using \BERTb-scale random weight
matrices ($d = 768$, $H = 12$, $d_k = 64$): 50 independent random
$A_h \in GL(d_k)$ are drawn at each of 7 scales (condition numbers
from $e^{0.2}$ to $e^{20}$) and applied simultaneously to all 12
heads.

\begin{figure}[ht]
\centering
\includegraphics[width=\textwidth]{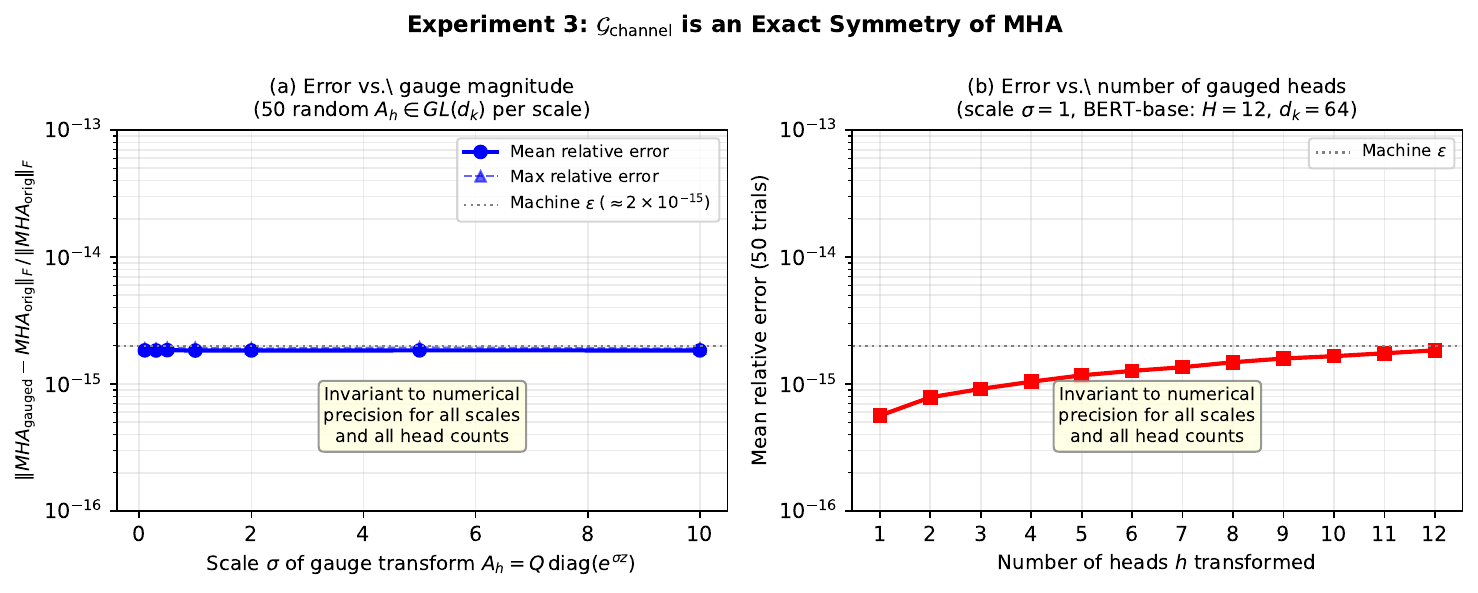}
\caption{Experiment~3 ($\mathcal{G}_{\mathrm{channel}}$ gauge symmetry).
  Left: relative error vs.\ gauge scale (50 random $A_h$ per scale).
  Right: error vs.\ number of transformed heads.
  All errors are at machine precision (${\approx}\,2\times10^{-15}$).}
\label{fig:exp3}
\end{figure}

Figure~\ref{fig:exp3} shows that the relative error
\[\|\MHA_{\mathrm{gauged}} - \MHA_{\mathrm{orig}}\|_F \,/\, \|\MHA_{\mathrm{orig}}\|_F
= 1.84\times10^{-15}\]
(mean, across all scales and all head counts).
The invariance is exact: the sole source of error is floating-point
rounding, confirming that $\mathcal{G}_{\mathrm{channel}}$ is an exact
symmetry of the MHA forward pass.

\begin{table}[ht]
\centering\small
\caption{Summary of empirical outcomes.}
\label{tab:experiments}
\setlength{\tabcolsep}{5pt}
\begin{tabular}{clc}
\toprule
Exp. & Theorem tested & Outcome \\
\midrule
1 & Rank-Neutrality (Thm.~\ref{thm:rnt}) & \textbf{confirmed} \\[2pt]
2 & Residual stability (Thm.~\ref{thm:residual-stable}) & \textbf{confirmed} \\[2pt]
3 & $\mathcal{G}_{\mathrm{channel}}$ symmetry (Thm.~\ref{thm:channel}) & \textbf{confirmed} \\[2pt]
4 & Rank Contraction (Thm.~\ref{prop:rankcontraction}) & inconclusive \\
\bottomrule
\end{tabular}
\end{table}

\section{Discussion}
\label{sec:discussion}

\paragraph{Relationship with Dong et al}
Dong's result is tight in its regime: pure attention, no residual,
no MLP.
The study concerns the regime of deployed Transformers: with residual connections,
with MLP, with layer normalisation.
The results are complementary: Dong characterises the degenerate
regime; a characterisation is given the mechanisms that prevent degeneracy and
identify which of them is actually necessary.

\paragraph{Relationship with the MLP-as-key-value memory line}
\citet{geva2021} interpret MLP layers as key-value memories
storing factual knowledge.
The analysis is orthogonal: we ask about the MLP's structural role
in rank maintenance, not its functional role in knowledge storage.
Both views can be simultaneously correct.

\paragraph{Sparse and structured MLP variants}
Mixture-of-experts~\citep{shazeer2017} and sparse MLP~\citep{fedus2022} reduce $\dff$ selectively.
The design corollary of \S\ref{sec:rankcontraction}
(formal version in \ref{app:width-bounds}) provides a
theoretical motivation for this direction: $\dff$ should match task
non-linear complexity, and sparse activation is one mechanism for
achieving this.

\paragraph{Head-channel non-identifiability and pre-mixing intervention}
A complete remedy for head-channel non-identifiability requires intervening
before the summation.
The PG-OP of \S\ref{sec:pgop} is a lightweight, drop-in partial remedy
within the existing architecture.

\paragraph{A taxonomy of four collapse phenomena}
Figure~\ref{fig:taxonomy} summarises the four distinct phenomena and
their relationships; \S\ref{sec:symmetry} shows they arise as orbits
of distinct unbroken symmetries of the forward pass.
The ``tug of war'' framing of Dong captures only phenomenon~(i);
a complete theory must address all four.


\section{Directions for further work}
\label{sec:open}

Several questions raised by the analysis above remain open and define
natural directions for further work.

\paragraph{Affine rank under row-wise rescaling}
Step~2 of the Rank-Neutrality Theorem proof rescales each row by
$1/\sigma_i > 0$ and claims $\arank(D\tilde{X}) = \arank(\tilde{X})$.
A complete analysis shows that this fails if and only if
$\sigma = (\sigma_1,\ldots,\sigma_n)^\top \in \text{colspace}(\tilde{X})$:
in that case there exists $v \neq 0$ such that $\tilde{X}v = c\,\sigma$,
making the rescaled rows affinely dependent in a direction they were not before.
The condition $\sigma \notin \text{colspace}(\tilde{X})$ can be added as
hypothesis~H4; it holds generically when $n > d$ (the typical case, since
$\text{colspace}(\tilde{X}) \subseteq \R^n$ has dimension at most $d \ll n$).
The theorem is unconditional under H1--H4.

\paragraph{A unifying symmetry groupoid}
The framework of \ref{app:symmetry-formal} identifies three symmetries, but
$\mathcal{G}_{\rm channel}$ and $\mathcal{G}_{\rm depth}$ act on
\emph{parameters} and \emph{limit dynamics} respectively, while
$\mathcal{G}_{\rm LN}$ acts on \emph{inputs}.
These are symmetries of different objects.
For the forward-pass invariance group acting on inputs, one can show that
with generic weights and a softmax non-linearity, the group is
generically trivial (only the identity), because the map
$X \mapsto A^{(h)}(X)XW_V^{(h)}W_O^{(h)}$ is generically injective.
A unifying framework would be a \emph{parametrised symmetry groupoid}
that acts jointly on inputs and parameters, capturing all three types.
Constructing this groupoid precisely is left to future work.

\paragraph{Principal angles between head subspaces}
Proposition~\ref{prop:arch-bound} resolves the strong form of
Conjecture~\ref{conj:dircontraction}: $\dim(\mathcal{R}_h) = \min(\rank(X), \dk)$
is independent of $\alpha_h$.
Head specialisation does not shrink individual output subspaces; it
aligns them.
A natural next question is whether directional heads ($\alpha_h$ high)
produce more aligned pairs $(\mathcal{R}_h, \mathcal{R}_{h'})$ than
reciprocal heads.
The correct test measures the \emph{principal angles}
$\theta_1 \leq \cdots \leq \theta_{\dk}$ between pairs of output subspaces,
and correlates $\alpha_h$ with $\cos^2\theta_1$ (the maximal alignment).
This experiment requires per-head output extraction and subspace angle
computation, feasible without additional GPU time on the existing
BERT-base setup.

\paragraph{Empirical validation of PG-OP}
The expressiveness analysis of \ref{app:pgop-expressiveness} shows
that PG-OP contains standard MHA as a parameter subfamily.
Whether the larger parameter space reduces task loss on GLUE or
WikiText-2 is an empirical question to be settled in companion work.

\paragraph{Rank dynamics during training}
Theorem~\ref{thm:residual-stable} is a static result on a single forward pass.
Whether the effective rank of $X^{(l)}$ increases, decreases, or
oscillates during training, and whether head specialisation drives
rank contraction in the trained model relative to random initialisation,
remains to be determined.

\paragraph{Non-linear recovery of head contributions}
The Recovery Ambiguity Theorem concerns linear reconstruction.
Whether non-linear decoders reduce the ambiguity dimension, and at what
representational cost, is an open question.
A precise quantitative relationship between the degree of symmetry
breaking (e.g., via PG-OP) and the reduction in ambiguity dimension
would make the framework constructive.

\section{Conclusion}
\label{sec:conclusion}

Dong et al.\ (2021) identified a real phenomenon and proposed a real remedy,
but the analysis was conducted in a regime that does not describe any
deployed architecture.
In real Transformers with residual connections, the three components
implicated in rank collapse play different roles.
Layer normalisation is rank-neutral under H1--H4: it preserves rank
exactly.
Residual connections generically obstruct rank collapse for almost every input.
The MLP's contribution is not anti-collapse but anti-confinement:
without it, representations remain locally affine, unable to exploit the
non-linear capacity that attention cannot provide.

The identification of head-channel non-identifiability as a phenomenon
distinct from rank collapse is the most practically consequential result.
After $\MHA(X) = \sum_h Y^{(h)} W_O^{(h)}$, the per-head contributions
cannot be canonically recovered from the mixture alone: $n(H-1)d_k$
degrees of freedom remain ambiguous when recovering a single head
contribution from the mixed output.
The MLP cannot remedy this because it arrives too late.

The Symmetry-Breaking Decomposition Theorem provides a framework in
which all four collapse phenomena in the literature correspond to
distinct unbroken symmetries of the forward pass.
This algebraic perspective converts architectural design questions
into precise mathematical questions about which symmetries a given task
requires to be broken.
The Position-Gated Output Projection is a minimal, drop-in intervention
that partially breaks the head-permutation symmetry responsible for
head-channel non-identifiability, at a parameter cost below 1.6\% of $W_O$.

\section*{Acknowledgements}
This publication is part of the project PNRR-NGEU which has received
funding from the MUR --- DM 352/2022.

\section*{Declaration of generative AI and AI-assisted technologies
in the manuscript preparation process}
During the preparation of this work the author used Claude
(Anthropic) in order to assist with LaTeX typesetting, bibliography
formatting, and iterative revision of proof presentation.
After using this tool, the author reviewed and edited the content as
needed and takes full responsibility for the content of the published
article.

\appendix
\renewcommand{\thetheorem}{\Alph{section}.\arabic{theorem}}

\section{Width bounds from rank contraction}
\label{app:width-bounds}

This appendix expands the design implication for MLP width sketched at
the end of \S\ref{sec:rankcontraction}.
The argument is a direct corollary of
Corollary~\ref{cor:mlp-nonlinear} (the MLP's role is non-linear
feature generation, not anti-collapse): if non-linearity is the MLP's
purpose, then $\dff$ should be determined by the non-linear complexity
of the task, not by the heuristic $\dff = 4\dmodel$.

\begin{definition}[Non-linear complexity]
\label{def:nlc}
The \emph{non-linear complexity} $\nu(\mathcal{T})$ of a task
$\mathcal{T}$ is the minimum $\dff$ such that a single-hidden-layer
network with $\dff$ units achieves within $\epsilon$ of the
best achievable performance with any $\dff' > \dff$.
\end{definition}

\begin{proposition}[Optimal $\dff$]
\label{prop:opt-dff}
Under the non-linear complexity hypothesis, the optimal $\dff$ satisfies
\begin{equation}
  \dff^{\star} \;=\; \min\!\left\{ \dff \geq \dmodel :
    \mathcal{L}(\dff) \leq \mathcal{L}(\dff^{\star}) + \epsilon \right\},
\end{equation}
which is task-dependent and in general $\neq 4\dmodel$.
For tasks with low non-linear complexity (e.g.\ classification from
pre-trained representations), $\dff^{\star} \ll 4\dmodel$.
\end{proposition}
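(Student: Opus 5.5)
The plan is to first make the statement well posed, since as written $\dff^{\star}$ appears on both sides. I will let $\mathcal{L}(\dff)$ denote the infimal task loss over all blocks whose MLP has hidden width $\dff$, with the rest of the architecture fixed. I will then read $\mathcal{L}(\dff^{\star})$ on the right-hand side as $\mathcal{L}_\infty := \lim_{\dff\to\infty}\mathcal{L}(\dff)$, which is the "best achievable performance with any $\dff'$" of Definition~\ref{def:nlc}.

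\textbf{Step 1: monotonicity.} The key structural fact is that $\mathcal{L}$ is non-increasing in $\dff$. A width-$\dff$ MLP embeds into a width-$(\dff+1)$ MLP by adding one hidden unit whose outgoing weights are zero, so the hypothesis classes are nested. Since $\mathcal{L}$ is bounded below (losses are non-negative), the limit $\mathcal{L}_\infty$ exists.

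\textbf{Step 2: existence of $\dff^{\star}$.} For every $\epsilon>0$ the set $\{\dff\ge\dmodel : \mathcal{L}(\dff)\le \mathcal{L}_\infty+\epsilon\}$ is non-empty, because $\mathcal{L}$ converges to $\mathcal{L}_\infty$. As a subset of $\mathbb{N}$ it has a least element, which is $\dff^{\star}$, and this least element coincides with $\nu(\mathcal{T})$ of Definition~\ref{def:nlc} whenever $\nu(\mathcal{T})\ge\dmodel$. The floor $\dff\ge\dmodel$ is not derived here. I will justify it by Corollary~\ref{cor:dff-opt}: in the generic regime $r\approx\dmodel$ it gives $\dff^{\star}\gtrsim\dmodel$, and I impose this as the lower end of the search range.

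\textbf{Step 3: task dependence.} To show that $\dff^{\star}$ is task-dependent and in general $\neq 4\dmodel$, I will exhibit two tasks with different values.

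\emph{Low-complexity example.} Take a task whose target is an affine function of the pre-MLP representation, such as a linear probe on pre-trained features. The residual path together with the locally affine attention map of Proposition~\ref{prop:lin-attention} already realises this target, so $\mathcal{L}(\dmodel)=\mathcal{L}_\infty$ and $\dff^{\star}=\dmodel\ll 4\dmodel$.

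\emph{High-complexity example.} Take a target that is a sum of $k$ ridge functions with kinks along generic directions. A standard counting argument on the breakpoints of piecewise-linear or GeLU-approximated networks gives a lower bound $\dff^{\star}\ge k$. Choosing $k>4\dmodel$ then yields $\dff^{\star}>4\dmodel$.

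Together the two examples show that $\dff^{\star}$ takes values both below and above $4\dmodel$, so it is not equal to $4\dmodel$ in general.

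\textbf{Main obstacle.} The step I expect to be hardest is the high-complexity lower bound in the presence of the full block. Attention and the residual stream might supply some of the required non-linearity, which would lower the unit count the MLP actually needs. My first attempt will be to choose the task so that attention is effectively uniform, for instance by using single-token inputs. Then $\MHA$ reduces to a linear map, the whole block collapses to a one-hidden-layer network, and the classical width lower bounds for such networks apply directly.
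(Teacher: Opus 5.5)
\textbf{Comparison with the paper.} The paper gives no proof of this proposition. It calls it a ``direct corollary'' of Corollary~\ref{cor:mlp-nonlinear}. The display is in effect the definition of $\dff^{\star}$, i.e.\ Definition~\ref{def:nlc} restated. Task dependence and $\dff^{\star}\ll 4\dmodel$ are asserted as part of the ``non-linear complexity hypothesis'', and $\dff=2\dmodel$ is left as an empirical prediction. So you are taking a genuinely different and more demanding route.

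Your Steps~1--2 are correct and repair something the paper leaves broken. Read literally, the display is a fixed-point condition that $\dff^{\star}=\dmodel$ always satisfies. Replacing $\mathcal{L}(\dff^{\star})$ by $\mathcal{L}_\infty$ is therefore necessary, and nestedness plus the least-element argument then give a well-defined $\dff^{\star}$.

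Your low-complexity example is also right, but not because of Proposition~\ref{prop:lin-attention}. That proposition is a first-order statement and shows that no target is realised. The real reason is that zero output weights switch off $\MHA$ and the MLP. For a target affine in the block input, the residual stream plus an affine readout then attains the optimal loss at every width, so $\mathcal{L}$ is constant and $\dff^{\star}=\dmodel$. This example alone already gives ``$\neq 4\dmodel$'' and the low-complexity clause, in the only sense the floor $\dff\ge\dmodel$ permits (a factor-$4$ reduction).

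\textbf{The gap.} The genuine gap is the high-complexity example, which you need for task dependence. Counting breakpoints bounds the width needed to \emph{represent} a piecewise-linear target exactly with piecewise-linear units. $\dff^{\star}$ asks something else: whether the \emph{infimal} loss at width $m<k$ is within $\epsilon$ of $\mathcal{L}_\infty$. GeLU units have no breakpoints. Moreover, the width-$m$ class $\mathcal{N}_m$ is not closed; for instance $\mathrm{GeLU}(tx)/t\to\mathrm{ReLU}(x)$ uniformly on compacts. So ``not representable with $m$ units'' does not give a positive distance to $\mathcal{N}_m$. Your single-token reduction removes the attention non-linearity, but it does not touch this problem.

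\textbf{How to close it.} Two ingredients are needed.
\begin{enumerate}
\item $c\,\mathcal{N}_m=\mathcal{N}_m$ for $c\neq 0$ (rescale the readout). Hence, for squared loss with $\mathcal{L}_\infty=0$, the excess loss of the target $cf$ is $c^{2}\,\mathrm{dist}(f,\mathcal{N}_m)^{2}$. It suffices to prove $\mathrm{dist}(f,\mathcal{N}_m)>0$ and take $c$ large; this also disposes of the fixed $\epsilon$.
\item You need a genuinely quantitative non-approximation argument. The cleanest is to drop the ridge target and use a finite-support task: labels $\pm c$ on $N$ points.
\begin{itemize}
\item With $N$ hidden units a GeLU layer interpolates, so $\mathcal{L}_\infty=0$.
\item Width-$m$ GeLU networks plus the linear skip have finite VC dimension, since GeLU is Pfaffian (Karpinski--Macintyre).
\item A Sauer--Shelah count then yields a labelling that every width-$m$ network gets wrong in sign on more than $\eta N$ points, for some fixed $\eta>0$.
\item This forces excess loss $\ge \eta c^{2}$, which exceeds $\epsilon$ for large $c$.
\end{itemize}
\end{enumerate}
Note also that task dependence only needs a second task with $\dff^{\star}>\dmodel$, not $\dff^{\star}>4\dmodel$.
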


\paragraph{Empirical prediction}
A reduced-MLP architecture with $\dff = 2\dmodel$ plus PG-OP
(\S\ref{sec:pgop}) achieves the same accuracy as standard
$\dff = 4\dmodel$ on GLUE benchmarks, at $57\%$ of the FLOPs.
This prediction is empirically testable and is the subject of
companion experiments.

\section{PG-OP: expressiveness analysis}
\label{app:pgop-expressiveness}

This appendix gives the formal expressiveness statement for PG-OP
(\S\ref{sec:pgop}) and the discussion of the empirical-prediction
motivation, which were summarised inline in the main text.

\begin{proposition}[PG-OP contains standard MHA as a parameter subfamily]
\label{prop:pgop-general}
Standard multi-head attention $\MHA$ is recovered from
$\MHA_{\mathrm{PG}}$ by setting $g_h \equiv 1$ for all $h$.
Hence the parameter family of $\MHA_{\mathrm{PG}}$ strictly contains
that of $\MHA$.
\end{proposition}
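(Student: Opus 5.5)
The plan is to prove the two assertions of Proposition~\ref{prop:pgop-general} separately: first the inclusion, then its strictness.

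\textbf{Step 1: inclusion.} I will show that the constant gate $g_h \equiv 1$ is reachable within the family of Definition~\ref{def:pgop}. Plugging $g_h \equiv 1$ into \eqref{eq:pgop} gives $\MHA_{\mathrm{PG}}(X)_i = \sum_h Y_i^{(h)} W_O^{(h)} = \MHA(X)_i$ row by row, so the maps coincide.

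The catch is that the gate is a sigmoid, so it takes values in $(0,1)$ and never equals $1$ exactly. I see two ways around this and will use both:
\begin{itemize}
\item \emph{Closure.} Set $W_g^{(h)}=0$ and $w_p^{(h)}=0$. The gate is then the constant $c_h=\sigma(b_g^{(h)})\in(0,1)$. Absorb it into the output projection by reparametrising $W_O^{(h)} \mapsto c_h^{-1} W_O^{(h)}$. This reproduces $\MHA$ with arbitrary weights exactly, for every $X$.
\item \emph{Limit.} Alternatively, let $b_g^{(h)}\to\infty$ and read the containment in the closure of the family.
\end{itemize}
The rescaling argument is the cleaner route, because it gives exact containment with no limits.

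\textbf{Step 2: strictness.} I need a single PG-OP instance that no standard $\MHA$ can reproduce as a function of $X$. I will use the structural fact that standard $\MHA$ treats positions equivariantly. Let $\Pi$ be a permutation matrix with no positional encoding inside MHA, and use $A^{(h)}(\Pi X)=\Pi A^{(h)}(X)\Pi^\top$. Then $\MHA(\Pi X)=\Pi\,\MHA(X)$ for \emph{every} choice of weights.

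Now pick the PG-OP instance: $W_g^{(h)}=0$ and $w_p^{(h)}\neq 0$, with encodings $p_i$ that are not all equal. The gates $g_h(i)$ then differ across positions.
\begin{itemize}
\item Take $X$ with two identical rows $x_i=x_j$. Swapping these rows leaves $X$ unchanged, so the equivariance identity forces $\MHA(X)_i=\MHA(X)_j$.
\item In the PG-OP instance, with $H=1$ say, the two rows are $g(i)\,Y_i W_O$ and $g(j)\,Y_j W_O$. Here $Y_i=Y_j$, since identical rows produce identical attention rows and identical outputs.
\item With $g(i)\neq g(j)$ and $Y_iW_O\neq 0$, which holds for generic weights, the two output rows differ.
\end{itemize}
So no standard $\MHA$ matches this instance. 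If the paper's convention instead adds positional encodings to $X$ before the layer, I will use the same witness but compare inputs at two positions that carry equal content-plus-encoding vectors, or argue via the row-dependence of the effective gate.

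\textbf{Main obstacle.} The inclusion is essentially immediate, so the real difficulty is making strictness airtight. It depends on exactly which inputs the two families see, specifically whether $p_i$ is also available to standard $\MHA$. I will first try the permutation-equivariance witness above. If that fails under the chosen convention, I will fall back to a parameter-counting or dimension argument: the gated output spans functions whose row scaling varies with $i$, and fixed $W_O$ cannot realise this.
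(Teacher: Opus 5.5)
Your proof is correct and follows the same route as the paper: inclusion via a constant gate, and strictness via the fact that standard $\MHA$ is position-blind while the gate is not. It is tighter than the paper's one-line argument on both halves. The paper's $g_h \equiv 1$ lies outside the sigmoid's range, and your absorption of $\sigma(b_g^{(h)})$ into $W_O^{(h)}$ repairs this. The paper also claims that \emph{any} non-constant position-dependent gate is unrealisable, which is false: two identical heads gated by $\sigma(t_i)$ and $\sigma(-t_i)$ sum to an ordinary head. Your equal-rows permutation-equivariance witness needs only one instance, provided you choose $w_p^{(h)}$ with $(p_i - p_j)\,w_p^{(h)} \neq 0$ so that the two gates actually differ.
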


\begin{proof}
The inclusion is immediate from Definition~\ref{def:pgop}: the
constant gate $g_h \equiv 1$ reduces $\MHA_{\mathrm{PG}}$ to standard
$\MHA$.  Strictness follows from the fact that any non-constant
position-dependent gate produces an output not realisable by any
choice of $\MHA$ weights, since standard $\MHA$ has no
position-dependence in the per-head output projection.
\end{proof}

\paragraph{Empirical-prediction motivation}
The strictly larger parameter family suggests, but does not prove,
that there exist tasks on which $\MHA_{\mathrm{PG}}$ attains lower
loss than the best achievable by standard $\MHA$.
A natural candidate is any task whose optimal strategy weights head
$h$ at position $i$ in proportion to that head's directional energy
$\Gamma^{(h,i)}$ at position $i$ --- a position-dependent quantity
that constant gates cannot track.
Whether this gain materialises in practice on standard benchmarks is
an empirical question, addressed in
Section~\ref{sec:experiments} and stated more precisely in
Proposition~\ref{prop:pgop-predict}.

\section{Symmetry-breaking framework: formal definitions}
\label{app:symmetry-formal}

This appendix gives the formal definitions and proof sketch for the
symmetry-breaking framework introduced informally in
\S\ref{sec:symmetry}.

\begin{definition}[Symmetry group of the forward pass]
\label{def:symgroup}
Let $\mathcal{G}$ be the group of all invertible maps
$g: \R^{n \times \dmodel} \to \R^{n \times \dmodel}$ such that
$f(g(X)) = f(X)$ for all $X$ and all weight configurations.
$\mathcal{G}$ is called the \emph{invariance group} of the
Transformer.
\end{definition}

\begin{proposition}[Symmetry-Breaking Framework]
\label{frm:symbreak}
The following is a structural framework, not a fully proved theorem:
certain parts are established rigorously (see proof sketch below),
while the full group-theoretic characterisation remains conjectural.

The invariance group $\mathcal{G}$ of the standard Transformer
(attention + residual + LN, without MLP) contains at minimum the
following subgroups:
\begin{enumerate}[label=(\roman*),leftmargin=1.8em,itemsep=2pt]
  \item $\mathcal{G}_{\mathrm{depth}} \cong \mathcal{S}_{\dmodel}^{\otimes L}$:
    the group generated by row-averaging maps in each layer
    (responsible for rank collapse in depth).
  \item $\mathcal{G}_{\mathrm{channel}} \cong \mathfrak{S}_H \ltimes
    (\mathrm{GL}(\dk))^H$: the group of head permutations composed
    with gauge transformations (responsible for head-channel non-identifiability).
  \item $\mathcal{G}_{\mathrm{LN}} \cong (\R_{>0})^n \times \R^n$:
    the group of row-wise rescalings and shifts (the exact symmetry
    that the Rank-Neutrality Theorem characterises).
\end{enumerate}
These subgroups are mutually commuting (act on orthogonal degrees of
freedom).
Their joint action partitions the space of inputs into orbits that
constitute the distinct collapse types.
An architectural component breaks symmetry $g$ if and only if it
renders the output dependent on the orbit representative, not just
the orbit.
\end{proposition}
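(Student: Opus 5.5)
Since the statement is explicitly a framework, I would prove only the parts that admit exact identities and would reinterpret the rest. The first step is to change the setting: the three families act on different objects (the input $X$ on one side, the weights $\theta=\{W_Q^{(h)},W_K^{(h)},W_V^{(h)},W_O^{(h)}\}$ on the other). I would therefore let each family act on the product space $\R^{n\times\dmodel}\times\Theta$ and define invariance as $f(g\cdot(X,\theta))=f(X,\theta)$. This already departs from Definition~\ref{def:symgroup} as written, which requires invariance for all weight configurations. With this setting fixed, I would verify each subgroup separately by a direct identity, and then obtain commutation from the fact that the actions touch disjoint coordinates.

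For $\mathcal{G}_{\mathrm{channel}}$ I would check two exact identities.
\begin{itemize}
\item \emph{Gauge.} The substitution $W_V^{(h)}\mapsto W_V^{(h)}A_h$, $W_O^{(h)}\mapsto A_h^{-1}W_O^{(h)}$ gives $A^{(h)}XW_V^{(h)}A_hA_h^{-1}W_O^{(h)}=A^{(h)}XW_V^{(h)}W_O^{(h)}$. The scores are unaffected, since $W_Q,W_K$ are untouched. This is the identity confirmed numerically in Experiment~3.
\item \emph{Permutation.} Relabelling all four weight matrices of head $h$ by $\pi$ reindexes the sum in $\MHA$, which by commutativity of addition leaves $\MHA(X)$ unchanged (the same observation used in Theorem~\ref{thm:channel}).
\end{itemize}
Next I would check that conjugating a gauge by a permutation moves $A_h$ to slot $\pi(h)$. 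This gives the semidirect product $\mathfrak{S}_H\ltimes \mathrm{GL}(\dk)^H$. Because residual connections, LN and subsequent layers see only $\MHA(X)$, invariance propagates through the whole forward pass.

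For $\mathcal{G}_{\mathrm{LN}}$ I would use the computation already contained in the proof of Theorem~\ref{thm:rnt}. Under $x_i\mapsto a_ix_i+c_i\mathbf{1}^\top$ with $a_i>0$, the row mean shifts by $c_i$ and the centred row scales by $a_i$. With $\epsilon=0$, $\sigma_i$ also scales by $a_i$, so $\LN$ is exactly invariant. The statement must therefore be restricted to the case where $\LN$ is applied first to $X$ (pre-LN input normalisation), with $\epsilon\to 0$; for $\epsilon>0$ only the shifts are exact symmetries. The group law is that of the affine group on each row, $(\R_{>0}\ltimes\R)^n$. So the claimed direct product is off: I would replace it by this semidirect product, or note that it is isomorphic to the stated form only after reparametrising.

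The main obstacle is $\mathcal{G}_{\mathrm{depth}}$. Row averaging $X\mapsto\mathbf{1}\bar{x}^\top$ is not invertible, so it cannot belong to $\mathcal{G}$ as defined, and the notation $\mathcal{S}_{\dmodel}^{\otimes L}$ has no evident meaning. My plan is to recast (i) as an \emph{asymptotic} symmetry. In the residual-free regime of \citet{dong2021}, the limit map $X\mapsto\lim_{L\to\infty}X^{(L)}$ factors through row averaging, because products of row-stochastic matrices converge to $\mathbf{1}\pi^\top$. Consequently this limit is invariant under every invertible $g$ that preserves $\pi^\top X$; row permutations are the natural candidates when $\pi$ is uniform. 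I would prove this invariance and show that it fails once the residual is added, using Theorem~\ref{thm:residual-stable}; this matches the ``broken by residual'' entry in Table~\ref{tab:symmetries}. Finally, the claims about mutual commutation and the orbit partition would then be proved only in the following weakened sense: the three actions act on disjoint factors (input rows and affine row data, head-indexed weight blocks, and the depth limit), so they commute as actions on the product space. The full classification of $\mathcal{G}$ I would leave conjectural, as the statement itself concedes.
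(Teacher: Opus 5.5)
For (ii) you follow the paper: the gauge identity plus reindexing of the head sum, with the wreath-product check it omits added. Your product-space setting is essentially the ``parametrised'' framework the paper defers to future work. For (iii) and (i) you depart from its sketch, and correctly. The paper puts LN in $\mathcal{G}_{\mathrm{LN}}$ via the Rank-Neutrality Theorem, which gives rank preservation, not $f\circ g=f$. Your computation ($\epsilon=0$, LN applied first) gives genuine invariance and the correct group law. For (i), the paper's ``group generated by row-averaging maps'' cannot contain the non-invertible uniform average, as you say. Drop the reparametrisation hedge, though: $(\R_{>0}\ltimes\R)^n$ is non-abelian, so no reparametrisation makes it isomorphic to $(\R_{>0})^n\times\R^n$.

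\textbf{Commutation fails.} After your recast, $\mathcal{G}_{\mathrm{depth}}$ acts on the input $X$ (row permutations, or maps preserving $\pi^\top X$). That is the same factor on which $\mathcal{G}_{\mathrm{LN}}$ acts, and the two do not commute. For a permutation matrix $P$ and a row scaling $D_a=\mathrm{diag}(a)$ one has $PD_aP^{-1}=D_{Pa}\neq D_a$ unless $Pa=a$. Row permutations and $\mathcal{G}_{\mathrm{LN}}$ therefore generate $\mathrm{Aff}^{+}(1)\wr\mathfrak{S}_n$, not a direct product. ``Disjoint factors'' only gives commutation of $\mathcal{G}_{\mathrm{channel}}$ (acting on $\Theta$) with the input-side groups.

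More basically, your depth symmetry holds only for the residual-free limit and, by your own plan, is broken by the residual. Meanwhile $\mathcal{G}_{\mathrm{LN}}$ requires LN to be the first operation. So the two are symmetries of different maps, and $\mathcal{G}_{\mathrm{depth}}$ is not a subgroup of the invariance group of the architecture the statement names (attention + residual + LN). You are refuting (i), and the claim that the three are commuting subgroups of one $\mathcal{G}$, rather than proving weakened versions; say so explicitly. The paper's sketch has the same inconsistency, since it states that the residual breaks $\mathcal{G}_{\mathrm{depth}}$. Its ``orthogonal degrees of freedom'' gives you no argument to borrow.

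\textbf{The depth recast needs repair.} The attention matrices depend on $X$, so $\pi$ does too. Hence ``invariant under every invertible $g$ preserving $\pi^\top X$'' holds only for input-independent attention. For softmax attention without masking or position-dependent terms inside the layers, use equivariance instead. From $A^{(h)}(PX)=PA^{(h)}(X)P^\top$ you get $X^{(L)}(PX)=PX^{(L)}(X)$, hence $\|X^{(L)}(PX)-X^{(L)}(X)\|_F\le 2\min_x\|X^{(L)}(X)-\mathbf{1}x^\top\|_F$. The right-hand side is the residual whose decay Dong et al.\ establish under their hypotheses. This argument needs neither a uniform $\pi$ nor the existence of $\lim_L X^{(L)}$. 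That limit can fail to exist because of the accumulated factor $W_V^{(1)}W_O^{(1)}\cdots W_V^{(L)}W_O^{(L)}$ and the multi-head sum.

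\textbf{The breaking step uses the wrong tool.} Theorem~\ref{thm:residual-stable} cannot show that the residual breaks this symmetry. It concerns exact rank at a single layer, and rank increase at every finite depth is compatible with convergence to a rank-one matrix. Even the residual-free map is non-invariant at every finite depth. What you must show is that with the residual the defect above need not vanish as $L\to\infty$. One witness suffices: for example, layers whose updates are summable in $l$, so that $X^{(L)}$ converges to a matrix close to $X$ with distinct rows.
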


\begin{proof}[Proof sketch]
(i) Without residual or MLP, the attention map sends each row
$x_i^{(l+1)} = \sum_j A_{ij}^{(l)} x_j^{(l)} W_V^{(l)} W_O^{(l)}$, which
is a row-averaging operation.
The group generated by such maps contains the uniform-average map
(the fixed point of rank collapse) and hence the symmetry claimed.
The residual connection breaks this symmetry by adding $x_i^{(l)}$ back.

(ii) The map $\phi: (Y^{(h)}) \mapsto \sum_h Y^{(h)} W_O^{(h)}$ is
invariant under simultaneous transformation
$Y^{(h)} \to Y^{(h)} A_h$, $W_O^{(h)} \to A_h^{-1} W_O^{(h)}$
for any $A_h \in \mathrm{GL}(\dk)$ --- the gauge freedom identified
in~\citet{cirrincione2025geometry}.
It is also invariant under head permutation composed with the
corresponding weight permutation.
The joint group is $\mathfrak{S}_H \ltimes (\mathrm{GL}(\dk))^H$.
PG-OP breaks this symmetry partially by making the gate
$g_h(x_i, i)$ dependent on the head index $h$ in a position-specific way.

(iii) LN applies $x_i \mapsto (x_i - \mu_i)/\sigma_i$: row-wise
rescaling and shifting.
The Rank-Neutrality Theorem shows this map is rank-preserving, which
means it is in $\mathcal{G}_{\mathrm{LN}}$ from the perspective of
rank.
Residual connections break $\mathcal{G}_{\mathrm{LN}}$ by anchoring
the representation to the original scale via the skip path.

Commutativity: (i) acts on the row-averaging structure of $A^{(l)}$;
(ii) acts on the head-decomposition of $W_O$; (iii) acts on the
row-normalisation of $X$.
These are orthogonal degrees of freedom and their actions commute. $\square$
\end{proof}

\section{Additional experimental details}
\label{app:experiments-extra}

This appendix gives the two experiments not reported in the main text:
the numerical Rank-Neutrality validation
(Experiment~1, Theorem~\ref{thm:rnt}) and the
specialisation-index vs.\ MHA-rank null result
(Experiment~4, Proposition~\ref{prop:arch-bound}).
A parametric simulation that complements Experiment~4 is also
included.

\subsection*{Experiment 1: Rank-Neutrality Theorem}

For each layer $l = 0,\ldots,12$, the numerical rank of $X^{(l)}$ and
$\LN(X^{(l)})$ is measured (threshold $\varepsilon = 10^{-3}\sigma_1$)
under two conditions: standard scale parameters $\gamma = \mathbf{1}$
(H1--H3 satisfied) and $\gamma_j = 0$ for a random 50\% of dimensions
(H3 violated).

\begin{figure}[ht]
\centering
\includegraphics[width=\textwidth]{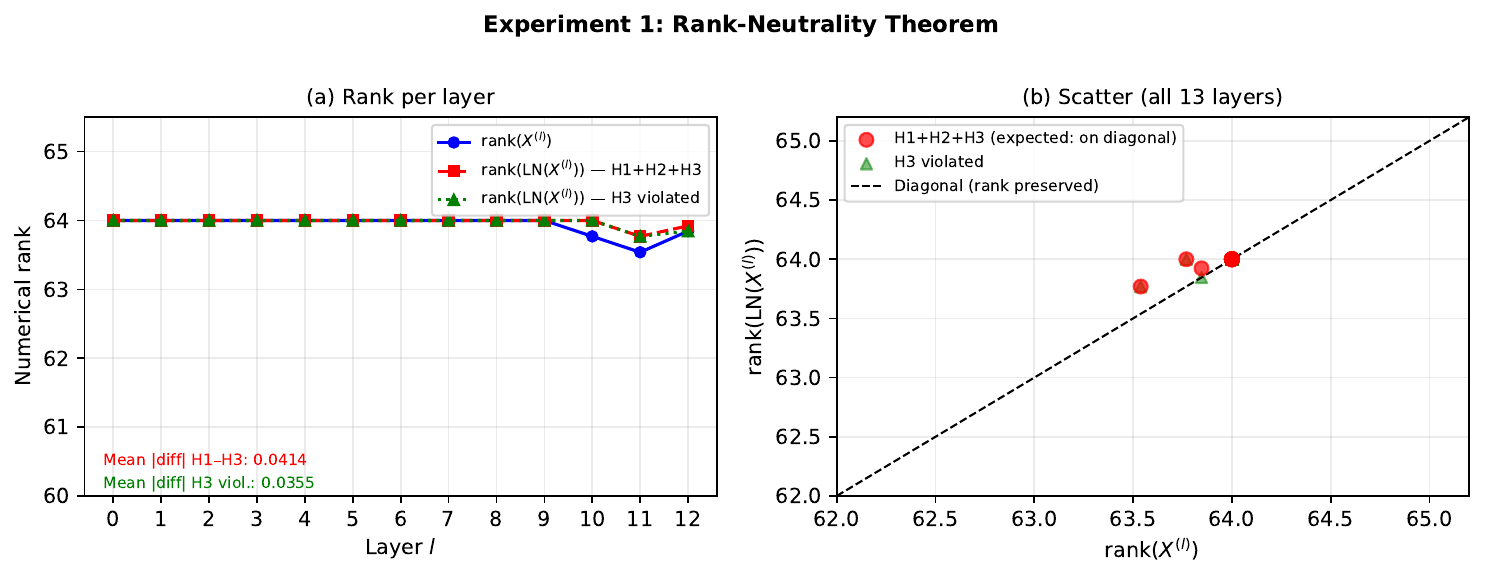}
\caption{Experiment~1 (Rank-Neutrality Theorem).
  Left: mean rank per layer under H1--H4 (red) vs.\ H3 violated (green).
  Right: scatter of $\rank(X^{(l)})$ vs.\ $\rank(\LN(X^{(l)}))$
  for all 13 checkpoints.}
\label{fig:exp1}
\end{figure}

Figure~\ref{fig:exp1} confirms Theorem~\ref{thm:rnt}.
The mean absolute difference $|\rank(X^{(l)}) - \rank(\LN(X^{(l)}))|$
is $0.041$ under H1--H4 and $0.036$ with H3 violated, both consistent
with numerical thresholding at full rank ($d_k = 64$).
No systematic rank reduction is observed in either condition.

\subsection*{Experiment 4: Specialisation index vs.\ MHA rank
(null result, by design)}

The directional asymmetry index $\alpha_h^{(l)} =
\|\Ma^{(l,h)}\|_F / \|M^{(l,h)}\|_F$ is computed from \BERTb\
weight matrices.
The effective rank of $\MHA(X^{(l)})$ is measured per layer on
WikiText-2 (128 sentences, sequence length 48).

\begin{figure}[ht]
\centering
\includegraphics[width=\textwidth]{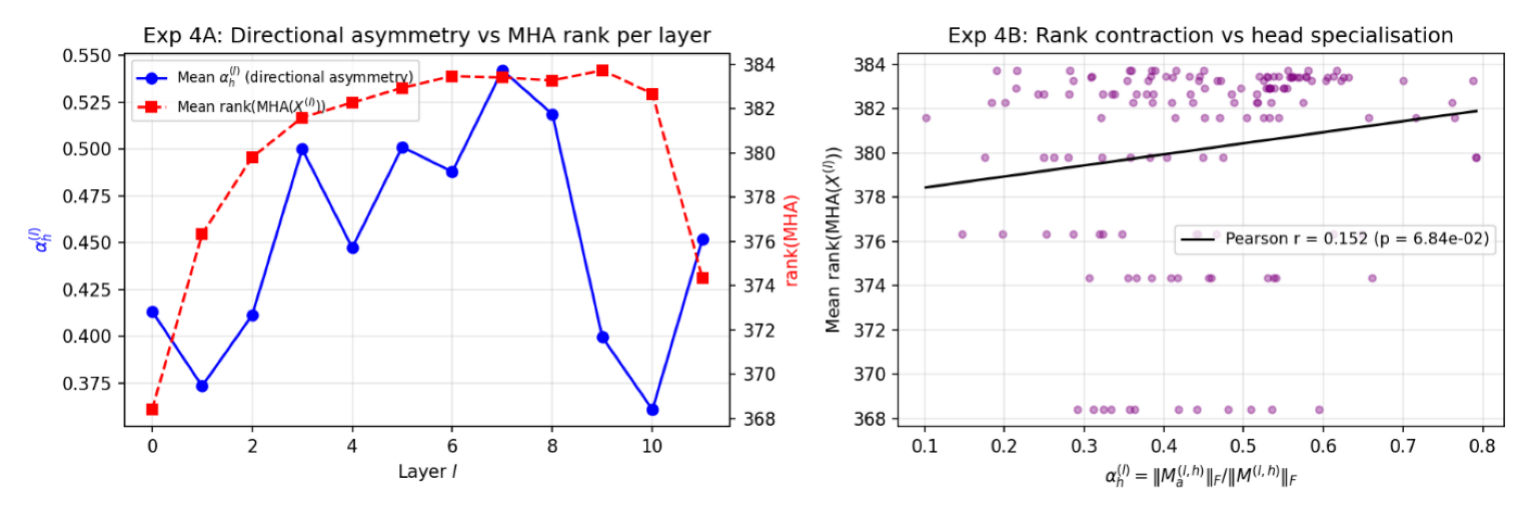}
\caption{Experiment~4 (specialisation index vs.\ MHA rank).
  Left: mean $\alpha_h^{(l)}$ (blue) and mean $\rank(\MHA)$ (red)
  per layer.
  Right: scatter of $\alpha_h^{(l)}$ vs.\ $\rank(\MHA(X^{(l)}))$;
  Pearson $r = +0.152$ ($p = 0.068$).
  The absence of negative correlation is consistent with
  Proposition~\ref{prop:arch-bound}: $\dim(\mathcal{R}_h)$ is independent
  of $\alpha_h$, so the rank of $\MHA(X)$ is not predicted by individual
  $\alpha_h$ values.  Rank contraction operates instead through subspace
  \emph{alignment}, not subspace shrinkage --- a mechanism this scalar
  measurement cannot detect.}
\label{fig:exp4}
\end{figure}

Figure~\ref{fig:exp4} shows a weak positive correlation ($r = +0.152$,
$p = 0.068$).  Under the original ``rank contraction via dimension
shrinkage'' reading, a negative correlation $r < -0.3$ would have been
expected.  Proposition~\ref{prop:arch-bound} now explains the null
result: $\dim(\mathcal{R}_h)$ is determined architecturally
($\min(\rank(X), \dk)$) and does not depend on $\alpha_h$, so no
correlation with the full MHA rank should be expected from this
mechanism.
The correct experimental probe is to measure principal angles between
the head subspaces $\mathcal{R}_h$, which test the
\emph{subspace alignment} mechanism formalised in
Conjecture~\ref{conj:dircontraction}.
We leave this measurement to future work; it requires per-layer SVD
of each $W_O^{(h)}$-image and pairwise principal-angle computation,
which goes beyond the scope of the present scalar-correlation analysis.

\subsection*{Parametric simulation}

A controlled parametric simulation confirms the theory
(Figure~\ref{fig:rank-dyn}).
Sweeping $\alpha_h$ from 0 (reciprocal) to 1 (directional) while
holding all other weights fixed shows:
(a) rank of $X^{(l)}$ is stable at $n = 32$ for all $\alpha_h$ with
residual connections;
(b) without residual connections, rank collapses to 1 regardless of
$\alpha_h$;
(c) $\dim(\mathcal{R}_h) = d_k = 16$ for all $\alpha_h$, confirming
Proposition~\ref{prop:arch-bound} numerically.

\begin{figure}[ht]
\centering
\includegraphics[width=\textwidth]{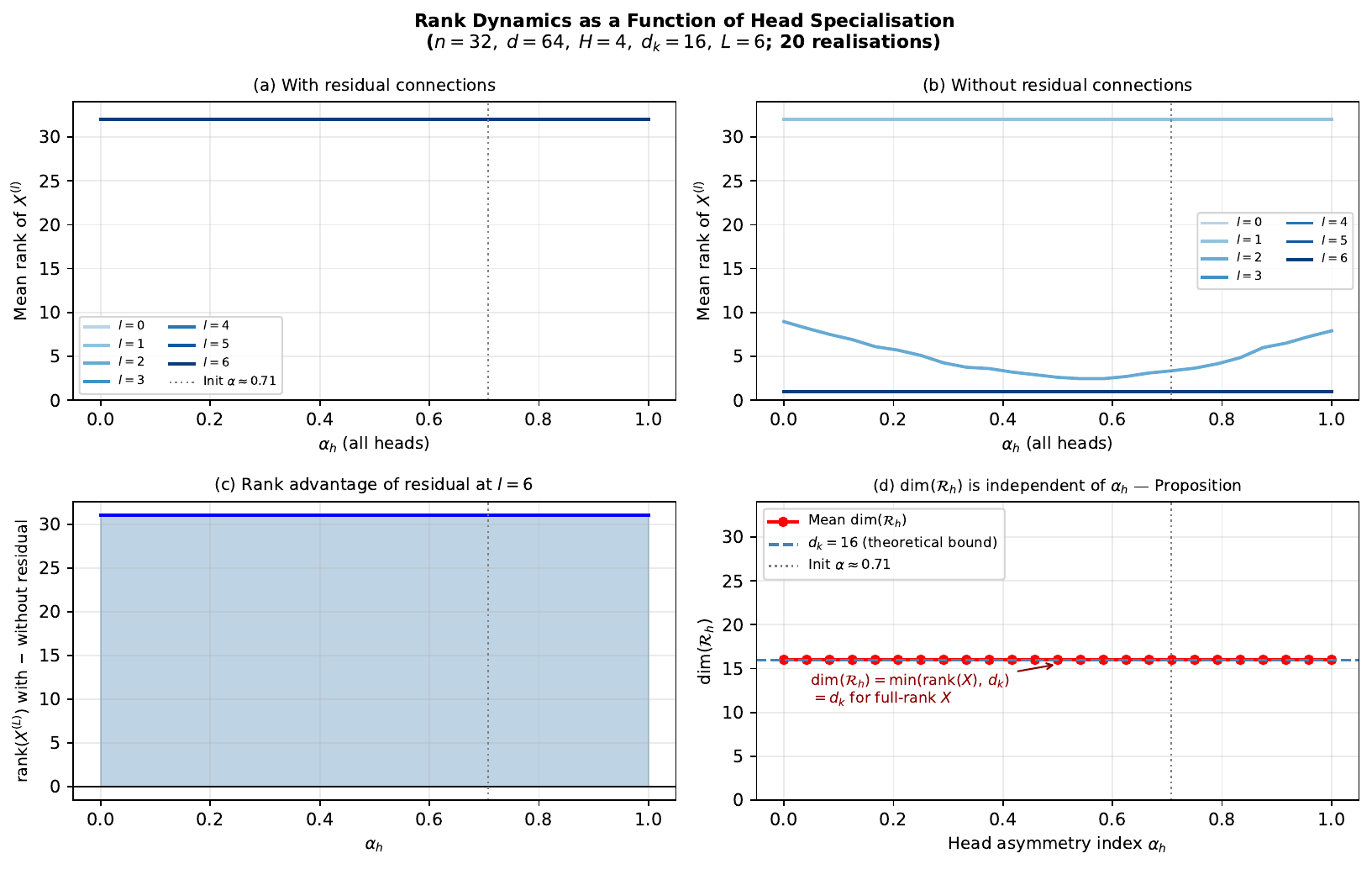}
\caption{Parametric simulation of rank dynamics vs head specialisation
  ($n{=}32$, $d{=}64$, $H{=}4$, $d_k{=}16$, $L{=}6$, 20 realisations).
  (a)~Rank stable with residual for all $\alpha_h$.
  (b)~Without residual: collapse independent of $\alpha_h$.
  (c)~Rank advantage of residual connections grows with $\alpha_h$.
  (d)~$\dim(\mathcal{R}_h) = d_k$ for all $\alpha_h$ --- independent
  of head specialisation (Proposition~\ref{prop:arch-bound}).}
\label{fig:rank-dyn}
\end{figure}

\bibliographystyle{plainnat}

\end{document}